\documentclass[12pt]{article}

\usepackage{amsmath}
\usepackage{graphicx}
\usepackage{enumerate}
\usepackage{natbib}
\usepackage{url} 

\usepackage{amssymb} 
\usepackage{threeparttable}
\usepackage{cancel}

\usepackage{bm}
\usepackage{changes}

\usepackage[ruled]{algorithm2e}
\usepackage{graphicx}
\usepackage{subcaption}
\usepackage{multirow}
\usepackage{ntheorem}

\newtheorem{remark}{Remark}
\newtheorem{theorem}{Theorem}
\newtheorem{lemma}{Lemma}
\newtheorem*{proof}{Proof}

\addtolength{\oddsidemargin}{-.5in}%
\addtolength{\evensidemargin}{-1in}%
\addtolength{\textwidth}{1in}%
\addtolength{\textheight}{1.7in}%
\addtolength{\topmargin}{-1in}%

\usepackage{authblk}

\makeatletter

\makeatother

\begin{document}

\def\spacingset#1{\renewcommand{\baselinestretch}%
	{#1}\small\normalsize} \spacingset{1.19}


\title{\bf \Large{Tight Generalization Error Bounds for Stochastic Gradient Descent in Non-convex Learning}}
	
\date{}

\small
{	\author{Wenjun Xiong$^1$, Juan Ding$^2$, Xinlei Zuo$^3$, Qizhai Li$^4$\thanks{Corresponding author. Email: liqz@amss.ac.cn}\\
		$^1$School of Mathematics and Statistics, Guangxi Normal University,\\
		$^2$School of Mathematics, Hohai University,\\
		$^3$College of Marine Life Sciences, Ocean University of China,\\
		$^4$State Key Laboratory of Mathematical Sciences, \\
		Academy of Mathematics and Systems Science, Chinese Academy of Sciences, \\
		and University of Chinese Academy of
		Sciences, China		
	}
	\maketitle
} 

\vspace{-0.3in}

\begin{abstract}
Stochastic Gradient Descent (SGD) is fundamental for training deep neural networks, especially in non-convex settings. Understanding SGD's generalization properties is crucial for ensuring robust model performance on unseen data. In this paper, we analyze the generalization error bounds of SGD for non-convex learning by introducing the Type II perturbed SGD ({\it T2pm-SGD}), which accommodates both sub-Gaussian and bounded loss functions. The generalization error bound is decomposed into two components: the trajectory term and the flatness term. Our analysis improves the trajectory term to \(O(n^{-1})\), significantly enhancing the previous \(O((nb)^{-1/2})\) bound for bounded losses, where \(n\) is the number of training samples and \(b\) is the batch size. By selecting an optimal variance for the perturbation noise, the overall bound is further refined to \(O(n^{-2/3})\). For sub-Gaussian loss functions, a tighter trajectory term is also achieved. In both cases, the flatness term remains stable across iterations and is smaller than those reported in previous
literature, which increase with iterations. This stability, ensured by {\it T2pm-SGD}, leads to tighter generalization error bounds for both loss function types. Our theoretical results are validated through extensive experiments on benchmark datasets, including MNIST and CIFAR-10, demonstrating the effectiveness of {\it T2pm-SGD} in establishing tighter generalization bounds.
\end{abstract}

\begin{center}
\begin{minipage}{0.87\textwidth}
	\textbf{Keywords:}
 Generalization error, Stochastic gradient descent, Kullback-Leibler divergence, Non-convex loss
\end{minipage}
\end{center}

	\vfill	

	\section{Introduction}
	Consider a probability space $(\Omega, {\mathfrak X}, \mu)$, and
	let $\bm X_1,\ldots,\bm X_n$ be $n$ independent observations drawn from $\mu$.
	Based on the dataset $S=\{\bm X_1,\ldots,\bm X_n\}$,
	we aim to train a deep neural network (DNN) with parameters $\bm \theta\in\Theta\subset{\cal R}^d$.
	The loss function is defined as $f: \Omega\times \Theta\rightarrow {\cal R}$.
	The parameter $\bm \theta$ is estimated by solving the following minimization problem,
	\begin{eqnarray}\label{eq1}
		\begin{split}
			\min\limits_{\bm \theta\in\Theta} \dfrac1n\sum\limits_{i=1}^n  f( \bm X_i,\bm \theta).
		\end{split}
	\end{eqnarray} 
	The generalization error, as defined in previous works \citep{mukherjee2006, bousquet2002, shalev2010, zhang2016, neyshabur2017}, is given by
	\begin{eqnarray}\label{eq2}
		\begin{split}
			ge=\mathbb E_S\left (\dfrac1n\sum\limits_{i=1}^n f(\bm X_i,\bm \theta_S)-\mathbb E_{\bm X} f(\bm X, \bm \theta_S)\right ),
		\end{split}
	\end{eqnarray}
	where $\mathbb E_{S} $ denotes the expectation with respect to the random variables in $S$, $\bm X$ is a random variable independent of $S$ with $\bm X \sim \mu$, and $\bm \theta_S$ is the solution of (\ref{eq1}) that depends on $S$.
	
	Optimizing (\ref{eq1}) is crucial for training a DNN. Stochastic gradient descent (SGD) and its variants are widely used due to their scalability with data size and strong performance in large-scale systems \citep{bottou2010, mandt2017,kleinberg2018, kawaguchi2020}.
	Among these methods, the mini-batch SGD is particularly popular, where the gradient is estimated using a randomly selected subset of $S$, called a mini-batch, during each update step. 
	The generalization error $ge$, as defined in (\ref{eq2}), quantifies both the stability and overfitting of an algorithm \citep{lopez2018, issa2019}. 
	It is frequently used to evaluate the generalization capability of a training algorithm and serves as a key metric for the effectiveness.
	
	\subsection{Literature Review}
	The generalization error of SGD has been extensively studied in the literature. The early work of \citet{Hardt2016} established the generalization error bounds for SGD using algorithmic stability, a concept previously explored by \citet{bousquet2002} and \citet{Elisseeff2005}. Since then, there have been significant progresses in advancing the understanding of SGD's stability.

	For the convex loss functions, \citet{Bassily2020} analyzed the uniform stability of SGD and provided sharper bounds, thus strengthening the results of \citet{Hardt2016}. Alternatively, by combining PAC-Bayes theory with algorithmic stability, \citet{London2016} derived the generalization error bounds for SGD and extended the results to randomized learning algorithms. \citet{meng2017} investigated the generalization error bounds of various optimization algorithms, including SGD. The characteristics of loss functions, such as convexity, smoothness, and non-convexity, introduce distinct challenges in the analysis of generalization error.

	For the non-convex smooth loss functions, \citet{kuzborskij2018} derived the data-dependent stability bounds for SGD, while \citet{zhouy2021} introduced on-average stability results and provided high-probability guarantees for the generalization bound. Additionally, \citet{roberts2021} applied the central limit theorem to derive the generalization error bounds, which require sufficiently large batch sizes. \citet{zhou2022} investigated the stability of a truncated SGD under the assumption that the loss function is bounded and Lipschitz. Furthermore, clipped SGD has gained attention as a simple yet effective method for stabilizing the training process by controlling the gradient update \citep{Gorbunov2024}.

	Recently, \citet{neu2021} introduced information-theoretic bounds for SGD in the non-convex learning by developing a perturbed SGD framework. Their key insight was to add random noise to the iterates, resulting in tighter bounds. This approach enables the application of \citet{Pensia2018}'s technique to these virtual iterates, aiding in the derivation of the bounds. The framework used surrogate analysis, perturbing the iterates with a Gaussian random variable to obtain the generalization error bound. As noted by \citet{Haghifam2023}, such ``surrogate" algorithms frequently appear in the generalization literature \citep{Hellstrom2021, Dziugaite20, Neyshabur18, Foret20, Pour22}, with ``Gaussian surrogate" being the most commonly used.
	
	Later, \citet{wang2021} refined the bound proposed by \citet{neu2021} by removing the local gradient sensitivity, resulting in a bound comprising trajectory and flatness terms. However, these two terms exhibit a complex relationship with hyperparameters such as learning rate and batch size. In this paper, we  propose a new ``surrogate" SGD that offers deeper insights into the stability and convergence properties of conventional SGD. We first analyze sub-Gaussian loss functions to establish a foundational understanding, then extend our methods to bounded loss functions, offering a comprehensive framework for deriving the generalization error bounds for SGD in the non-convex tasks.

	\subsection{Contributions and organization of the paper}

	Our contributions are fourfold. First, we introduce \textit{T2pm-SGD}, which is essential for achieving the tighter generalization error bounds. Second, we reduce the flatness term from \( \xi_{k}(\bm{\theta}_k,\sum_{t=1}^k \bm{\epsilon}_t) \) to \( \xi_{k}(\bm{\theta}_k, \bm{\epsilon}_k) \). This new flatness term not only tightens the bounds but also remains stable throughout the iterative process, in contrast to previous terms that increase rapidly. Third, we refine the trajectory term, leading to an overall tighter bound. Specifically, for bounded loss functions, we improve the trajectory term from \(O(1/\sqrt{nb})\) to \(O(1/n)\), where \( b \) denotes the batch size, which is typically small in practice. Under sub-Gaussian condition, we also achieve a tighter trajectory term. Finally, we provide explicit guidelines for selecting the optimal noise variance, a critical step in further refining and tightening the bound.

    The remainder of the paper is organized as follows. Section 2 introduces the proposed perturbed SGD approach and establishes a tight generalization error bound for non-convex learning scenarios, including sub-Gaussian and bounded loss functions. We theoretically demonstrate that our method achieves tighter bounds than existing work. In Section 3, we relax our assumptions on the gradient moments and consider clipped SGD, deriving its generalization error bounds. Section 4 presents numerical experiments to evaluate the performance of our method. Section 5 provides conclusions and discussions. Finally, Section 6 provides the technical details of the theoretical results.

	\section{Main Results}
	\subsection{Updating Rules}
	Let $\bm{\theta}_{S,1},\ldots,\bm{\theta}_{S,K}$ denote the sequence of parameter estimates generated by mini-batch SGD, where $\bm \theta_{S,\cdot}$ indicates that the solution is computed based on the data $S$.
	For $k=1,\ldots,K,$ the update rule for mini-batch SGD \citep{Shamir2013} at the $k^{th}$ iteration is given by
	\begin{eqnarray}
		\begin{split}
			\bm \theta_{S,k}=\bm \theta_{S,k-1}-\eta_k \bm g_{S,k}\big(\bm \theta_{S,k-1}\big),
		\end{split}
	\end{eqnarray}
	where $\bm g_{S,k}\big(\bm \theta_{S,k-1}\big)=\dfrac{1}{b}\sum\limits_{i\in M_k}\nabla f\big(\bm X_i,\bm \theta_{S,k-1}\big)$ and $M_k\subset \{1,\ldots,n\}$ denotes the $k^{th}$ min-batch of size $b$. Here, $\bm \theta_{S,0}$ represents the initial value of $\bm \theta_{S}$, and $\eta_k$ is the learning rate at the $k^{th}$ iteration.

	Instead of directly analyzing the generalization error of mini-batch SGD, \citet{neu2021} introduced a modified version of SGD by adding random noise to the iterates for analytical purposes. This modification allows for an information-theoretic approach to estimating the generalization error. In their modified mini-batch SGD, the pseudo-update rule at iteration $k$ is defined as
	\begin{eqnarray}\label{eq_T1}
		\begin{split}
			\tilde{\bm \theta}_{S,k}=\tilde{\bm \theta}_{S,k-1}-\eta_k \bm g_{S,k}\big(\bm \theta_{S,k-1}\big)+\bm \epsilon_k, \text{~with~} \bm \epsilon_{k}\sim N(0,\sigma_k^2\bm I_d),
		\end{split}
	\end{eqnarray}
	where $\sigma_k^2>0$ and $\bm I_d$ denotes the $d\times d$ identity matrix.
	We refer to the SGD based on this rule as Type I perturbed mini-batch SGD ({\it T1pm-SGD}).
	Notably, this update rule can also be expressed as
	$\tilde{\bm \theta}_{S,k}={\bm \theta}_{S,k}+ \sum_{t=1}^k\bm \epsilon_t$.
	As the number of iterations increases, the accumulated error from the noise term $\bm \epsilon_t$ may negatively impact the accurate assessment of SGD's generalization ability.
	To address this issue and obtain a tighter generalization error bound, we propose the following pseudo-update rule at the $k^{th}$ iteration,
	\begin{eqnarray}\label{eq_T2}
		\begin{split}
			\check{\bm \theta}_{S,k}=\bm \theta_{S,k-1}-\eta_k \bm g_{S,k}\big(\bm \theta_{S,k-1}\big)+\bm \epsilon_k, \text{~with~} \bm \epsilon_{k}\sim N(0,\sigma_k^2\bm I_d).
		\end{split}
	\end{eqnarray}
	We refer to the SGD based on this update as Type II perturbed mini-batch SGD ({\it T2pm-SGD}). It can be shown that $\check{\bm \theta}_{S,k}=\bm \theta_{S,k}+\bm \epsilon_k,$ which introduces a smaller perturbation compared to the accumulated error in {\it T1pm-SGD}, potentially leading to better preservation of generalization performance.

	\begin{remark}
		The key difference between {\it T1pm-SGD} and {\it T2pm-SGD} lies in how the noise term is introduced. In {\it T2pm-SGD}, the pseudo-parameter $\check{\bm \theta}_{S,k}$ is generated by directly adding random noise to ${\bm \theta}_{S,k}$, while in {\it T1pm-SGD}, the noise accumulates over iterations as $ \sum_{t=1}^k\bm \epsilon_t$.
		Similar to {\it T1pm-SGD}, the iterations of {\it T2pm-SGD}, represented by
		$\check{\bm \theta}_{S,k}$, are used solely for theoretical analysis and not for practical parameter updates.
		As shown in (\ref{eq_T1}), in \textit{T1pm-SGD} we have $\tilde{\bm \theta}_{S,k}={\bm \theta}_{S,k}+ \sum_{t=1}^k\bm \epsilon_t$, leading to noise accumulation over iterations. In contrast, in {\it T2pm-SGD}, as defined by (\ref{eq_T2}), we have $\check{\bm \theta}_{S,k}=\bm \theta_{S,k}+\bm \epsilon_k$, which prevents noise accumulation. As a result, the difference between $\check{\bm \theta}_{S,k}$ and $\bm \theta_{S,k}$ is much smaller than the difference between $\tilde{\bm \theta}_{S,k}$ and $\bm \theta_{S,k}$. This indicates that {\it T2pm-SGD} can more accurately approximate the true iterative process than {\it T1pm-SGD}.  
	\end{remark}

	\subsection{$ge$ of {\em {\it T2pm-SGD}}}
	
	The main challenge in establishing the upper bound of the generalization error, as outlined in (\ref{eq2}), lies in quantifying the difference between the empirical and expected losses. To address this, we apply the random subset method, which reformulates the generalization error and is particularly effective in tightening the bound. By adjusting the subset $J$, this method enables further optimization of the bound. Therefore, we employ this approach to analyze the generalization error and establish its upper bound.
	
	Let $J$ represent a subset in $\{1, \ldots, n\}$ with size $|J|$. The dataset $S$ can be divided into two subsets, $S_J$ and $S_{J^c}$, where $S_J=\{\bm X_i, i\in J\}$
	and $S_{J^c}=\{\bm X_i, i\notin J\}$. For example, if $J=\{2,3\}$, then $S_J=\{\bm X_2, \bm X_3\}$ and $S_{J^c}=\{\bm X_1, \bm X_4,\ldots, \bm X_n\}$.
	Let $\bm X_i'$ be an independent and identically distributed (i.i.d.) copy of $\bm X_i$ for $i\in J$. Define $S'_{J}=\{\bm X_j', j\in J\}$ and $S'=S_{J^c}\cup S_{J}'$. Given $S_{J^c}$, $S'_{J}$ is independent of both $S$ and $\bm \theta_{S,k}$.
	This independence enables us to express $ge$ at the $k^{th}$ iteration step as
	\begin{eqnarray*}
		\begin{split}	
			ge_k
			=\mathbb E_{S_{J^c}}\bigg\{\mathbb E_{S_J , S_J'|S_{J^c}}\Big[\dfrac{1}{|J|} \sum_{i\in J}\big(f(\bm X_i',\bm \theta_{S,k})-f(\bm X_i, \bm \theta_{S,k})\big)\Big]\bigg\},
		\end{split}
	\end{eqnarray*}	
	where $\mathbb E_{S_{J^c}} $ denotes the expectation with respect to $S_{J^c}$, and $\mathbb E_{S_J, S_J'|S_{J^c}}$ represents the conditional expectation with respect to $S_J$ and $S_J'$ given $S_{J^c}$. For any $i\in J,$ it can be shown that $\bm X_i$ is independent of $\theta_{S',k}$, and $\bm X_i'$ is independent of $\bm \theta_{S,k}$ given $S_{J^c}$. Thus, the loss functions $f(\bm X_i, \bm \theta_{S',k})$ and $f(\bm X_i',\bm \theta_{S,k})$ follow the same distribution. Hence, we obtain
	\begin{eqnarray}\label{ge_use}
		\begin{split}	
			ge_k
			=\mathbb E_{S_{J^c}}\bigg\{\mathbb E_{S_J , S_J'|S_{J^c}}\Big[\dfrac{1}{|J|} \sum_{i\in J}\big(f(\bm X_i,\bm \theta_{S',k})-f(\bm X_i, \bm \theta_{S,k})\big)\Big]\bigg\}.
		\end{split}
	\end{eqnarray}
	
	For simplicity, we denote $\bm \theta_{S,k}$ and $\bm \theta_{S',k}$ as $\bm \theta_k$ and $\bm \theta'_k$, respectively, in the following. Similarly, $\check{\bm \theta}_k$ and $\check{\bm \theta}'_k$ represent $\check{\bm \theta}_{S,k}$ and $\check{\bm \theta}_{S',k}$, respectively. Thus, the term inside the summation in (\ref{ge_use}) becomes $f(\bm X_i, \bm \theta_k')-f(\bm X_i, \bm \theta_k)$, which can be rewritten as
	\begin{eqnarray*}
		\begin{split}
			f(\bm X_i, \bm \theta_k')-f(\bm X_i, \bm \theta_k)
			&=Y_{k,i}+f(\bm X_i, \check{\bm \theta}_k')-f(\bm X_i, \check{\bm \theta}_k), ~i\in J,
		\end{split}
	\end{eqnarray*}
	where
	$Y_{k,i}=f(\bm X_i, \bm \theta_k')-f(\bm X_i, \check{\bm \theta}_k')+ f(\bm X_i, \check{\bm \theta}_k)-f(\bm X_i, \bm \theta_k)$ and $\check{\bm \theta}_k=\bm \theta_k+\bm \epsilon_k$. Define
	$$\xi_{k}(\bm \theta_k,\bm \epsilon_k)=\mathbb E_{S_{J^c}}\Big[\mathbb E_{S_J, S_J'|S_{J^c}}\Big(\dfrac{1}{|J|}\displaystyle \sum_{i\in J}Y_{k,i}\Big)\Big].$$
	This term $\xi_{k}(\bm \theta_k,\bm \epsilon_k)$ is referred to as the flatness term, which measures how flat the loss landscape is at the solution point.
	Thus, we have
	\begin{eqnarray}
		\begin{split}
			|ge_k| \leq |\xi_{k}(\bm \theta_k,\bm \epsilon_k)|+|\check{ge}_k|,
		\end{split}
	\end{eqnarray}
	where
	$\check{ge}_k=\mathbb E_{S_{J^c}}\Big\{\mathbb E_{S_J, S_J'|S_{J^c}}\Big[\dfrac{1}{|J|}\displaystyle \sum_{i\in J}\big(f(\bm X_i, \check{\bm \theta}_k')-f(\bm X_i, \check{\bm \theta}_k)\big)\Big]\Big\}$.
	
	\begin{remark}\label{r3}
		Since $\bm X_1,\ldots,\bm X_n$ are {\em i.i.d.}, given $S_{J^c}$, both $S$ and $S'$ are also {\em i.i.d.}, implying that $\bm \theta_k$ and $\bm \theta_k'$ are {\em i.i.d.} too. So,
		\[\xi_{k}(\bm \theta_k,\bm \epsilon_k)=2 \mathbb E_{S, S'}\big[f(\bm X_i, \bm \theta_k+\bm \epsilon_k)-f(\bm X_i, \bm \theta_k)\big],\]
		which represents the expected change in the loss function when the parameter  $\bm \theta_k$ is perturbed by $\bm \epsilon_k$.
		This measure is analogous to the ``local value sensitivity" described by \citet{neu2021} and \citet{wang2021}, where the corresponding value is
		\[\xi_{k}\Big(\bm \theta_k,\displaystyle \sum_{i=1}^k\bm \epsilon_i\Big)=2\mathbb E_{S, S'}\bigg[f\Big(\bm X_i, \bm \theta_k+\displaystyle \sum_{t=1}^k\bm \epsilon_t\Big)-f(\bm X_i, \bm \theta_k)\bigg],\]
		with the accumulated perturbations $\sum_{j=1}^k\bm \epsilon_j$.
		In contrast to their approach, our local value sensitivity exhibits a smaller magnitude and remains stable throughout the iterative process.
	\end{remark}

	\subsection{Upper Bound of $ge$ for Sub-Gaussian Loss}
	According to \citet{neu2021}, a loss function $f(\bm X, \bm \theta)$ is $R$-sub-Gaussian for any $\bm \theta \in \Theta$ if there exists a positive constant $R$ such that, for any $t$, 
	$$\mathbb E_{\bm X}\Big( \exp\big[t\big(f(\bm X, \bm \theta) - \mathbb E_{\bm X} f(\bm X, \bm \theta)\big) \big] \Big)\leq  \exp\left(-\dfrac{R^2t^2}{2}\right).$$
	To facilitate our analysis, we define $L_{Jk}=1$ if the subset $J$ intersects with the $k^{th}$ mini-batch, and $L_{Jk}=0$ otherwise. Specifically, for $J=\{j\}$, we have $P(L_{Jk}=1)=b/n$, where $b$ is the batch size and $n$ is the total number of data points. Let $\bm L_J=(L_{J1},\ldots,L_{Jk})$. Let $\check{\upsilon}_{s|S_{J^c},\bm \epsilon_{s-1},\bm L_J}$ denote $ \mathrm{tr}\big[\mathbb V^{S_{J^c},\bm \epsilon_{s-1},\bm L_J}\big(\bm g_{S_J,s}(\check{\bm \theta}_{s-1}-\bm \epsilon_{s-1})|\check{\bm \theta}_{s-1}\big)\big]$,
	where $\mathbb V^{S_{J^c},\bm \epsilon_{s-1},\bm L_J}$ represents the conditional variance given $S_{J^c}$, $\bm \epsilon_{s-1}$ and $\bm{L}_J$; $\mathrm{tr}(A)$ denotes the trace of matrix $A$; and $\bm g_{S_J,s}(\check{\bm \theta}_{s-1}-\bm \epsilon_{s-1})=\frac{1}{b}\sum\limits_{i\in J\cap M_s}\nabla f(\bm X_i, \check{\bm \theta}_{s-1}-\bm \epsilon_{s-1})$. Under these definitions, we establish the following conditions and results.
	\\

	\noindent \textbf{Condition} 1.1. The loss function $f(\bm X, \bm \theta)$ is $R$-sub-Gaussian for any $\bm \theta \in \Theta$.
	
	\noindent \textbf{Condition} 1.2. The expectation $\mathbb E_{\bm \epsilon_{s-1},S_{J^c},\bm L_J}\big(\check{\upsilon}_{s|S_{J^c},\bm \epsilon_{s-1},\bm L_J}\big)$ is bounded.
	
	\begin{remark}
		The sub-Gaussian condition is a common assumption in the study of non-convex learning algorithms, as it controls the tail behavior of the loss function. In our analysis, we require that \( \mathbb E_{\bm \epsilon_{s-1},S_{J^c},\bm L_J}\big(\check{\upsilon}_{s|S_{J^c},\bm \epsilon_{s-1},\bm L_J}\big) \) is bounded. By utilizing the inequality $\mathbb EX^2\geq (\mathbb EX)^2$, we find that this condition is weaker than the one imposed by \citet{wang2021}, who require the boundedness of \( \mathrm{tr}\left[ \mathbb{V}\left( \bm g_{S_J, s}(\bm{\theta}_{s-1}) \right) \right] \). Therefore, our method imposes less restrictive conditions on the loss function and its gradient, potentially leading to tighter generalization bounds.
	\end{remark}

	\begin{theorem} \label{th-sub-1.2} Under Conditions 1.1 and 1.2, the absolute value of the generalization error $|ge_k|$ is bounded by
		\begin{eqnarray}\label{eq-th4}
			|ge_k|\leq
			\min_{J}\Bigg\{|\xi_{k}(\bm \theta_k,\bm \epsilon_k)|+\mathbb E_{S_{J^c},\bm L_J}\sqrt{\dfrac{R^2d}{|J|} \displaystyle \sum_{s=1}^{k}\mathbb E_{\bm \epsilon_{s-1}|S_{J^c},\bm L_J}\log\Big(1+\eta^2_s\dfrac{\check{\upsilon}_{s|S_{J^c},\bm \epsilon_{s-1},\bm L_J}}{d \sigma^2_s}\Big)}\Bigg\}.
		\end{eqnarray}		
		If $J=\{j\}$, then we have 
		\begin{eqnarray}\label{eq-th4.1}
			\begin{split}
				|ge_k|\leq|&\xi_{k}(\bm \theta_k,\bm \epsilon_k)|\\
				&+\mathbb E_{S_{\{j\}^c}, \bm L_{\{j\}}} \sqrt{R^2d \displaystyle \sum_{s=1}^{k}\mathbb E_{\bm \epsilon_{s-1}|S_{\{j\}^c},\bm L_{\{j\}}}\Big[L_{\{j\}s}\log\Big(1+\dfrac{\eta^2_s}{d \sigma^2_s}\check{\upsilon}_{s|S_{\{j\}^c},\bm \epsilon_{s-1},\bm L_{\{j\}}}\Big)\Big]}.
			\end{split}
		\end{eqnarray}
	\end{theorem}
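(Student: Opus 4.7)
The plan is to start from the decomposition $|ge_k| \leq |\xi_{k}(\bm{\theta}_k,\bm{\epsilon}_k)| + |\check{ge}_k|$ already established in the text, and bound $|\check{ge}_k|$ via an information-theoretic argument adapted to the Type II rule \eqref{eq_T2}. First, I would fix the conditioning on $(S_{J^c},\bm L_J)$. Under this conditioning, for every $i\in J$ the sample $\bm X_i$ is independent of $\check{\bm\theta}_k'$ and $\bm X_i'$ is independent of $\check{\bm\theta}_k$. Condition~1.1 then triggers the standard sub-Gaussian/Donsker--Varadhan inequality (as in Xu--Raginsky 2017 and Pensia et al.\ 2018), yielding
\[
\left|\mathbb E_{S_J, S_J'\mid S_{J^c},\bm L_J}\!\left[\dfrac{1}{|J|}\sum_{i\in J}\bigl(f(\bm X_i,\check{\bm\theta}_k')-f(\bm X_i,\check{\bm\theta}_k)\bigr)\right]\right| \le \sqrt{\dfrac{2R^2}{|J|}\,I\!\bigl(S_J;\check{\bm\theta}_k \,\big|\, S_{J^c},\bm L_J\bigr)}.
\]
Taking expectation over $(S_{J^c},\bm L_J)$ and pulling it inside the square root by Jensen bounds $|\check{ge}_k|$ in terms of conditional mutual information.

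Next, I would decompose this mutual information along the SGD trajectory via the chain rule, using that $\check{\bm\theta}_k$ is a deterministic function of $(\check{\bm\theta}_1,\ldots,\check{\bm\theta}_k)$:
\[
I\bigl(S_J;\check{\bm\theta}_k\,\big|\,S_{J^c},\bm L_J\bigr) \le \sum_{s=1}^k I\bigl(S_J;\check{\bm\theta}_s\,\big|\,\check{\bm\theta}_{s-1}, S_{J^c}, \bm L_J\bigr).
\]
Because $\check{\bm\theta}_{s-1}=\bm\theta_{s-1}+\bm\epsilon_{s-1}$, further conditioning on $\bm\epsilon_{s-1}$ recovers $\bm\theta_{s-1}$ exactly, so the update reads $\check{\bm\theta}_s=\bm\theta_{s-1}-\eta_s\bm g_{S,s}(\bm\theta_{s-1})+\bm\epsilon_s$. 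Splitting the mini-batch gradient as $\bm g_{S,s}=\bm g_{S_{J^c},s}+\bm g_{S_J,s}$, the first piece is determined by $S_{J^c}$ and contributes no information about $S_J$ under the conditioning, while $\bm\epsilon_s\sim N(0,\sigma_s^2\bm I_d)$ is independent of everything else.

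Third, I would invoke the Gaussian channel bound: for any $d$-dimensional random vector $V$ with $\mathrm{tr}\,\mathbb V(V)\le\tau$ and independent noise $\bm\epsilon\sim N(0,\sigma^2\bm I_d)$, the maximum-entropy property of the Gaussian together with the concavity of $\log\det$ (through the eigenvalue AM--GM inequality) gives $I(V;V+\bm\epsilon)\le \tfrac{d}{2}\log(1+\tau/(d\sigma^2))$. Applied here with $V=-\eta_s\bm g_{S_J,s}(\bm\theta_{s-1})$ under the conditioning on $(S_{J^c},\bm\epsilon_{s-1},\bm L_J)$, the trace of the conditional covariance is exactly $\eta_s^2\,\check\upsilon_{s|S_{J^c},\bm\epsilon_{s-1},\bm L_J}$, producing
\[
I\bigl(S_J;\check{\bm\theta}_s\,\big|\,\check{\bm\theta}_{s-1},S_{J^c},\bm L_J,\bm\epsilon_{s-1}\bigr) \le \dfrac{d}{2}\log\!\Big(1+\dfrac{\eta_s^2\,\check\upsilon_{s|S_{J^c},\bm\epsilon_{s-1},\bm L_J}}{d\sigma_s^2}\Big).
\]
Averaging over $\bm\epsilon_{s-1}$, summing over $s$, combining with the first step (the constant $\sqrt{2R^2}\cdot\sqrt{d/2}=\sqrt{R^2d}$ matches the theorem), and finally taking the minimum over $J$ yields \eqref{eq-th4}. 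For \eqref{eq-th4.1} with $J=\{j\}$, the only extra ingredient is that $\bm g_{S_{\{j\}},s}\equiv\bm 0$ whenever $L_{\{j\}s}=0$, so the corresponding summand vanishes and the indicator $L_{\{j\}s}$ can be inserted inside the logarithm.

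The main obstacle will be the careful bookkeeping of conditional independence at each iteration so that the one-step bound reduces to $\check\upsilon_{s|S_{J^c},\bm\epsilon_{s-1},\bm L_J}$ rather than a larger unconditional variance. This requires verifying that enlarging the conditioning by $\bm\epsilon_{s-1}$ is cost-free in the chain rule (because $\bm\epsilon_{s-1}$ is independent of $S_J$ given the prior conditioning), and that the Gaussian channel bound is applied with respect to exactly the right conditional law of the $S_J$-part of the mini-batch gradient. This is precisely the subtlety that distinguishes the Type II analysis from the Type I version of Neu (2021) and Wang et al.\ (2021), where the accumulated noise $\sum_{t=1}^k\bm\epsilon_t$ entangles successive iterates and forces the flatness term $\xi_k(\bm\theta_k,\sum_{t=1}^k\bm\epsilon_t)$ to grow with $k$.
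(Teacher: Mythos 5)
Your proposal follows essentially the same route as the paper's proof: sub-Gaussianity plus the Donsker--Varadhan/mutual-information bound (Pensia et al.), a step-wise decomposition of $I(S_J;\check{\bm\theta}_k)$, the observation that conditioning on $\bm\epsilon_{s-1}$ is harmless because $\bm\epsilon_{s-1}\perp S_J$, the split $\bm g_{S,s}=\bm g_{S_J,s}+\bm g_{S_{J^c},s}$, and the Gaussian max-entropy channel bound, with the $J=\{j\}$ case handled via $L_{\{j\}s}$. The one small imprecision is the justification of $I(S_J;\check{\bm\theta}_k\mid\cdot)\le\sum_{s}I(S_J;\check{\bm\theta}_s\mid\check{\bm\theta}_{s-1},\cdot)$: the chain rule applied to the full trajectory conditions on $\check{\bm\theta}_{1:s-1}$, not just $\check{\bm\theta}_{s-1}$, so one should instead (as the paper does) bound $I(S_J;\check{\bm\theta}_k)\le I(S_J;\check{\bm\theta}_{k-1})+I(S_J;\check{\bm\theta}_k\mid\check{\bm\theta}_{k-1})$ via data processing and the two-variable chain rule, then iterate.
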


	\begin{remark}
		Compared to existing results, the {\it T2pm-SGD} method significantly reduces the flatness term $|\xi_{k, J}(\bm \theta_k,\bm \epsilon_k)|$, which is much smaller than $|\xi_{k}(\bm \theta_k, \sum_{i=1}^k\bm \epsilon_i)|$ as reported by \citet{wang2021}.      In (\ref{eq-th4}), the second term is called the trajectory term, which also shows a clear improvement. By setting $J =\{1,\ldots,n\}$ and applying the inequality $\mathbb E(\sqrt{X})\leq \sqrt{\mathbb E(X)}$ for any non-negative random variable $X$, along with $L_{Js}=1$ for $s=1,\ldots,k$, the second part of the bound in (\ref{eq-th4}) becomes smaller than
		\begin{eqnarray}\label{eq-th4.11}
			\mathbb E\sqrt{\dfrac{R^2d}{n} \displaystyle \sum_{s=1}^{k}\mathbb E_{\bm \epsilon_{s-1}}\log\Big(1+\dfrac{\eta^2_s}{d \sigma^2_s}\check{\upsilon}_{s|\bm \epsilon_{s-1}}\Big)}.
		\end{eqnarray}
		By substituting $\bm \theta_{s-1}=\check{\bm \theta}_{s-1}-\bm \epsilon_{s-1}$ and using the inequality $\mathbb E(X^2)\geq [\mathbb E(X)]^2$, the trajectory term (\ref{eq-th4.11}) is further bounded by $\sqrt{\dfrac{R^2d}{n} \displaystyle \sum_{s=1}^{k}\mathbb E\log\Big(1+\dfrac{\eta^2_s}{d \sigma^2_s} \mathrm{tr}\big[\mathbb V\big(\bm g_{S_J,s}(\bm \theta_{s-1})\big)\big] \Big)},$ as reported by \citet{wang2021}.  Therefore, both terms in our bound are smaller than the corresponding terms in \citet{wang2021}, highlighting the advantages of {\it T2pm-SGD} under same assumption.
	\end{remark}
	
	\begin{remark}
		By setting $J =\{j\}$ and applying the inequality $\mathbb E(\sqrt{X})\leq \sqrt{\mathbb E(X)}$ for any non-negative random variable $X$, along with $P(L_{Js}=1)=b/n$ for $s=1,\ldots,k$, the second part of the bound in (\ref{eq-th4.1}) becomes smaller than 
		\begin{eqnarray}\label{eq-th4.1134}
			\mathbb E_{S_{\{j\}^c}} \sqrt{\dfrac{bR^2d}{n} \displaystyle \sum_{s=1}^{k}\mathbb E_{\bm \epsilon_{s-1}|S_{\{j\}^c},\bm L_{\{j\}}}\Big[\log\Big(1+\dfrac{\eta^2_s}{d \sigma^2_s}\check{\upsilon}_{s|S_{\{j\}^c},\bm \epsilon_{s-1},\bm L_{\{j\}}}\Big)\big|L_{\{j\}s}=1\Big]},
		\end{eqnarray}
		By applying the inequality $\ln(1+x)<x$ for $x>0$, the trajectory term (\ref{eq-th4.1134}) is further bounded by $$\mathbb E_{S_{\{j\}^c}} \sqrt{\dfrac{R^2}{nb} \displaystyle \sum_{s=1}^{k}\frac{\eta^2_s}{\sigma^2_s}\mathbb E_{\bm \epsilon_{s-1}|S_{\{j\}^c},\bm L_{\{j\}}}\Big(\mathrm{tr}\big[\mathbb V^{S_{J^c},\bm \epsilon_{s-1},\bm L_J}\big( \nabla f(\bm X_j, \check{\bm \theta}_{s-1}-\bm \epsilon_{s-1}) |\check{\bm \theta}_{s-1}\big)\big]\big|L_{\{j\}s}=1\Big)},$$ which is of the order $O\big(\dfrac{1}{\sqrt{nb}}\big).$
	\end{remark}

	Assume that $\mathbb E\big(\triangle f(\bm \theta_k)\big)=O(1)$, $\sigma_k=O(n^{-\gamma})$. Then the bound in $(\ref{eq-th4.11})$ is of order $O(n^{-2\gamma})+O(b^{-1/2}n^{-1/2+\gamma})$, and can be further reduced to $O\big((nb)^{-1/3}\big)$ by appropriately selecting the parameter $\gamma$.

	For a general noise characterized by $\bm \epsilon_k\sim N(0,\bm \Sigma_k)$, we derive a bound analogous to (\ref{eq-th4}), as summarized in the following theorem.
	\begin{theorem} \label{th-sub-3.2}
		Under Conditions 1.1 and 1.2, if $\bm \epsilon_s\sim N(0,\bm \Sigma_s)$ for $s=1,\ldots,k$, the absolute value of the generalization error $|ge_k|$ is bounded by
		\begin{eqnarray*}
			\begin{split}
				\min_{J}\Bigg\{|\xi_{k}(\bm \theta_k,\bm \epsilon_k)|+\mathbb E_{S_{J^c},\bm L_J}\sqrt{\dfrac{R^2d}{|J|} \displaystyle \sum_{s=1}^{k}\mathbb E_{\bm \epsilon_{s-1}|S_{J^c},\bm L_J}\log\Big(1+\dfrac{\eta^2_s}{d |\bm \Sigma_s|^{1/d}}\check{\upsilon}_{s|S_{J^c},\bm \epsilon_{s-1},\bm L_J}\Big)}\Bigg\}.
			\end{split}
		\end{eqnarray*}
	\end{theorem}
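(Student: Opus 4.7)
The plan is to replay the proof of Theorem~\ref{th-sub-1.2} line by line, swapping the scalar variance $\sigma_s^2$ for the general covariance $\bm\Sigma_s$. The upstream decomposition $|ge_k|\le |\xi_k(\bm\theta_k,\bm\epsilon_k)|+|\check{ge}_k|$ derived in Section~2.2 depends only on the relation $\check{\bm\theta}_s=\bm\theta_s+\bm\epsilon_s$ and not on the law of $\bm\epsilon_s$, so it carries over unchanged; likewise the flatness contribution $\xi_k(\bm\theta_k,\bm\epsilon_k)$ is a one-step object involving only $\bm\epsilon_k$ and is handled exactly as in Theorem~\ref{th-sub-1.2}. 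Consequently, all that is new is the trajectory (mutual-information) term.

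For the trajectory term I would start from the random-subset identity~\eqref{ge_use}, combine the $R$-sub-Gaussian hypothesis (Condition~1.1) with the Donsker--Varadhan variational formula, and conclude
\begin{equation*}
|\check{ge}_k|\;\le\;\mathbb E_{S_{J^c},\bm L_J}\sqrt{\tfrac{R^2}{|J|}\, I\bigl(S_J;\,\check{\bm\theta}_{1},\ldots,\check{\bm\theta}_{k}\,|\, S_{J^c},\bm L_J\bigr)}.
\end{equation*}
The chain rule for mutual information then breaks this into a sum over $s$ of per-step KL divergences between the conditional law of $\check{\bm\theta}_s$ given $(S,\check{\bm\theta}_{s-1})$ and a reference distribution independent of $S_J$. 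Both are Gaussian with common covariance $\bm\Sigma_s$ whose means differ by an $\eta_s$-scaled gradient increment, so the closed-form Gaussian KL combined with the max-entropy bound for $h(\bm g+\bm\epsilon_s)$ produces, after taking expectation over $S_J$, the per-step matrix log-determinant bound
\begin{equation*}
\tfrac{1}{2}\,\log\det\Bigl(\bm I_d+\eta_s^2\,\bm\Sigma_s^{-1}\,\mathbb V^{S_{J^c},\bm\epsilon_{s-1},\bm L_J}\bigl(\bm g_{S_J,s}(\check{\bm\theta}_{s-1}-\bm\epsilon_{s-1})\,|\,\check{\bm\theta}_{s-1}\bigr)\Bigr).
\end{equation*}

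The main obstacle is turning this log-determinant into the scalar $\log\bigl(1+\eta_s^2\,\check{\upsilon}_{s|S_{J^c},\bm\epsilon_{s-1},\bm L_J}/(d\,|\bm\Sigma_s|^{1/d})\bigr)$ demanded by the theorem. My plan is to chain two spectral inequalities on the eigenvalues of the relevant PSD matrices. First, Jensen's inequality $\sum_{i=1}^d\log(1+\lambda_i)\le d\log\bigl(1+\tfrac{1}{d}\sum_{i=1}^d\lambda_i\bigr)$ collapses the product over eigenvalues into a single log. Second, via the identity $\det(\bm I_d+\eta_s^2\bm\Sigma_s^{-1}\bm C_s)=\det(\bm\Sigma_s+\eta_s^2\bm C_s)/|\bm\Sigma_s|$, with $\bm C_s$ the inner conditional variance, I apply the AM--GM inequality $\det(\bm A)^{1/d}\le \mathrm{tr}(\bm A)/d$ to $\bm A=\bm\Sigma_s+\eta_s^2\bm C_s$, so that the ``noise volume'' is expressed through the geometric mean of the eigenvalues of $\bm\Sigma_s$, i.e.\ through $|\bm\Sigma_s|^{1/d}$. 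This is the only non-cosmetic departure from the proof of Theorem~\ref{th-sub-1.2}; once the per-step bound is in the advertised scalar form, summing over $s$, pulling $\mathbb E$ inside the square root via Jensen, and minimizing over $J$ finishes the argument.
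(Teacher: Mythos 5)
Your overall scaffolding matches the paper's strategy: the decomposition $|ge_k|\le|\xi_k(\bm\theta_k,\bm\epsilon_k)|+|\check{ge}_k|$ and the chain-rule reduction of the trajectory term to a sum of per-step mutual-information contributions are recycled from Theorem~\ref{th-sub-1.2}, exactly as the paper does. Where you genuinely diverge is in what you do with each per-step term: the paper simply re-asserts the entropy inequality of the isotropic proof with $d\sigma_s^2$ replaced by $d|\bm\Sigma_s|^{1/d}$ and moves on, whereas you first land on the per-step bound $\tfrac12\log\det\bigl(\bm I_d+\eta_s^2\bm\Sigma_s^{-1}\bm C_s\bigr)$ (writing $\bm C_s$ for the conditional gradient covariance, so $\mathrm{tr}(\bm C_s)=\check{\upsilon}_{s|S_{J^c},\bm\epsilon_{s-1},\bm L_J}$) and then try to scalarize it. Being explicit is the right instinct, but that explicitness exposes the gap.

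Neither of your two proposed scalarizations reaches the stated form. (i) The Jensen collapse $\sum_{i=1}^d\log(1+\mu_i)\le d\log\bigl(1+\tfrac1d\sum_i\mu_i\bigr)$ on the eigenvalues $\mu_i$ of $\eta_s^2\bm\Sigma_s^{-1}\bm C_s$ gives $d\log\bigl(1+\tfrac{\eta_s^2}{d}\,\mathrm{tr}(\bm\Sigma_s^{-1}\bm C_s)\bigr)$, and $\mathrm{tr}(\bm\Sigma_s^{-1}\bm C_s)$ is not in general $\le \mathrm{tr}(\bm C_s)/|\bm\Sigma_s|^{1/d}$: with $\bm\Sigma_s=\mathrm{diag}(\varepsilon,1/\varepsilon)$ and $\bm C_s=\mathrm{diag}(1,0)$ the left side is $1/\varepsilon$ while the right side is $1$. (ii) The determinant identity plus AM--GM, $\det(\bm\Sigma_s+\eta_s^2\bm C_s)^{1/d}\le\tfrac1d\,\mathrm{tr}(\bm\Sigma_s+\eta_s^2\bm C_s)$, yields
$\tfrac1d\log\det(\bm I_d+\eta_s^2\bm\Sigma_s^{-1}\bm C_s)\le\log\bigl(\tfrac{\mathrm{tr}(\bm\Sigma_s)+\eta_s^2\mathrm{tr}(\bm C_s)}{d|\bm\Sigma_s|^{1/d}}\bigr)$,
and to finish you would need $\mathrm{tr}(\bm\Sigma_s)\le d|\bm\Sigma_s|^{1/d}$ so that the first summand becomes the ``$1$''. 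But AM--GM gives the reverse, $\mathrm{tr}(\bm\Sigma_s)\ge d|\bm\Sigma_s|^{1/d}$, with equality only when $\bm\Sigma_s\propto\bm I_d$. So for any non-isotropic $\bm\Sigma_s$ your chain produces a strictly larger quantity than the theorem's trajectory term, and the argument as written does not prove the claimed bound. (Worth noting: the paper's own one-line assertion of the modified entropy inequality runs into the same obstruction, since the max-entropy bound it invokes controls $h(\eta_s\bm g+\bm\epsilon_s)$ through the total second moment $\mathrm{tr}(\bm\Sigma_s)+\eta_s^2\mathrm{tr}(\bm C_s)$, not through $d|\bm\Sigma_s|^{1/d}+\eta_s^2\mathrm{tr}(\bm C_s)$; the two agree only in the isotropic case handled by Theorem~\ref{th-sub-1.2}.)
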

	
	\subsection{Upper Bound of $ge$ for Bounded Loss}
In this section, we focus on bounded loss functions, which inherently limit the influence of extreme values and thereby help stabilize gradient-based optimization methods such as SGD. Notable examples of bounded loss functions include the Hinge Loss \citep{jin2014, Fu2024} and the Huber Loss \citep{Huber1964, nguyen2024}. 
	Let \(P_{\check{\bm{\theta}}'_k|S, S'}\) and \(P_{\check{\bm{\theta}}_k|S, S'}\) represent the conditional density functions of \(\check{\bm{\theta}}'_k\) and \(\check{\bm{\theta}}_k\), given $S$ and $S'$, respectively. \(\check{ge}_k\) can be expressed as
	\[\check{ge}_k=\mathbb E_{S, S'}\bigg[\dfrac{1}{|J|}\displaystyle \sum_{i\in J}\int f(\bm X_i,\bm \theta) \Big(P_{\check{\bm{\theta}}'_k|S, S'}(\bm \theta)-P_{\check{\bm{\theta}}_k|S, S'}(\bm \theta)\Big)d\bm \theta\bigg].\]
	To evaluate $\check{ge}_k$, we assess the similarity between the densities $P_{\check{\bm{\theta}}'_k|S, S'}$ and $P_{\check{\bm{\theta}}_k|S, S'}$, measured using the total variation (TV) distance and the Kullback-Leibler (KL) divergence.
	The TV distance is defined as $D_{TV}(p,q)=\dfrac{1}{2}\int |p(x)-q(x)|dx$, and the KL divergence is defined as $D_{KL}(p,q)=\int p(x)\ln\big(p(x)/q(x)\big)dx$, where $p(\cdot)$ and $q(\cdot)$ are two probability density functions.

	In Theorem \ref{th-sub-1} below, we derive a tight bound for the perturbed SGD using the TV distance and KL divergence, assuming the loss function is bounded. This result marks a significant improvement over previous works in this domain. Before presenting the theorem, we introduce several key terms. 
	Let $P_{\bm h'_{k}|S, S'}$ denote the conditional density function of $\bm h'_{k}=\bm \theta'_{k-1}-\eta_{k} \bm g_{S,k}(\bm \theta'_{k-1})$ given $S$ and $S'$. Additionally, let $P_{\bm \iota_{t}'|\bm h'_{k}, S, S'}$ represent the conditional density function of $\bm \iota_{t}'=t[\bm g_{S',k}(\bm \theta'_{k-1})-\bm g_{S,k}(\bm \theta'_{k-1})]+s_{k}W'_t$ given $\bm h'_{k}, S$ and $S',$ where $W'_t\sim N(0,t\bm I_d)$ for $t\in [0,\eta_k]$. Define $H_{J,s}(\bm \iota_{t}')= \sum_{j\in J}\mathbb E\big[\big(\nabla f(\bm X_{j}', \bm \theta_{s-1}')-\mathbb E\nabla f(\bm X_{j}, \bm \theta_{s-1}')\big)|\bm \iota_{t}', \bm h'_{k}, S, S'\big]$. We impose the following conditions.\\
	
	\noindent \textbf{Condition} 2.1. The loss function $f(\bm X,\bm \theta)$ is $c_0$-bounded; that is, there exists a constant $c_0>0$ such that $\sup_{\bm X,\bm \theta} |f(\bm X,\bm \theta)|\leq c_0$.
	
	\noindent \textbf{Condition} 2.2. There exists a constant $c_1>0$ such that $\|H_{J,s}(\bm \iota_{t}')\|\leq c_1.$
	\begin{remark}\label{rem08}
		Condition 2.1 is common in the theoretical analyses; for example, the 0-1 loss function in classification is bounded within $[0,1]$. In practice, model outputs are often constrained using techniques like softmax functions or regularization, effectively keeping loss values within a reasonable range.
		The bound in equation (\ref{bound1}) depends on $H_{\{j\},s}(w)$, which involves gradients of the loss function. While many existing works require either $\|\nabla f(\bm X, \bm \theta)-\nabla f(\bm X', \bm \theta)\|\leq c_1$ or bounded second moment \citep{kuzborskij2018, zhouy2021} to ensure bounded results, our Condition 2.2 imposes a restriction only on 
		$\sum_{j\in J}\mathbb E\big[\big(\nabla f(\bm X_{j}', \bm \theta_{s-1}')-\mathbb E\nabla f(\bm X_{j}, \bm \theta_{s-1}')\big)|\bm \iota_{t}', \bm h'_{k}, S, S'\big]$. By Jensen's inequality, we find $\big\|\mathbb E\big[\nabla f(\bm X_{j}', \bm \theta_{s-1}')-\mathbb E  \nabla f(\bm X_{j}, \bm \theta_{s-1}')|\bm \iota_{t}', \bm h'_{k}, S, S'\big]\big\|^2$ is smaller than the expectations of squared norms of the gradients, $\mathbb E\big\|\big(\nabla f(\bm X_{j}', \bm \theta_{s-1}')-\mathbb E\nabla f(\bm X_{j}, \bm \theta_{s-1}')\big)|\bm \iota_{t}', \bm h'_{k}, S, S'\big\|^2$, making Condition 2.2 less restrictive than those requiring bounded gradient differences.
	\end{remark}
	
	\begin{theorem}\label{th-sub-1}  Under Conditions 2.1 and 2.2, the absolute value of the generalization error $|ge_k|$ is bounded by
		\begin{eqnarray}\label{bound1}
			|\xi_{k}(\bm \theta_k,\bm \epsilon_k)|+\dfrac{c_0c_1}{n} \sqrt{\displaystyle \sum_{s=1}^k\dfrac{2\delta_s\eta_s}{\sigma^2_{s}}},
		\end{eqnarray}
		where $\delta_s=e^{-\frac{1}{2(1-\alpha_s)}\eta_s^{1-\alpha_s}}\int_{0}^{\eta_s }e^{\frac{1}{2(1-\alpha_s)}u^{1-\alpha_s}}du$ with $\alpha_s\in (0,1)$.	
	\end{theorem}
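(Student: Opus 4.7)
The plan is to start from the decomposition $|ge_k|\leq|\xi_k(\bm\theta_k,\bm\epsilon_k)|+|\check{ge}_k|$ established earlier in the excerpt. Since the flatness term $|\xi_k(\bm\theta_k,\bm\epsilon_k)|$ already matches the first summand of the stated bound verbatim, the entire task reduces to bounding $|\check{ge}_k|$ by $\frac{c_0c_1}{n}\sqrt{\sum_{s=1}^k 2\delta_s\eta_s/\sigma_s^2}$.

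The first step is to pass from $\check{ge}_k$ to a KL divergence between the conditional laws $P_{\check{\bm\theta}_k|S,S'}$ and $P_{\check{\bm\theta}'_k|S,S'}$. Using Condition 2.1 to bound $|f|\leq c_0$ inside the density-difference integral that defines $\check{ge}_k$ yields $|\check{ge}_k|\leq 2c_0\,\mathbb E_{S,S'}D_{TV}(P_{\check{\bm\theta}'_k|S,S'},P_{\check{\bm\theta}_k|S,S'})$, and then Pinsker's inequality followed by Jensen's inequality reduces the task to controlling $\sqrt{\mathbb E\,D_{KL}(P_{\check{\bm\theta}_k|S,S'}\Vert P_{\check{\bm\theta}'_k|S,S'})}$. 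Choosing the random subset $J$ appropriately -- so that $S$ and $S'$ differ only on $J$ and the $|J|$ conditional-expected gradient gaps summed in $H_{J,s}$ produce a total normalized by $n$ -- will be what generates the $1/n$ factor out front of the square root.

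The second step is the Brownian interpolation encoded in the $\bm h_k'$ and $\bm\iota_t'$ notation introduced before the theorem. Rewriting the Gaussian noise as $\bm\epsilon_s=(\sigma_s/\sqrt{\eta_s})W'_{\eta_s}$ exhibits $\check{\bm\theta}'_s$ as the time-$\eta_s$ endpoint of a drift-diffusion $\bm h'_s-\bm\iota'_t$, and $\check{\bm\theta}_s$ similarly but with zero extra drift. Applying the chain rule for KL divergence across $s=1,\ldots,k$ and Girsanov's theorem on each increment bounds the per-iteration KL by $\int_0^{\eta_s}\frac{\eta_s}{2\sigma_s^2}\mathbb E\|\bm g_{S',s}(\bm\theta'_{s-1})-\bm g_{S,s}(\bm\theta'_{s-1})\|^2\,dt$. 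Projecting onto the conditional expectation given $(\bm\iota'_t,\bm h'_k,S,S')$ brings $H_{J,s}(\bm\iota'_t)$ into the integrand, and Condition 2.2 caps its norm by $c_1$, converting the drift-difference integral into a $c_1^2$-controlled time integral.

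The main obstacle I foresee is the precise form of $\delta_s=e^{-\eta_s^{1-\alpha_s}/(2(1-\alpha_s))}\int_0^{\eta_s}e^{u^{1-\alpha_s}/(2(1-\alpha_s))}du$, which is too specific to fall out of a crude bound. I expect it to emerge from a Gr\"onwall-type argument: the expectation of the squared conditional gradient gap along the diffusion path satisfies a first-order linear ODE of the form $V'(t)+\tfrac12 t^{-\alpha_s}V(t)=R(t)$, where the singular coefficient $t^{-\alpha_s}$ is introduced through a Young-type splitting of a cross term -- with $\alpha_s\in(0,1)$ being the free splitting exponent -- and the integrating-factor solution evaluated at $t=\eta_s$ is exactly $\delta_s$. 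Once this ODE step is carried out, tracking the constants through the Girsanov bound, summing over iterations, applying the Pinsker/Jensen reduction from the first step, and adding back $|\xi_k(\bm\theta_k,\bm\epsilon_k)|$ assemble the full bound claimed by the theorem.
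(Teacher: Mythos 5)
Your high-level architecture matches the paper's proof: the same decomposition $|ge_k|\le|\xi_k|+|\check{ge}_k|$, the passage from $\check{ge}_k$ to total variation via Condition 2.1 and then to KL via Pinsker, a continuous-time (Langevin/Brownian) interpolation between $\check{\bm\theta}_{k-1}$ and $\check{\bm\theta}_k$, a differential inequality in $t$ for the KL divergence, and an integrating-factor/Gr\"onwall step producing $\delta_s$. However, there are two concrete gaps that would prevent you from actually finishing.

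First, the dissipative term $-\tfrac{1}{2t}D_{KL}$ in the ODE does \emph{not} arise from Young/Cauchy--Schwarz splitting, and $\alpha_s$ is \emph{not} a free splitting exponent in a Young inequality. In the paper, the time derivative of $D_{KL}(P_{\bm\iota_t'}\Vert P_{\mathring{\bm\iota}_t})$ is computed from the two Fokker--Planck equations; Young's inequality is used only to split the cross term $\int F\odot\nabla\log\frac{P'}{P}\,P'$, and it merely halves the coefficient of the relative Fisher information (from $s_k^2/2$ to $s_k^2/4$). The crucial step that converts the Fisher-information term into a multiple of $D_{KL}$ itself is the logarithmic Sobolev inequality for the Gaussian reference measure $P_{\mathring{\bm\iota}_t}\sim N(0,s_k^2 t \bm I_d)$, with constant $1/(s_k^2 t)$; this yields the exact coefficient $-\tfrac{1}{2t}$. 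The exponent $\alpha_s$ then appears from the elementary weakening $-\tfrac{1}{2t}\le -\tfrac{1}{2t^{\alpha_s}}$ for $t\in(0,1)$, $\alpha_s\in(0,1)$, which is what makes the integrating factor $\exp\bigl(\tfrac{1}{2(1-\alpha_s)}t^{1-\alpha_s}\bigr)$ well-defined at $t=0$. Without the log-Sobolev inequality you have no mechanism to close the Gr\"onwall loop, and a Girsanov-style one-shot bound would only produce $\eta_s$ in place of $\delta_s$.

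Second, the $1/n$ factor does not come from ``choosing $J$ appropriately'' or from $|J|$ entries being ``normalized by $n$.'' In the paper it comes from the mini-batch sampling indicator: $\mathbb E\,\bm I_{k,j}=b/n$, and combined with the $1/b$ in $\bm g_{S_J,k}$ this gives $F_{J,k}=H_{J,k}/n$ exactly. You also implicitly need the per-step KL decomposition supplied by Lemma~1 (the convolution/data-processing bound) to reduce $D_{KL}(P_{\check{\bm\theta}_k'}\Vert P_{\check{\bm\theta}_k})$ to $D_{KL}(P_{\check{\bm\theta}_{k-1}'}\Vert P_{\check{\bm\theta}_{k-1}})$ plus the increment term; a generic ``chain rule for KL across $s$'' does not immediately give this because $\bm h_k$ and $\bm h_k'$ are deterministic functions of $\check{\bm\theta}_{k-1},\check{\bm\theta}_{k-1}'$ rather than independent increments.
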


	\begin{remark} \label{rem1.2}
		Our flatness term $\xi_{k, J}(\bm \theta_k,\bm \epsilon_k)$ shows consistent stability throughout iterations, unlike the rapidly increasing flatness term reported in \citet{wang2021}, as illustrated in Figure 1. This stability arises because {\it T2pm-SGD} incorporating smaller perturbations with $\check{\bm \theta}_{S,k}=\bm \theta_{S,k}+\bm \epsilon_k,$ whereas {\it T1pm-SGD} uses $\tilde{\bm \theta}_{S,k}={\bm \theta}_{S,k}+ \sum_{t=1}^k\bm \epsilon_t$. 
		The magnitude of $|\xi_{k, J}(\bm \theta_k,\bm \epsilon_k)|$ can be controlled by selecting appropriate noise levels. Specifically, given $$\xi_{k}(\bm \theta_k,\bm \epsilon_k)=\mathbb E[f(\bm X_i, \bm \theta_k')-f(\bm X_i, \bm \theta_k'+\bm \epsilon_k)+ f(\bm X_i, \bm \theta_k+\bm \epsilon_k)-f(\bm X_i, \bm \theta_k)],$$ applying a Taylor expansion yields $$\mathbb E[f(\bm X_i, \bm \theta_k+\bm \epsilon_k)-f(\bm X_i, \bm \theta_k)]=\dfrac{1}{2}\sigma^2_{k}\mathbb E\big(\triangle f(\bm \theta_k)\big)\big(1+o(1)\big),$$ where $\triangle$ denotes the Laplacian operator, defined as $\triangle f(x_1,\ldots,x_d)= \sum_{i=1}^d \dfrac{\partial^2 f}{\partial x_i^2},$ a second-order differential operator. If $\mathbb E\big(\triangle f(\bm \theta_k)\big)=O(1)$, then $\xi_{k, J}(\bm \theta_k,\bm \epsilon_k)=O(\sigma^2_{k}).$ 
	\end{remark}

	\begin{remark} In general, the step size $\eta_s$ plays a critical role in determining the generalization error bounds. Notably, $\delta_s$ is smaller than the step size $\eta_s$. Since $e^{\frac{1}{2(1-\alpha_s)}u^{1-\alpha_s}}\leq e^{\frac{1}{2(1-\alpha_s)}\eta_s^{1-\alpha_s}},\forall u\leq \eta_s$, it follows that $\delta_s=e^{-\frac{1}{2(1-\alpha_s)}\eta_s^{1-\alpha_s}}\int_{0}^{\eta_s }e^{\frac{1}{2(1-\alpha_s)}u^{1-\alpha_s}}du\leq \eta_s$.
	\end{remark}

	The complete proof of Theorem \ref{th-sub-1} is presented in Section 6.4. Below, we present a brief outline. Assuming the loss function is $c_0$-bounded and applying Pinsker's inequality, we derive that $\check{ge}_k\leq 2c_0D_{TV}(P_{\check{\bm{\theta}}'_k|S, S'}\|P_{\check{\bm{\theta}}_k|S, S'}) \leq c_0 \sqrt{2D_{KL}(P_{\check{\bm{\theta}}'_k|S, S'}\|P_{\check{\bm{\theta}}_k|S, S'})}.$ Next, we show that the KL divergence $D_{KL}(P_{\check{\bm{\theta}}'_k|S, S'}\|P_{\check{\bm{\theta}}_k|S, S'})$
	is bounded by $D_{KL}(P_{\check{\bm{\theta}}'_{k-1}|S, S'}\|P_{\check{\bm{\theta}}_{k-1}|S, S'})$ plus a small additional term of $\dfrac{\delta_k\eta_k}{n^2\sigma^2_{k}}c_1^2.$ This result, supported by Lemma 1 (see Section 6.1), is crucial. A key challenge is bounding the KL divergence between the densities $ P_{\bm \iota_{\eta_k}'|\bm h'_{k}, S, S'}$ and $ P_{\mathring{\bm \iota}_{\eta_k}|\bm h'_{k}, S, S'}$, where $ P_{\mathring{\bm \iota}_{\eta_k}|\bm h'_{k}, S, S'}$ represents the density of $s_kW_k$ given $\bm h'_{k}$, $S$, and $S'$. Both densities satisfy a Fokker-Planck equation, allowing us to derive the differential inequality needed for the bound. Finally, we use induction to establish the generalization error bound.

	As shown in Remark \ref{rem1.2}, we have $\mathbb E\left[f(\bm X_i, \bm \theta_k+\bm \epsilon_k)-f(\bm X_i, \bm \theta_k)\right]=\dfrac{1}{2}\sigma^2_{k} \mathbb E\big(\triangle f(\bm \theta_k)\big)\big(1+o(1)\big).$ If $\mathbb E\big(\triangle f(\bm \theta_k)\big)=O(1)$, then $\xi_{k}(\bm \theta_k,\bm \epsilon_k)=O(\sigma^2_{k})$.  Furthermore, if $\sigma_{k}=O(n^{-\gamma})$, then $\xi_{k}(\bm \theta_k,\bm \epsilon_k)=O(n^{-2\gamma}).$ These results are formally presented in Theorem \ref*{th-sub-1.1}, with the proof omitted for brevity.
	
	\begin{theorem}\label{th-sub-1.1} Under the assumptions of Conditions 2.1 and 2.2, and given that $\mathbb E\big(\triangle f(\bm{\theta}_k)\big)=O(1)$, if $\sigma_{s}=O(n^{-\gamma})$ for all $s \leq k$ with some positive constant $\gamma$, the generalization error satisfies \[ge_k=O(n^{-2\gamma})+O(n^{-(1-\gamma)}). \]
		Specifically, choosing $\gamma=1/3$ results in the fastest decay rate, yielding \[ge_k=O(n^{-2/3}).\]
		Moreover, if the variance $\sigma_{k}^2$ is iteration-dependent, such that $\sigma_{k}=O(h_{2k}n^{-1/3})$ with $h_{2k}$ depending on $k$ (for example, $h_{2k}=\sqrt{\eta_k}),$ then $\xi_{k}(\bm \theta_k,\bm \epsilon_k)=O(h^2_{2k}n^{-2/3}).$ In this scenario, the generalization error becomes
		\[ge_k=n^{-2/3}\left[O( h^2_{2k})+O\left(\sqrt{\displaystyle \sum_{s=1}^k \dfrac{\delta_s\eta_s}{h^2_{2s}}}\right)\right].\]
	\end{theorem}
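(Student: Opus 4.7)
The plan is to obtain Theorem \ref{th-sub-1.1} as a direct consequence of Theorem \ref{th-sub-1} combined with the rate analysis of the flatness term already sketched in Remark \ref{rem1.2}. The bound of Theorem \ref{th-sub-1} has two summands: a flatness term $|\xi_k(\bm \theta_k,\bm \epsilon_k)|$ and a trajectory term $\frac{c_0 c_1}{n}\sqrt{\sum_{s=1}^{k} 2\delta_s\eta_s/\sigma_s^2}$. I would handle these two pieces separately, then combine and optimize in $\gamma$.

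First, for the flatness term, I would expand $f(\bm X_i,\bm \theta+\bm \epsilon_k)$ in a second-order Taylor series around $\bm \theta$. Writing
\[
f(\bm X_i,\bm \theta+\bm \epsilon_k)=f(\bm X_i,\bm \theta)+\bm \epsilon_k^{\top}\nabla f(\bm X_i,\bm \theta)+\tfrac12\bm \epsilon_k^{\top}\nabla^2 f(\bm X_i,\bm \theta)\bm \epsilon_k+r_k(\bm \theta,\bm \epsilon_k),
\]
and using $\bm \epsilon_k\sim N(0,\sigma_k^2\bm I_d)$ with $\mathbb E\bm \epsilon_k=0$, $\mathbb E[\bm \epsilon_k\bm \epsilon_k^{\top}]=\sigma_k^2\bm I_d$, independence of $\bm \epsilon_k$ from $\bm \theta_k,\bm \theta_k'$, and $\mathrm{tr}[\nabla^2 f(\bm X_i,\bm \theta)]=\triangle f(\bm \theta)$, the linear term vanishes in expectation and the quadratic term becomes $\tfrac12\sigma_k^2\mathbb E\triangle f(\bm \theta)$. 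Applying this to both $\bm \theta_k$ and $\bm \theta_k'$, and using the fact that $\bm \theta_k$ and $\bm \theta_k'$ are identically distributed (Remark \ref{r3}), the two quadratic contributions coincide in expectation; the remainder is $o(\sigma_k^2)$ under the mild smoothness of $f$. Hence $\xi_k(\bm \theta_k,\bm \epsilon_k)=\sigma_k^2\,\mathbb E\triangle f(\bm \theta_k)(1+o(1))$, and the hypothesis $\mathbb E\triangle f(\bm \theta_k)=O(1)$ yields $|\xi_k(\bm \theta_k,\bm \epsilon_k)|=O(\sigma_k^2)$.

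Second, for the trajectory term, substituting $\sigma_s^2=O(n^{-2\gamma})$ gives
\[
\frac{c_0 c_1}{n}\sqrt{\sum_{s=1}^{k}\frac{2\delta_s\eta_s}{\sigma_s^2}}=\frac{1}{n}\cdot O(n^{\gamma})\sqrt{\sum_{s=1}^{k}\delta_s\eta_s}=O\!\left(n^{-(1-\gamma)}\right),
\]
treating $\sum_{s=1}^{k}\delta_s\eta_s$ as $O(1)$ as is standard for decaying schedules (and using $\delta_s\le\eta_s$ from the preceding remark). Combined with $|\xi_k|=O(n^{-2\gamma})$, Theorem \ref{th-sub-1} delivers $ge_k=O(n^{-2\gamma})+O(n^{-(1-\gamma)})$. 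Balancing the exponents by solving $2\gamma=1-\gamma$ gives the optimal choice $\gamma=1/3$ and the rate $O(n^{-2/3})$. For the iteration-dependent variance, substituting $\sigma_s^2=O(h_{2s}^2 n^{-2/3})$ into the same two pieces yields $|\xi_k|=O(h_{2k}^2 n^{-2/3})$ and trajectory
\[
\frac{1}{n}\sqrt{\sum_{s=1}^{k}\frac{\delta_s\eta_s}{h_{2s}^2}\,n^{2/3}}=n^{-2/3}\sqrt{\sum_{s=1}^{k}\frac{\delta_s\eta_s}{h_{2s}^2}},
\]
which, factoring $n^{-2/3}$, produces the stated expression.

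The main obstacle is the Taylor-expansion step for the flatness term: one must justify the $(1+o(1))$ remainder uniformly in $\bm \theta_k$, which formally requires a third-order control or Lipschitz Hessian condition on $f$ that is not explicitly listed among the stated conditions. I would proceed by invoking the standard smoothness assumptions used implicitly in the previous development and bounding the remainder by $\sigma_k^3\mathbb E\|\nabla^3 f\|$; everything else amounts to substitution and balancing exponents, which is why the authors omit the proof.
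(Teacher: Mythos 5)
Your proposal mirrors the paper's own (omitted) proof, which the paper sketches in Remark~\ref{rem1.2} and in the paragraph immediately preceding Theorem~\ref{th-sub-1.1}: apply the two-term bound of Theorem~\ref{th-sub-1}, use a second-order Taylor expansion together with $\mathbb{E}\big(\triangle f(\bm\theta_k)\big)=O(1)$ to get $|\xi_k(\bm\theta_k,\bm\epsilon_k)|=O(\sigma_k^2)$, substitute $\sigma_s=O(n^{-\gamma})$ into the trajectory term (treating $\sum_s\delta_s\eta_s$ as bounded), and balance exponents at $\gamma=1/3$, then repeat with $\sigma_s=O(h_{2s}n^{-1/3})$ for the iteration-dependent case. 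Your caveat that the $(1+o(1))$ remainder tacitly requires third-order regularity of $f$ not listed among the stated conditions is valid, but it applies equally to the paper's own sketch, so you are in the same position as the authors.
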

	
	\begin{remark}  \citet{wang2021} derived a bound of order $O\big((nb)^{-1/3}\big)$ in their Theorem 2, assuming that $\sigma^2_{k}$ is independent of $k$. Our results improve this to $O\big(n^{-2/3}\big)$ under the $c_0$-bounded condition. 
	\end{remark}

	We consider a tight upper bound on the generalization error using isotropic Gaussian perturbations. To generalize these results for SGD with the \textit{T2pm-SGD} rule, we now consider a more general form of perturbation, where
	$\bm \epsilon_i\sim N(0,\bm \Sigma_i)$ for $i=1,\ldots,k$.
	Let $\underline{\lambda}_{i}$ and $\overline{\lambda}_{i}$ denote the smallest and largest eigenvalues of the matrix $\bm \Sigma_i$, respectively.
	We then obtain the following result.
	
	\begin{theorem}\label{th-sub-2}
		Under Conditions 2.1 and 2.2, if $\bm \epsilon_s\sim N(0,\bm \Sigma_s)$ for $s=1,\ldots,k$, the absolute value of the generalization error $|ge_k|$ is bounded by
		\begin{eqnarray}\label{bound-th3.2-0}
			|\xi_{k}(\bm \theta_k,\bm \epsilon_k)|+\dfrac{c_0c_1}{n}\sqrt{\displaystyle \sum_{s=1}^k\dfrac{2\zeta_s\eta_s}{\underline{\lambda}_{s}} },
		\end{eqnarray}
		where $\zeta_s=e^{-\frac{\underline{\lambda}_s}{2\overline{\lambda}_s(1-\alpha_s)}\eta_s ^{1-\alpha_s}}\int_{0}^{\eta_s }e^{\frac{\underline{\lambda}_s}{2\overline{\lambda}_s(1-\alpha_s)}u^{1-\alpha_s}}du$ with $ \alpha_s\in (0,1)$.
	\end{theorem}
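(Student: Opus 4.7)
The approach is to replicate the proof of Theorem \ref{th-sub-1}, adapting each step from the isotropic noise $\sigma_s^2\bm I_d$ to the general anisotropic covariance $\bm\Sigma_s$, and tracking how the extreme eigenvalues $\underline\lambda_s$ and $\overline\lambda_s$ enter the resulting bound.

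First, I would retain the decomposition $|ge_k|\leq|\xi_k(\bm\theta_k,\bm\epsilon_k)|+|\check{ge}_k|$, which holds irrespective of the noise model. Because $f$ is $c_0$-bounded by Condition 2.1, Pinsker's inequality yields the same chain
\[|\check{ge}_k|\leq 2c_0\, D_{TV}\big(P_{\check{\bm\theta}'_k|S,S'}\|P_{\check{\bm\theta}_k|S,S'}\big)\leq c_0\sqrt{2\,D_{KL}\big(P_{\check{\bm\theta}'_k|S,S'}\|P_{\check{\bm\theta}_k|S,S'}\big)}\]
used to derive \eqref{bound1}, so the entire task reduces to bounding the KL divergence between the two conditional densities.

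The heart of the proof is a one-step recursion for this KL divergence, analogous to Lemma 1 in the isotropic setting. I would interpolate between the densities via the same auxiliary process $\bm\iota'_t=t[\bm g_{S',k}(\bm\theta'_{k-1})-\bm g_{S,k}(\bm\theta'_{k-1})]+W'_t$, except now $W'_t$ is Brownian motion with diffusion matrix $\bm\Sigma_s$ in place of $\sigma_s^2\bm I_d$. Differentiating the KL divergence along $t\in[0,\eta_k]$ through the Fokker-Planck equation for $\bm\iota'_t$ produces a differential inequality whose source term features the quadratic form $H_{J,s}(\bm\iota'_t)^{\top}\bm\Sigma_s^{-1}H_{J,s}(\bm\iota'_t)$, bounded above by $c_1^2/\underline\lambda_s$ through Condition 2.2, while the dissipation piece is bounded below by a factor of $\underline\lambda_s/\overline\lambda_s$ times the $t^{-\alpha_s}$ decay used for the isotropic case.

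The main obstacle is solving this differential inequality in the anisotropic setting. The integrating factor $e^{\frac{1}{2(1-\alpha_s)}u^{1-\alpha_s}}$ from the proof of Theorem \ref{th-sub-1} is replaced by $e^{\frac{\underline\lambda_s}{2\overline\lambda_s(1-\alpha_s)}u^{1-\alpha_s}}$, because the contraction coefficient is now rescaled by the worst-case eigenvalue ratio $\underline\lambda_s/\overline\lambda_s$. This is exactly what produces $\zeta_s$ in place of $\delta_s$; the remaining $1/\underline\lambda_s$ prefactor is inherited from the upper bound on the source term. Unfolding the one-step recursion over $s=1,\ldots,k$, substituting into Pinsker's inequality, and adding back the flatness term $|\xi_k(\bm\theta_k,\bm\epsilon_k)|$ yields the claimed bound \eqref{bound-th3.2-0}. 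As in the isotropic case, $\zeta_s\leq\eta_s$ follows by the same monotonicity argument applied to the exponential integrand, so the bound is nontrivial and collapses to Theorem \ref{th-sub-1} when $\bm\Sigma_s=\sigma_s^2\bm I_d$ (so that $\underline\lambda_s=\overline\lambda_s=\sigma_s^2$ and $\zeta_s=\delta_s$).
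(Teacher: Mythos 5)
Your proposal is correct and follows essentially the same route as the paper's proof: interpolating via the anisotropic diffusion process, applying Fokker--Planck plus the logarithmic Sobolev inequality (whose constant now carries the ratio $\underline\lambda_s/\overline\lambda_s$), and integrating the resulting differential inequality to obtain the modified factor $\zeta_s$ with the $1/\underline\lambda_s$ prefactor coming from the bound on the drift term. The reduction to Theorem~\ref{th-sub-1} when $\bm\Sigma_s=\sigma_s^2\bm I_d$ that you note is a useful sanity check the paper leaves implicit.
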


	\begin{remark}
		The upper bound in (\ref{bound-th3.2-0}) presents two main advantages. First, the trajectory term achieves a faster convergence rate of $O(n^{-1})$, improving upon the $O\big((nb)^{-1/2}\big)$ rate reported by \citet{neu2021} under the $c_0$-bounded condition.  Second, the flatness term $|\xi_{k, J}(\bm \theta_k,\bm \epsilon_k)|$ remains stable throughout iterations and is smaller than those in \citet{neu2021} and \citet{wang2021}, as discussed in Remark \ref{r3}. 
		
	\end{remark}
	
	\begin{remark}\label{rm11}
		Note that $\mathbb E[f(\bm X_i, \bm \theta_k+\bm \epsilon_k)-f(\bm X_i, \bm \theta_k)]=\dfrac{1}{2}\mathbb E\big[\bm \epsilon_k^{\top} \big(\bigtriangledown^2 f(\bm \theta_k)\big)\bm \epsilon_k\big]\big(1+o(1)\big)$, which is $\dfrac{1}{2}\mathrm{tr}\left(\bigtriangledown^2 f(\bm \theta_k)\mathbb E(\bm \epsilon_k\bm \epsilon_k^{\top})\right)\big(1+o(1)\big).$ Since $\bm \Sigma_k=\mathbb E(\bm \epsilon_k\bm \epsilon_k^{\top})$, we obtain 
		\[\dfrac{1}{2}\mathrm{tr}\big(\bigtriangledown^2 f(\bm \theta_k)\mathbb E\bm \epsilon_k\bm \epsilon_k^{\top}\big)=\dfrac{1}{2}\mathrm{tr}\big(\bigtriangledown^2 f(\bm \theta_k)\bm \Sigma_k\big). \]
		Since $\bm \Sigma_k$ is positive definite, we have the following inequality
		\[\underline{\lambda}_{k}\big|\mathrm{tr}\big(\mathbb E\bigtriangledown^2 f(\bm \theta_k)\big)\big| \leq \big|\mathrm{tr}\big(\mathbb E\bigtriangledown^2 f(\bm \theta_k)\bm \Sigma_k\big)\big|\leq \overline{\lambda}_{k} \big|\mathrm{tr}\big(\mathbb E\bigtriangledown^2 f(\bm \theta_k)\big)\big|.\]
		Thus, if $\overline{\lambda}_{k}=O(n^{-2\gamma})$ and $\mathbb E\big(\triangle f(\bm \theta_k)\big)=O(1)$, we have $\xi_{k}(\bm \theta_k,\bm \epsilon_k)=O(n^{-2\gamma}).$
		Consequently, the generalization error can be expressed as \[ge_k=O(n^{-2\gamma})+O(n^{-(1-\gamma)}). \]
		Setting $\gamma = 1/3$ results in the fastest decay rate, leading to \[ge_k=O(n^{-2/3}).\]
		
	\end{remark}

\section{Clipped SGD and Its Generalization Error Bounds}

SGD is a cornerstone algorithm in machine learning, widely used for training deep neural networks. However, it faces significant challenges, particularly in the presence of noisy gradients, such as gradient explosion or instability caused by heavy-tailed noise, which can hinder convergence \citep{zhang2020}. Recent studies have emphasized the importance of adaptive algorithms in addressing these issues \citep{nguyen2023}. An effective technique is gradient clipping, a simple yet powerful method that stabilizes the training process by controlling the gradient update.

Gradient clipping controls the gradient update, preventing divergence due to excessively large steps, particularly when gradients are unbounded or do not satisfy Lipschitz continuity \citep{Mai2021}. In such scenarios, there is a significant demand to investigate the corresponding generalization error, as the stability of clipped SGD remains underexplored. 
The theoretical foundation for analyzing clipped SGD lies in understanding how it behaves in relation to the generalization error of machine learning models. In particular, we are interested in the upper bound for the generalization error when gradient clipping is applied.

Gradient clipping restricts gradient updates to a specified threshold, preventing large gradients from dominating the update process. Specifically, in the \( k^{\text{th}} \) iteration, the clipped gradient is defined as
\[
{\tilde{\bm g}}_{S,k}(\bm{\theta}_{S,k-1}) = \min\left(1, \frac{A}{\|\bm{g}_{S,k}\|_2}\right) \bm{g}_{S,k},
\]
where \( \bm{g}_{S,k} \) is the gradient at the \( k^{\text{th}} \) iteration, \( \|\bm{g}_{S,k}\|_2 \) denotes its \( L_2 \)-norm, and \( A \) is a predefined clipping threshold. The update rule for clipped stochastic gradient descent (SGD) at the \( k^{\text{th}} \) iteration is given by
\[
\bm{\theta}_{S,k} = \bm{\theta}_{S,k-1} - \eta_k \bm{\tilde{g}}_{S,k}(\bm{\theta}_{S,k-1}),
\]
where \( \eta_k \) denotes the learning rate at iteration \( k \). The function \( \bm{\tilde{g}}_{S,k} \) effectively caps the gradient magnitude, preventing it from becoming excessively large. For simplicity, we retain the notation from previous sections.

Previous sections typically assume that the gradient of the loss function has bounded first-order or second-order moments, which are essential for deriving tight generalization bounds, as discussed in Remark \ref{rem08}. Specifically, Conditions 1.2 and 2.2 ensure bounded second-order moments for sub-Gaussian and bounded losses, respectively. To broaden the applicability of our results, we establish generalization error bounds without relying on these restrictive conditions. We assume only that the loss function is \( R \)-sub-Gaussian (Condition 1.1) or \( c_0 \)-bounded (Condition 2.1), thereby enabling the direct derivation of generalization bounds for clipped SGD.


\begin{theorem} \label{2th-sub-1.2} 
	Under Condition 1.1, the absolute value of the generalization error \( |ge_k| \) for clipped SGD is bounded by
	\[
	|ge_k| \leq |\xi_{k}(\bm \theta_k, \bm \epsilon_k)| + \sqrt{\dfrac{R^2d}{n} \sum_{s=1}^{k} \log\left(1 + \dfrac{A^2 \eta_s^2}{\sigma_s^2}\right)}.
	\]	
	In perturbations are modeled as \( \bm \epsilon_k \sim N(0, \bm \Sigma_k) \), then we have	
	\[
	|ge_k| \leq |\xi_{k}(\bm \theta_k, \bm \epsilon_k)| + \sqrt{\dfrac{R^2d}{n} \sum_{s=1}^{k} \log\left(1 + \eta_s^2 \dfrac{A^2}{|\Sigma_s|^{1/d}}\right)}.
	\]
\end{theorem}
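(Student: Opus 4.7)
The plan is to obtain this result as a corollary of Theorem \ref{th-sub-1.2} (and Theorem \ref{th-sub-3.2} for the anisotropic version), by observing that the clipping operation enforces a deterministic upper bound on the conditional variance of the stochastic gradient, which simultaneously validates Condition 1.2 automatically and permits a direct substitution inside the trajectory term.

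First, I would verify that replacing $\bm g_{S,k}$ by the clipped gradient $\tilde{\bm g}_{S,k}$ throughout the derivation of Theorem \ref{th-sub-1.2} is harmless: clipping is a pointwise deterministic Borel transform of $\bm g_{S,k}$, so all conditional-independence and measurability structures underlying the earlier theorem transfer verbatim, and Condition 1.1 on the loss is unaffected. The key quantitative observation is that, by construction, $\|\tilde{\bm g}_{S,k}\|_2 \leq A$ almost surely, so using $\mathrm{tr}(\mathrm{Cov}(\bm Y)) = \mathbb E\|\bm Y\|_2^2 - \|\mathbb E\bm Y\|_2^2 \leq \mathbb E\|\bm Y\|_2^2$, the analog of $\check{\upsilon}_s$ in the clipped setting satisfies
\[
\check{\upsilon}_{s|S_{J^c},\bm\epsilon_{s-1},\bm L_J} \leq A^2 \quad \text{a.s.},
\]
which in particular makes Condition 1.2 automatic with constant $A^2$.

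Next, I would take $J = \{1,\ldots,n\}$ in (\ref{eq-th4}), so that $|J| = n$ and $L_{Js} = 1$ for every $s$. Monotonicity of $\log(1+\cdot)$ together with the almost-sure bound above yields
\[
\mathbb E_{\bm\epsilon_{s-1}|S_{J^c},\bm L_J}\log\Bigl(1 + \frac{\eta_s^2\,\check{\upsilon}_s}{d\sigma_s^2}\Bigr) \leq \log\Bigl(1 + \frac{\eta_s^2 A^2}{d\sigma_s^2}\Bigr),
\]
and the outer expectation $\mathbb E_{S_{J^c},\bm L_J}$ is then trivial because its integrand is deterministic. Applying the elementary inequality $\log(1 + x/d) \leq \log(1+x)$ for $d \geq 1$ yields the first claim. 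For $\bm\epsilon_s \sim N(0,\bm\Sigma_s)$, the identical two-step argument applied to the bound in Theorem \ref{th-sub-3.2}, with $d\sigma_s^2$ replaced by $d|\bm\Sigma_s|^{1/d}$, produces the second claim.

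The main obstacle here is bookkeeping rather than analysis: one must pin down the precise form of $\check{\upsilon}_s$ once $\bm g$ is replaced by $\tilde{\bm g}$—in particular, whether clipping is applied before or after restricting to the $S_J$ portion of the minibatch—and confirm that in either convention the almost-sure bound by $A^2$ survives because clipping is a nonexpansive radial rescaling. Everything else (the $\log$ monotonicity, the collapse of the nested expectations, and the propagation through the square root) is immediate given the machinery already developed for Theorems \ref{th-sub-1.2} and \ref{th-sub-3.2}.
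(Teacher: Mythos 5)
Your proposal is correct, and it is in fact tighter and cleaner than the paper's own sketch. Both routes rest on the same core observation: the deterministic clipping bound $\|\tilde{\bm g}\|_2 \le A$ forces $\check{\upsilon}_{s} \le \mathbb E\|\tilde{\bm g}\|^2 \le A^2$ almost surely, which simultaneously supplies Condition~1.2 for free and permits the monotonicity-of-$\log$ substitution inside the trajectory term. Where the two diverge is the choice of $J$. The paper's Part~1 works from (\ref{eq-th4.1}) with the singleton $J=\{j\}$, invoking $P(L_{Js}=1)=b/n$; traced literally, that produces a trajectory term of the form $\sqrt{\frac{b R^2 d}{n}\sum_s \log\bigl(1+\frac{\eta_s^2 A^2}{d\sigma_s^2}\bigr)}$, which carries an extra $\sqrt{b}$ factor relative to the stated theorem, and one must also interpret in what sense the $S_{\{j\}}$-restricted piece of a minibatch-jointly-clipped gradient has norm at most $A$ --- the scaling factor $\min(1,A/\|\bm g_{S,k}\|)$ is determined by the full minibatch, so the $j$-th summand alone need not be $A$-bounded, which is exactly the bookkeeping pitfall you flagged. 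By instead taking $J=\{1,\ldots,n\}$ in (\ref{eq-th4}), you make $\bm g_{S_J,s}$ coincide with the entire minibatch gradient $\bm g_{S,s}$, so the clipping bound applies without any interpretive fuss, $L_{Js}\equiv 1$, the inner expectation's integrand becomes deterministic, and $\log(1+x/d)\le\log(1+x)$ for $d\ge 1$ lands exactly on the claimed bound; the anisotropic case is obtained identically from Theorem~\ref{th-sub-3.2} with $\sigma_s^2$ replaced by $|\bm\Sigma_s|^{1/d}$. In short, you anticipated the one genuine subtlety and eliminated it by a judicious choice of $J$, arriving at a derivation that reproduces the theorem statement verbatim where the paper's is slightly loose.
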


\begin{remark}\label{rm_clip1}
    This result, based solely on Condition 1.1 (where the loss function is \(R\)-sub-Gaussian), demonstrates that generalization error bounds can still be derived without the need for assumptions such as bounded second-order gradients. The bound further illustrates that gradient clipping effectively mitigates the impact of large gradients on the final generalization error.
\end{remark}

For loss functions that are \(c_0\)-bounded, we can similarly derive bounds for the generalization error under clipped SGD. 

\begin{theorem} \label{th-c0-bound} 
	Under Condition 2.1, the absolute value of the generalization error \( |ge_k| \) for clipped SGD is bounded by	
	\[
	|ge_k| \leq |\xi_{k}(\bm{\theta}_k, \bm{\epsilon}_k)| + 2c_0 \sqrt{\sum_{j=1}^{k} \dfrac{2\delta_j A^2 b^2 \eta_j}{n^2 \sigma_j^2}}.
	\]
    If perturbations take the form \( \bm{\epsilon}_k \sim N(0, \bm{\Sigma}_k) \), then the bound becomes
    \[
	|ge_k| \leq |\xi_{k}(\bm{\theta}_k, \bm{\epsilon}_k)| + 2c_0 \sqrt{\sum_{s=1}^{k} \dfrac{2\zeta_s A^2 b^2 \eta_s}{n^2 \underline{\lambda}_s}}.
	\]	
\end{theorem}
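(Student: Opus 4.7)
The plan is to mirror the derivation of Theorems \ref{th-sub-1} and \ref{th-sub-2} almost verbatim, the only change being that gradient clipping supplies the required gradient-norm bound for free, so Condition 2.2 is no longer needed. I would start from the standing decomposition $|ge_k| \leq |\xi_k(\bm \theta_k,\bm \epsilon_k)| + |\check{ge}_k|$ established in Section 2.2, which is valid for any update rule and already isolates the flatness term. I would then apply the $c_0$-boundedness (Condition 2.1) together with Pinsker's inequality to obtain $|\check{ge}_k| \leq c_0 \sqrt{2\, D_{KL}(P_{\check{\bm \theta}_k'|S,S'} \,\|\, P_{\check{\bm \theta}_k|S,S'})}$, exactly as in the proof outline of Theorem \ref{th-sub-1}.

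Next, I would invoke the Fokker-Planck-based recursive KL bound (Lemma 1 of Section 6.1), now with the clipped batch gradient $\tilde{\bm g}_{S,k}$ playing the role of $\bm g_{S,k}$ in the SDE interpolation. In the derivation of Theorem \ref{th-sub-1}, the per-iteration increment of $D_{KL}$ was controlled by $c_1^2/n^2$ times $\delta_k\eta_k/\sigma_k^2$, with $c_1$ arising from the bound on $\|H_{J,s}\|$ furnished by Condition 2.2. Under clipping, the crucial observation is that $\|\tilde{\bm g}_{S',k}(\bm \theta) - \tilde{\bm g}_{S,k}(\bm \theta)\| \leq 2A$ holds deterministically, since each clipped batch gradient has norm at most $A$. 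Because $H_{J,s}$ sums the unscaled per-sample gradients over a mini-batch of size $b$, unrolling the $b$-averaged clipped batch gradient into this sum yields the effective value $c_1 = 2Ab$. Substituting into the bound of Theorem \ref{th-sub-1} gives $\frac{2c_0 Ab}{n}\sqrt{\sum_{j=1}^k \tfrac{2\delta_j \eta_j}{\sigma_j^2}} = 2c_0\sqrt{\sum_{j=1}^k \tfrac{2\delta_j A^2 b^2 \eta_j}{n^2\sigma_j^2}}$, which is the first stated inequality. The anisotropic case $\bm \epsilon_k \sim N(0,\bm \Sigma_k)$ then follows by applying the identical substitution $c_1 \mapsto 2Ab$ inside the bound of Theorem \ref{th-sub-2}, yielding the claim with $\zeta_s/\underline{\lambda}_s$ in place of $\delta_s/\sigma_s^2$.

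The main obstacle will be verifying the exact $b$-dependence of the substitution $c_1 \mapsto 2Ab$. The clipping constraint bounds the mini-batch average $\tilde{\bm g}_{S,k}$, whereas the Fokker-Planck step controls $H_{J,s}$, which is an unnormalized per-sample sum along the interpolation path between $\bm g_{S,k}$ and $\bm g_{S',k}$. One must retrace the relevant lines of the proof of Theorem \ref{th-sub-1} under the new gradient rule to confirm that a single factor of $b$ (rather than $\sqrt{b}$, $1/b$, or none at all) emerges from unrolling the batch average, and that the triangle-inequality factor of $2$ is not already implicit in some earlier step of the original derivation. Once this bookkeeping is settled, the two stated inequalities follow immediately from Theorems \ref{th-sub-1} and \ref{th-sub-2}, respectively.
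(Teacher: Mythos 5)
Your proposal matches the paper's proof in every structural respect: same decomposition $|ge_k|\le|\xi_k(\bm\theta_k,\bm\epsilon_k)|+|\check{ge}_k|$, same use of Pinsker's inequality under $c_0$-boundedness, the same Fokker--Planck recursion from Theorems~\ref{th-sub-1} and~\ref{th-sub-2} with clipped gradients inserted into the Langevin interpolation, and the same final substitution giving the stated bounds. The conclusion that the drift term $F_{J,k}$ satisfies $\|F_{J,k}\|\le 2Ab/n$ (so that $c_1\mapsto 2Ab$ ``effectively'') is exactly what the paper establishes.

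On the one subtlety you rightly flag --- where the single factor of $b$ comes from --- your proposed mechanism is not quite the right one, though the answer is. You suggest the $b$ emerges from ``unrolling the $b$-averaged clipped batch gradient'' into a per-sample sum $H_{J,s}$. But clipping is a nonlinear operation applied to the whole batch average, so $\tilde{\bm g}_{S,k}$ does not decompose into per-sample contributions and no analogue of $H_{J,k}$ appears. The paper instead bounds $F_{J,k}$ directly: if sample $j$ is not in batch $k$ then $\tilde{\bm g}_{S',k}=\tilde{\bm g}_{S,k}$, so conditioning on the inclusion indicator $\bm I_{k,j}$ with $\mathbb E(\bm I_{k,j})=b/n$ gives $F_{J,k}=\tfrac{b}{n}\,\mathbb E[\tilde{\bm g}_{S',k}-\tilde{\bm g}_{S,k}\mid \bm I_{k,j}=1,\dots]$, and the triangle inequality with $\|\tilde{\bm g}\|\le A$ yields $\|F_{J,k}\|\le 2Ab/n$. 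In the unclipped Theorem~\ref{th-sub-1} the $1/b$ prefactor of the batch average cancelled the $b/n$ inclusion probability to leave $1/n$; under clipping that $1/b$ is gone (absorbed into the nonlinear clipping), so the $b/n$ survives intact. Once this is understood, your bookkeeping concern resolves, and the rest of your argument --- including the anisotropic case via Theorem~\ref{th-sub-2} with $\zeta_s/\underline{\lambda}_s$ in place of $\delta_s/\sigma_s^2$ --- goes through exactly as you outline.
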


\begin{remark}\label{rm_clip2}
 This result shows that for bounded loss, we can still derive a generalization error upper bound without additional conditions on the gradient, such as smoothness or Lipschitz continuity.
Clipped SGD acts as regularization by capping gradient updates, reducing the impact of large gradients caused by noisy or outlier data points \citep{wu2023}. This is especially useful when gradients follow heavy-tailed distributions or other non-standard behaviors.
In deep neural networks or models prone to unstable updates, clipped SGD stabilizes optimization and ensures more controlled updates, whereas standard SGD may fail due to large gradients.   
\end{remark}

The theoretical results presented in this section are supported by the empirical results shown in Figure 2, which provide insights into the practical behavior of clipped SGD.

	\section{Experiment}
	We evaluate the properties of the proposed bounds for stochastic gradient descent (SGD) using two neural network architectures: a three-layer multilayer perceptron (MLP) on the MNIST dataset and the convolutional network AlexNet on the CIFAR-10 dataset. The detailed network parameters are provided in the Appendix.  	
	For both datasets, we utilize the cross-entropy loss function implemented in the ``torch.nn" library to train the models. The learning rates are set to 0.01 for the MLP and 0.001 for AlexNet, respectively. 
	
First, we compute the generalization error bounds derived using the {\it T2pm-SGD} method, as presented in Theorem 4, and also provide the corresponding bounds using the {\it T1pm-SGD} method for comparison. For simplicity, the constant \(R\) in both bounds is set to 1. These evaluations are conducted on the MNIST and CIFAR-10 datasets. The perturbations are generated as Gaussian noise with a standard deviation of \(\sigma_{k} = 0.005\). The results are reported in Figure \ref{fig1}.

\begin{figure}[htbp]
	\centering
	\begin{subfigure}[b]{0.48\textwidth} 
		\centering
		\includegraphics[width=\textwidth]{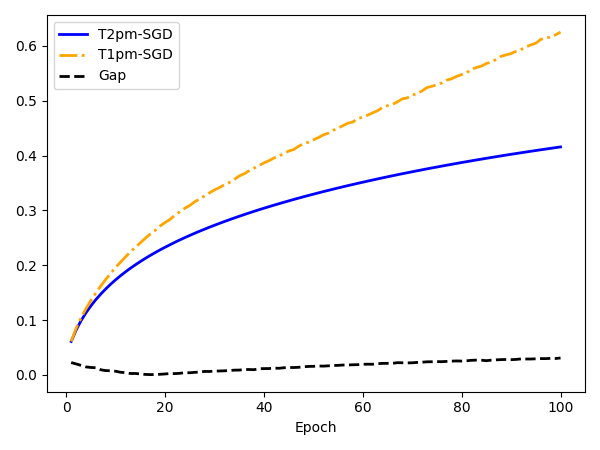}
		\caption{MNIST}
		\label{fig:sub1}
	\end{subfigure}
	\hfill
	\begin{subfigure}[b]{0.48\textwidth} 
		\centering
		\includegraphics[width=\textwidth]{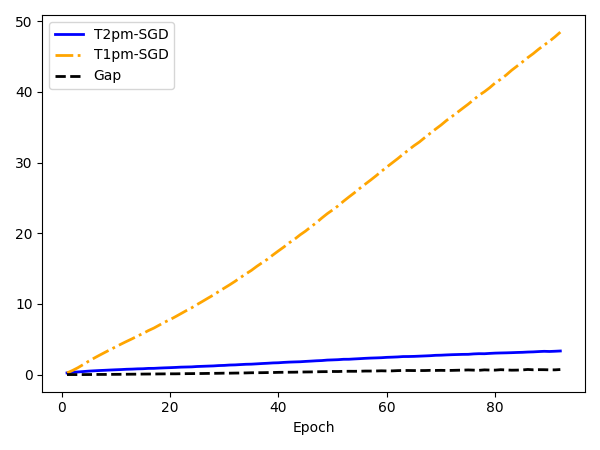}
		\caption{CIFAR-10}
		\label{fig:sub2}
	\end{subfigure}
	\caption{Comparison of generalization error bounds: {\it T2pm-SGD} vs.~{\it T1pm-SGD} on MNIST (left panel) and CIFAR-10 (right panel). The empirical gap is defined as the difference between the training loss and the testing loss.}
	\label{fig1}
\end{figure}

Figure \ref{fig1} shows that the generalization error bound obtained using {\it T2pm-SGD} (\(\check{\bm \theta}_k\)) is consistently lower than that obtained using {\it T1pm-SGD} (\(\tilde{\bm \theta}_k\)) and more closely aligns with the shape of the error gap. In contrast, the bound for {\it T1pm-SGD} increases steadily with each epoch, while the bound for {\it T2pm-SGD} remains stable throughout the training process. This behavior aligns with the design characteristics of the two methods.

\begin{figure}[htbp]
	\centering
	\begin{subfigure}[b]{0.51\textwidth} 
		\centering
		\includegraphics[width=\textwidth]{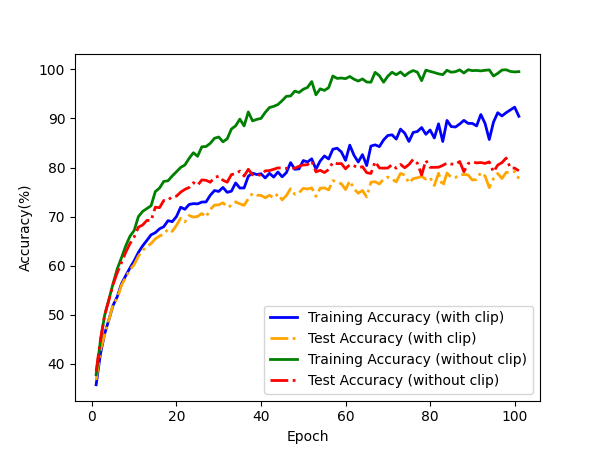}
		\caption{Accuracy}
		\label{fig:sub2}
	\end{subfigure}
	\hfill
	\begin{subfigure}[b]{0.46\textwidth} 
		\centering
		\includegraphics[width=\textwidth]{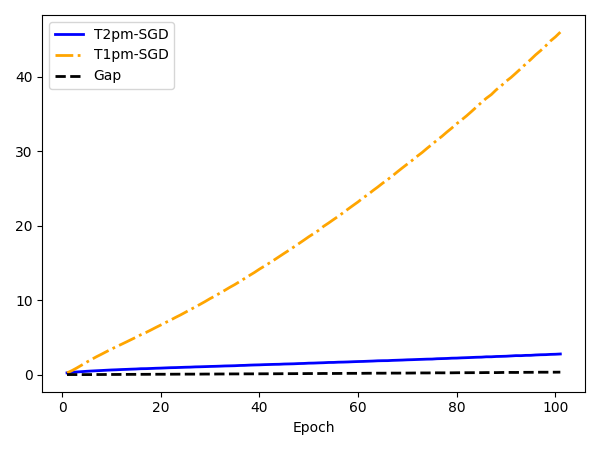}
		\caption{Bound for clipped SGD}
		\label{fig:sub1}
	\end{subfigure}
	\caption{Accuracy (left panel) and generalization error bounds (right panel) for clipped SGD during training on CIFAR-10.}
	\label{fig4}
\end{figure}

Next, we further examine the clipped SGD method, a widely used approach to prevent gradient explosion and improve model stability during training. In this experiment, we apply gradient clipping with a threshold of 5 to the CIFAR-10 dataset. As illustrated in Figure \ref{fig4}(a), gradient clipping effectively reduces overfitting, resulting in improved generalization performance. This highlights the effectiveness of gradient clipping in enhancing the robustness of the training process. Additionally, we evaluate the generalization error bounds for the clipped SGD method on CIFAR-10, derived using both {\it T1pm-SGD} and {\it T2pm-SGD}. As shown in Figure \ref{fig4}(b), the generalization error bound obtained with {\it T2pm-SGD} exhibits a slower rate of increase compared to that with {\it T1pm-SGD}, further demonstrating the robustness of the {\it T2pm-SGD} framework. These findings highlight the dual benefits of gradient clipping in controlling overfitting and achieving better model performance, while also validating the improved generalization bounds provided by {\it T2pm-SGD}.

\begin{figure}[htbp]
	\centering
	\begin{subfigure}[b]{0.243\textwidth} 
		\centering
		\includegraphics[width=\textwidth]{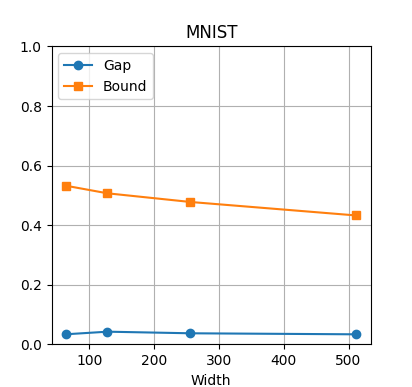}
		\caption{\small{$\sigma_{k}=0.005$}}
		\label{fig:sub2}
	\end{subfigure}
	\hfill
	\begin{subfigure}[b]{0.243\textwidth} 
		\centering
		\includegraphics[width=\textwidth]{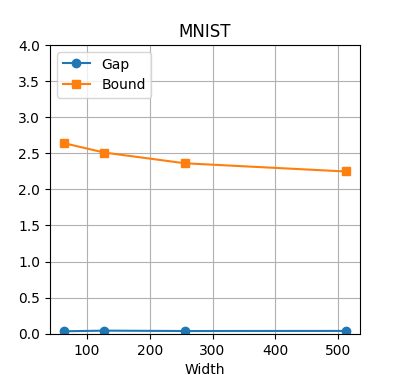}
		\caption{\small{$\sigma_{k}=0.01$}}
		\label{fig:sub1}
	\end{subfigure}
	\hfill
	\begin{subfigure}[b]{0.248\textwidth} 
		\centering
		\includegraphics[width=\textwidth]{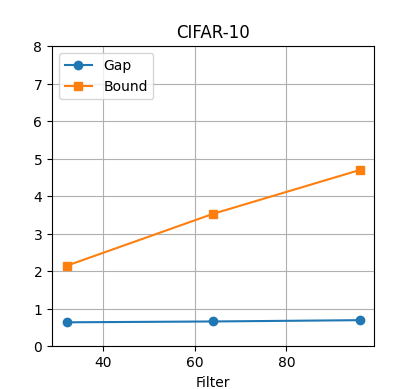}
		\caption{\small{$\sigma_{k}=0.005$}}
		\label{fig:sub3}
	\end{subfigure}
	\hfill
	\begin{subfigure}[b]{0.240\textwidth} 
		\centering
		\includegraphics[width=\textwidth]{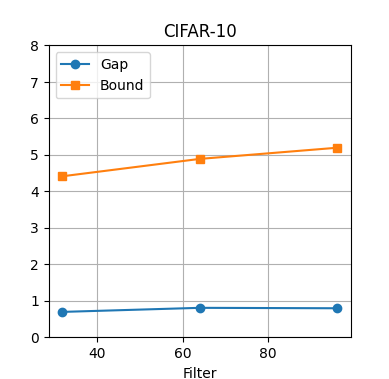}
		\caption{\small{$\sigma_{k}=0.001$}}
		\label{fig:sub4}
	\end{subfigure}	
	\caption{Generalization error bounds with {\it T2pm-SGD}: variation by widths on MNIST (Subfigures (a) and (b)) and by filters on CIFAR-10 (Subfigures (c) and (d)).}
	\label{fig3}
\end{figure}

Finally, we examine the impact of network hyperparameters on the generalization error bound derived using {\it T2pm-SGD}. Specifically, we vary the width of the MLP from 64 to 512 and the number of filters in the first layer of AlexNet from 32 to 96. The corresponding results are presented in Figure \ref{fig3}. 
The results in Figure \ref{fig3} show that the generalization gap remains relatively small and stable across different widths for MLP training on MNIST and filter sizes for AlexNet training on CIFAR-10, regardless of $\sigma_{k}$. Notably, the generalization error bound decreases as the width of the MLP increases, suggesting that wider networks improve the bound, while for AlexNet, larger filter sizes result in slightly looser bounds.

	\section{Conclusions}
	In this paper, we have investigated the stability and generalization properties of SGD in non-convex learning environments. By introducing \textit{T2pm-SGD}, we derived the tighter bounds on the generalization error for both bounded and sub-Gaussian loss functions. Our analysis decomposes the generalization error into two components: the trajectory term and the flatness term, each reflecting different aspects of the algorithm's behavior.
	
	For bounded loss functions, we achieved a significant improvement in the trajectory term, reducing it from \( O\big( (nb)^{-1/2} \big) \) to \( O(n^{-1}) \), independent of the batch size \( b \). This enhancement is particularly impactful in practical settings with small batch sizes, as it leads to better generalization without increasing computational resources. Moreover, we showed that by appropriately selecting the variance of the perturbed noise, the total generalization error bound can reach \( O(n^{-2/3}) \). For sub-Gaussian loss functions, we also provide a tighter generalization error bound compared to existing results in the literature. Our analysis reveals the interplay between noise variance, learning rate and batch size, providing insights into how these hyperparameters influence the generalization capabilities of SGD in non-convex settings.

	Future work may explore extending these results to other optimization algorithms and investigating the impact of various perturbation types. Additionally, further empirical studies on larger and more complex datasets could provide additional evidence of the effectiveness of the \textit{T2pm-SGD} in real-world applications. Investigating the theoretical underpinnings of why certain perturbation strategies yield better generalization could offer deeper insights into the learning dynamics of neural networks. This work advances the theoretical understanding of SGD's generalization properties in non-convex learning, providing valuable insights to researchers and practitioners for improving model performance and developing algorithms with superior generalization capabilities.
	
	\section{Technique Details}
	\subsection{Lemmas}	
	
	Define random vectors $Z=X+Y$ and $Z'=X'+Y'$, where $X,X',Y,Y'$ are $d$-dimensional random vectors. Let $P_{Z}$ and $P_{Z'}$ represent the density functions of $Z$ and $Z'$, respectively. The joint density functions of $(X,Y)$ and $(X',Y')$ are denoted by $P_{X,Y}$ and $P_{X',Y'}$, respectively.
	The conditional density of $Y$ given $X$ is denoted by $P_{Y|X}$, and $P_X$ represents the marginal distribution of $X$.
	Define $P_{X'}$ and $P_{Y'|X'}$ accordingly.

	\begin{lemma}\label{lem01}
		The KL divergence between the density functions of $Z$ and $Z'$ satisfies
		\begin{eqnarray*}
			\begin{split}
				D_{KL}\big(P_{Z'}\|P_{Z}\big)&\leq  D_{KL}\big(P_{X'}\|P_{X}\big)+\int D_{KL}\big(P_{Y'|X'=x}\|P_{Y|X=x}\big)P_{X'}(x)dx. 
			\end{split}
		\end{eqnarray*}
		
	\end{lemma}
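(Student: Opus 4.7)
The plan is to recognize the right-hand side as the chain-rule decomposition of $D_{KL}(P_{X',Y'}\|P_{X,Y})$, so the lemma reduces to the data-processing inequality applied to the deterministic map $(X,Y)\mapsto X+Y$. Concretely, I will prove
\[
D_{KL}(P_{Z'}\|P_{Z}) \;\leq\; D_{KL}(P_{X',Y'}\|P_{X,Y}) \;=\; D_{KL}(P_{X'}\|P_{X}) + \int D_{KL}(P_{Y'|X'=x}\|P_{Y|X=x})\,P_{X'}(x)\,dx,
\]
and the lemma follows immediately.

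For the first (non-trivial) inequality, I would use the convolution/marginalization representation: writing $P_{Z}(z)=\int P_{X,Y}(x,z-x)\,dx$ and analogously for $P_{Z'}$, and then applying the log-sum inequality pointwise in $z$,
\[
P_{Z'}(z)\log\frac{P_{Z'}(z)}{P_{Z}(z)} \;=\; \Bigl(\!\int P_{X',Y'}(x,z-x)\,dx\Bigr)\log\frac{\int P_{X',Y'}(x,z-x)\,dx}{\int P_{X,Y}(x,z-x)\,dx} \;\leq\; \int P_{X',Y'}(x,z-x)\log\frac{P_{X',Y'}(x,z-x)}{P_{X,Y}(x,z-x)}\,dx.
\]
Integrating in $z$ and changing variables $y=z-x$ (Fubini) converts the right side into $D_{KL}(P_{X',Y'}\|P_{X,Y})$, which is the data-processing statement I need. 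Alternatively, one can cite the DPI directly: KL is non-increasing under any (possibly deterministic) Markov kernel, and pushing forward $(X,Y)$ through the measurable map $(x,y)\mapsto x+y$ is exactly such a kernel.

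For the equality, I would invoke the standard chain rule for KL divergence. Starting from $P_{X,Y}(x,y)=P_{X}(x)P_{Y|X}(y|x)$ and the analogous factorization for the primed measure, the log-likelihood ratio splits additively as
\[
\log\frac{P_{X',Y'}(x,y)}{P_{X,Y}(x,y)} \;=\; \log\frac{P_{X'}(x)}{P_{X}(x)} + \log\frac{P_{Y'|X'=x}(y)}{P_{Y|X=x}(y)},
\]
and taking expectation with respect to $P_{X',Y'}$, then marginalizing the first term over $y$ and conditioning the second term on $X'=x$, gives the desired decomposition.

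The only real subtlety I anticipate is a technical one: justifying the interchange of integrals and guaranteeing that the densities under consideration are absolutely continuous with respect to each other on the relevant supports, so that the log-sum inequality and the chain rule are meaningful (with the usual conventions $0\log 0 = 0$ and $D_{KL}=+\infty$ when the absolute continuity fails, in which case the inequality is trivial). Once these measure-theoretic points are handled, the proof is a short combination of log-sum and the chain rule, with no calculation beyond a single change of variables.
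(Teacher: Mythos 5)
Your proof is correct and follows essentially the same route as the paper's: the log-sum inequality you apply pointwise in $z$ is exactly Jensen's inequality for the convex function $G(u)=u\log u$ (which is what the paper uses, with $g(x,z)=P_{X,Y}(x,z-x)/P_Z(z)$ serving as the mixing density), followed by the same change of variables and the same chain-rule factorization. The only difference is organizational — you make the intermediate quantity $D_{KL}(P_{X',Y'}\|P_{X,Y})$ explicit and name the first step a data-processing inequality, whereas the paper fuses the two stages into a single computation — but the underlying argument is identical.
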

	
	\begin{proof}
		Denote $g(x,z)=P_{X,Y}(x,z-x)/P_{Z}(z)$ and $G(u)=u\log(u),~u>0.$ Since $Z'=X'+Y'$, we have $P_{Z'}(z)=\int P_{X',Y'}(x,z-x)dx,$ 
		\begin{equation*}
			\begin{split}
				\int \dfrac{P_{X',Y'}(x,z-x)}{P_{X,Y}(x,z-x)}g(x,z)dx
				=\dfrac{P_{Z'}(z)}{P_{Z}(z)},
			\end{split}
		\end{equation*}
		and
		\begin{equation*}		
		\begin{split}			&D_{KL}\big(P_{Z'}\|P_{Z}\big)\\
			=&\int P_{Z'}(z) \log\Big(\dfrac{P_{Z'}(z)}{P_{Z}(z)}\Big)dz\\
			=&\int P_{Z}(z)G\bigg(\int \dfrac{P_{X',Y'}(x,z-x)}{P_{X,Y}(x,z-x)}g(x,z)dx\bigg)dz\\
			\leq& \int P_{Z}(z)\int G\Big(\dfrac{P_{X',Y'}(x,z-x)}{P_{X,Y}(x,z-x)}\Big)g(x,z)dxdz\\
			=& \int P_{Z}(z)\int \dfrac{P_{X,Y}(x,z-x)}{P_{Z}(z)} \dfrac{P_{X',Y'}(x,z-x)}{P_{X,Y}(x,z-x)} \log\Big(\dfrac{P_{X',Y'}(x,z-x)}{P_{X,Y}(x,z-x)}\Big)dxdz\\
			=& \int\int P_{X',Y'}(x,z-x) \log\Big(\dfrac{P_{X',Y'}(x,z-x)}{P_{X,Y}(x,z-x)}\Big)dxdz\\
			=&\int P_{X'}(x)\log\Big(\dfrac{P_{X'}(x)}{P_X(x)}\Big)dx
			+\int P_{X'}(x)P_{Y'|X'}(z-x|x)\log\Big(\dfrac{P_{Y'|X'}(z-x|x)}{P_{Y|X}(z-x|x)}\Big)dz dx\\			=&D_{KL}\big(P_{X'}\|P_{X}\big)+\int D_{KL}\big(P_{Y'|X'=x}\|P_{Y|X=x}\big)P_{X'}(x)dx,
		\end{split}
		\end{equation*}
		where the inequality comes from the Jensen's inequality and the convexity of 
		$G(u)$ for $u>0$.
		This completes the proof.	
	\end{proof}

	\subsection{Proof of Theorem \ref{th-sub-1.2}}
	\begin{proof}
		Define $L_{Jk}=1$ if the set $J$ intersects with the $k^{th}$ batch, and $L_{Jk}=0$ otherwise. Let $\bm L_J=(L_{J1},\ldots,L_{Jk})$. Then we have
		\[\check{ge}_k=\mathbb E_{S_{J^c},\bm L_J}\Big[\mathbb E_{S_J, S_J'|S_{J^c},\bm L_J}  \dfrac{1}{|J|}\displaystyle \sum_{i\in J}[f(\bm X_i, \check{\bm \theta}_k')-f(\bm X_i, \check{\bm \theta}_k)]\Big].\]
		Given that the loss function $f(\bm X,\bm \theta)$ is $R$-sub-Gaussian, the term $\dfrac{1}{|J|}\displaystyle \sum_{i\in J}f(\bm X_i, \check{\bm \theta}_k)$ is $\dfrac{R}{\sqrt{|J|}}$-sub-Gaussian, as illustrated in Lemma 1 of \citet{Pensia2018}. So, we have
		\begin{eqnarray*}
			\Big|\mathbb E_{S_J, S_J'|S_{J^c},\bm L_J}\dfrac{1}{|J|}\displaystyle \sum_{i\in J}[f(\bm X_i, \check{\bm \theta}_k')-f(\bm X_i, \check{\bm \theta}_k)]\Big|
			\leq \sqrt{2R^2I^{S_{J^c},\bm L_J}(\check{\bm \theta}_k;S_J)/|J|},
		\end{eqnarray*}
		where $I^{S_{J^c},\bm L_J}(\check{\bm \theta}_k;S_J)$ denotes the mutual information between $\check{\bm \theta}_k$ and $S_J$ given $S_{J^c}$ and $\bm L_J$.
		
		Next, we aim to establish an upper bound for $I^{S_{J^c},\bm L_J}(\check{\bm \theta}_k;S_J)$. Define $\check{\bm \theta}_{k-1:k}=(\check{\bm \theta}_{k-1},\check{\bm \theta}_k).$ Then, we obtain 
		\begin{equation*}
			I^{S_{J^c},\bm L_J}(\check{\bm \theta}_k;S_J)\leq I^{S_{J^c},\bm L_J}(\check{\bm \theta}_{k-1:k};S_J).
		\end{equation*}
		Applying the chain rule of mutual information, we find that
		\begin{equation}\label{prof4-1.1}
			I^{S_{J^c},\bm L_J}(\check{\bm \theta}_k;S_J)\leq I^{S_{J^c},\bm L_J}(\check{\bm \theta}_{k-1};S_J)+I^{S_{J^c},\bm L_J}(\check{\bm \theta}_{k};S_J|\check{\bm \theta}_{k-1}).
		\end{equation}	
		For simplicity, let $\check{\kappa}_k=P_{\check{\bm \theta}_{k},S_J|\check{\bm \theta}_{k-1},\bm L_J}=\int P_{\check{\bm \theta}_{k},S_J,\bm \epsilon_{k-1}|\check{\bm \theta}_{k-1},\bm L_J} d\bm \epsilon_{k-1}$ represent the conditional joint density function of $(\check{\bm \theta}_{k},S_J)$ given $\check{\bm \theta}_{k-1}$ and  $\bm L_J$, and define $\check{\kappa}'_k=P_{\check{\bm \theta}_{k}|\check{\bm \theta}_{k-1},\bm L_J}P_{S_J|\check{\bm \theta}_{k-1},\bm L_J}$. Then we have
		\begin{equation}\label{prof4-1.12}
			I^{S_{J^c},\bm L_J}(\check{\bm \theta}_{k};S_J|\check{\bm \theta}_{k-1})=\int \check{\kappa}_k(v)\ln \dfrac{\check{\kappa}_k(v)}{\check{\kappa}'_k(v)}dv=\int \check{\kappa}'_k(v)Q\Big(\dfrac{\check{\kappa}_k(v)}{\check{\kappa}'_k(v)}\Big)dv.
		\end{equation}	
		where $Q(x)=x\ln x$ for $x>0$.
		Because $S_J$ and $\bm \epsilon_{k-1}$ are independent, we have 
		\[\check{\kappa}'_k=\int P_{\check{\bm \theta}_{k},\bm \epsilon_{k-1}|\check{\bm \theta}_{k-1},\bm L_J}P_{S_J|\check{\bm \theta}_{k-1},\bm L_J}d\bm \epsilon_{k-1}.\]
		For simplicity of notation, we define $\check{q}_k=P_{\check{\bm \theta}_{k},S_J,\bm \epsilon_{k-1}|\check{\bm \theta}_{k-1},\bm L_J},$ $\check{q}'_k=P_{\check{\bm \theta}_{k},\bm \epsilon_{k-1}|\check{\bm \theta}_{k-1},\bm L_J}P_{S_J|\check{\bm \theta}_{k-1},\bm L_J}$ and $\check{\rho}_k=\check{q}'_k/\check{\kappa}'_k$. Then, we have
		\[\int \check{\rho}_k(v,\bm \epsilon_{k-1}) \dfrac{\check{q}_k(v,\bm \epsilon_{k-1})}{\check{q}'_k(v,\bm \epsilon_{k-1})}d\epsilon_{k-1}=\dfrac{\check{\kappa}_k(v)}{\check{\kappa}'_k(v)}.\]
		Plugging it into equation (\ref{prof4-1.12}), we obtain
		\[I^{S_{J^c},\bm L_J}(\check{\bm \theta}_{k};S_J|\check{\bm \theta}_{k-1})=\int \check{\kappa}'_k(v)Q\Big(\int \check{\rho}_k(v,\bm \epsilon_{k-1}) \dfrac{\check{q}_k(v,\bm \epsilon_{k-1})}{\check{q}'_k(v,\bm \epsilon_{k-1})}d\epsilon_{k-1}\Big)dv.\]
		Since $Q(x)=x\ln x$ is convex on the interval $(0,+\infty)$, by the Jensen's inequality, we have 
		$$Q\Big(\int \check{\rho}_k(v,\bm \epsilon_{k-1}) \dfrac{\check{q}_k(v,\bm \epsilon_{k-1})}{\check{q}'_k(v,\bm \epsilon_{k-1})}d\epsilon_{k-1}\Big)\leq \int \check{\rho}_k(v,\bm \epsilon_{k-1}) Q\Big(\dfrac{\check{q}_k(v,\bm \epsilon_{k-1})}{\check{q}'_k(v,\bm \epsilon_{k-1})}\Big)d\epsilon_{k-1}$$ and 
		\begin{eqnarray*}
			\begin{split}
				I^{S_{J^c},\bm L_J}(\check{\bm \theta}_{k};S_J|\check{\bm \theta}_{k-1})
				&\leq \int \check{\kappa}'_k(v)\int \check{\rho}_k(v,\bm \epsilon_{k-1}) Q\Big(\dfrac{\check{q}_k(v,\bm \epsilon_{k-1})}{\check{q}'_k(v,\bm \epsilon_{k-1})}\Big)d\epsilon_{k-1}dv\\
				&=\int \int \check{q}_k(v,\bm \epsilon_{k-1})\ln\Big(\dfrac{\check{q}_k(v,\bm \epsilon_{k-1})}{\check{q}'_k(v,\bm \epsilon_{k-1})}\Big)d\epsilon_{k-1}dv.
			\end{split}
		\end{eqnarray*}
		Plugging $\check{q}_k$ and $\check{q}'_k$ into the above expression, we directly obtain
		\begin{eqnarray}
			\begin{split}\label{th1-18-1.2}
				&I^{S_{J^c},\bm L_J}(\check{\bm \theta}_{k};S_J|\check{\bm \theta}_{k-1})
				\\	\leq&~\int f(\bm \epsilon_{k-1}) \int f(\check{\bm \theta}_{k},S_J|\check{\bm \theta}_{k-1},\bm \epsilon_{k-1},\bm L_J)
				\ln \bigg(\dfrac{f(\check{\bm \theta}_{k},S_J|\check{\bm \theta}_{k-1},\bm \epsilon_{k-1},\bm L_J)}{f(\check{\bm \theta}_{k}|\check{\bm \theta}_{k-1},\bm \epsilon_{k-1},\bm L_J)P_{S_J|\check{\bm \theta}_{k-1},\bm L_J}}\bigg)dvd\epsilon_{k-1}\\
				=&~\mathbb E_{\bm \epsilon_{k-1}|S_{J^c},\bm L_J} I^{S_{J^c},\bm \epsilon_{k-1},\bm L_J}(\check{\bm \theta}_{k};S_J|\check{\bm \theta}_{k-1}).
			\end{split}
		\end{eqnarray}	
		Denote by $H^{S_{J^c},\bm \epsilon_{k-1},\bm L_J}(\cdot)$ the conditional entropy given $S_{J^c},$ $\bm \epsilon_{k-1}$ and $\bm L_J$. We have	\begin{eqnarray}\label{th1-18-1.4}		I^{S_{J^c},\bm \epsilon_{k-1},\bm L_J}(\check{\bm \theta}_{k};S_J|\check{\bm \theta}_{k-1})
			=H^{S_{J^c},\bm \epsilon_{k-1},\bm L_J}(\check{\bm \theta}_{k}|\check{\bm \theta}_{k-1})
			-H^{S_{J^c},\bm \epsilon_{k-1},\bm L_J}(\check{\bm \theta}_{k}|\check{\bm \theta}_{k-1},S_J).	\end{eqnarray}
		Note that $\check{\bm \theta}_{k}=\check{\bm \theta}_{k-1}-\bm \epsilon_{k-1}-\eta_{k} \bm g_{S,k}(\check{\bm \theta}_{k-1}-\bm \epsilon_{k-1})+\bm \epsilon_k$. Given $\bm \epsilon_{k-1}$, we can obtain,
		\begin{eqnarray*}
			H^{S_{J^c},\bm \epsilon_{k-1},\bm L_J}(\check{\bm \theta}_{k}|\check{\bm \theta}_{k-1})
			=H^{S_{J^c},\bm \epsilon_{k-1},\bm L_J}\{\check{\bm \theta}_{k-1}-\eta_{k} \bm g_{S,k}(\check{\bm \theta}_{k-1}-\bm \epsilon_{k-1})+\bm \epsilon_k|\check{\bm \theta}_{k-1}\}.	
		\end{eqnarray*}
		Recalling that $\bm g_{S,k}(\check{\bm \theta}_{k-1}-\bm \epsilon_{k-1})=\dfrac{1}{b}\sum\limits_{i\in M_k}\nabla f(\bm X_i, \check{\bm \theta}_{k-1}-\bm \epsilon_{k-1})$,
		we have
		\[\bm g_{S,k}(\check{\bm \theta}_{k-1}-\bm \epsilon_{k-1})=\bm g_{S_J,k}(\check{\bm \theta}_{k-1}-\bm \epsilon_{k-1})+\bm g_{S_{J^c},k}(\check{\bm \theta}_{k-1}-\bm \epsilon_{k-1}),\]
		where $\bm g_{S_J,k}(\check{\bm \theta}_{k-1}-\bm \epsilon_{k-1})=\dfrac{1}{b}\sum\limits_{i\in M_k\cap S_J}\nabla f(\bm X_i, \check{\bm \theta}_{k-1}-\bm \epsilon_{k-1}).$
		
		Given $S_{J^c}$, $\bm \epsilon_{k-1}$ and $ \check{\bm \theta}_{k-1}$, $\bm g_{S_{J^c},k}(\check{\bm \theta}_{k-1}-\bm \epsilon_{k-1})$ is deterministic and does not influence the conditional entropy $H^{S_{J^c},\bm \epsilon_{k-1},\bm L_J}(\check{\bm \theta}_{k}|\check{\bm \theta}_{k-1})$. Therefore, the entropy can be reformulated as
		\[H^{S_{J^c},\bm \epsilon_{k-1},\bm L_J}(\check{\bm \theta}_{k}|\check{\bm \theta}_{k-1})
		=H^{S_{J^c},\bm \epsilon_{k-1},\bm L_J}\{\eta_k\bm g_{S_J,k}(\check{\bm \theta}_{k-1}-\bm \epsilon_{k-1})+\bm \epsilon_k|\check{\bm \theta}_{k-1}\}.\]	
		
		Next, we will compute the bound for $H^{S_{J^c},\bm \epsilon_{k-1},\bm L_J}(\check{\bm \theta}_{k}|\check{\bm \theta}_{k-1}).$ We first calculate the following expectation, 
		\begin{eqnarray*}
			\begin{split}
				&\mathbb E\big[\|\eta_k\bm g_{S_J,k}(\check{\bm \theta}_{k-1}-\bm \epsilon_{k-1})+\bm \epsilon_k\|^2|\check{\bm \theta}_{k-1},S_{J^c},\bm \epsilon_{k-1},\bm{L}\big]\\
				=&~ d\sigma^2_k+\eta_{k}^2\mathrm{tr}\big[\mathbb V^{S_{J^c},\bm \epsilon_{k-1},\bm L_J}\big(\bm g_{S_J,k}(\check{\bm \theta}_{k-1}-\bm \epsilon_{k-1})|\check{\bm \theta}_{k-1}\big)\big],
			\end{split}
		\end{eqnarray*}
		where $\mathbb V^{S_{J^c},\bm \epsilon_{k-1},\bm L_J}$ is the conditional variance given $S_{J^c},\bm \epsilon_{k-1}$ and $\bm{L}$. 
		According to the proof of Lemma 5 in \citet{Pensia2018}, among all $d$-dimensional random variables with a fixed second-order moment, the Gaussian distribution has the highest entropy. This implies that
		\begin{eqnarray}\label{prof4-1.19}
			H^{S_{J^c},\bm \epsilon_{k-1},\bm L_J}(\check{\bm \theta}_{k}|\check{\bm \theta}_{k-1})
			\leq \dfrac{d}{2}\mathbb \log\Big[\dfrac{2\pi e }{d}\Big(d\sigma^2_k+\eta_{k}^2\mathrm{tr}\big[\mathbb V^{S_{J^c},\bm \epsilon_{k-1},\bm L_J}\big(\bm g_{S_J,k}(\check{\bm \theta}_{k-1}-\bm \epsilon_{k-1})|\check{\bm \theta}_{k-1}\big)\big]\Big)\Big].
		\end{eqnarray}
		
		To bound the expression in (\ref{th1-18-1.4}), we need to compute the second term on the right-hand side $H^{S_{J^c},\bm \epsilon_{k-1},\bm L_J}(\check{\bm \theta}_{k}|\check{\bm \theta}_{k-1}, S_J)$. We start by defining it as
		\begin{eqnarray*}
			H^{S_{J^c},\bm \epsilon_{k-1},\bm L_J}(\check{\bm \theta}_{k}|\check{\bm \theta}_{k-1},S_J)
			=H^{S_{J^c},\bm \epsilon_{k-1},\bm L_J}\{\check{\bm \theta}_{k-1}-\eta_{k} \bm g_{S,k}(\check{\bm \theta}_{k-1}-\bm \epsilon_{k-1})+\bm \epsilon_k|\check{\bm \theta}_{k-1},S_J\}.	
		\end{eqnarray*}
		In the above expression, given $S_{J^c}, S_J,\check{\bm \theta}_{k-1},\bm \epsilon_{k-1}$, and $\bm L_J$, $\check{\bm \theta}_{k-1}-\eta_{k} \bm g_{S,k}(\check{\bm \theta}_{k-1}-\bm \epsilon_{k-1})$ is a constant, and does not affect the entropy calculation. Taking into account that $\bm \epsilon_k\sim N(0,\sigma^2_k\bm I_d),$ we have
		\begin{eqnarray}\label{prof4-1.2}
			H^{S_{J^c},\bm \epsilon_{k-1},\bm L_J}(\check{\bm \theta}_{k}|\check{\bm \theta}_{k-1},S_J)=H^{S_{J^c},\bm \epsilon_{k-1},\bm L_J}\{\bm \epsilon_k|\check{\bm \theta}_{k-1},S_J\}=\dfrac{d}{2}\log(2\pi e \sigma^2_k).
		\end{eqnarray}
		Combining (\ref{th1-18-1.2})-(\ref{prof4-1.2}) and $\bm \theta_{k-1}=\check{\bm \theta}_{k-1}-\bm \epsilon_{k-1},$ we obtain the upper bound of the mutual information,
		\begin{eqnarray*}
			\begin{split}
				I^{S_{J^c},\bm L_J}(\check{\bm \theta}_{k};S_J|\check{\bm \theta}_{k-1})
				&\leq\dfrac{d}{2} \mathbb E_{\bm \epsilon_{k-1}|S_{J^c},\bm L_J}\log\bigg(1+\eta^2_k\dfrac{\mathrm{tr}\big[\mathbb V^{S_{J^c},\bm \epsilon_{k-1},\bm L_J}\big(\bm g_{S_J,k}(\check{\bm \theta}_{k-1}-\bm \epsilon_{k-1})|\check{\bm \theta}_{k-1}\big)\big]}{d \sigma^2_k}\bigg).
			\end{split}
		\end{eqnarray*}
		By (\ref*{prof4-1.1}), we have
		\begin{eqnarray*}
			\begin{split}
				I^{S_{J^c},\bm L_J}(\check{\bm \theta}_{k};S_J)&\leq I^{S_{J^c},\bm L_J}(\check{\bm \theta}_{k-1};S_J)\\
				&~~~+\dfrac{d}{2} \mathbb E_{\bm \epsilon_{k-1}|S_{J^c},\bm L_J}\log\bigg(1+\eta^2_k\dfrac{\mathrm{tr}\big[\mathbb V^{S_{J^c},\bm \epsilon_{k-1},\bm L_J}\big(\bm g_{S_J,k}(\check{\bm \theta}_{k-1}-\bm \epsilon_{k-1})|\check{\bm \theta}_{k-1}\big)\big]}{d \sigma^2_k}\bigg).
			\end{split}
		\end{eqnarray*}
		By the induction, we have
		\[I^{S_{J^c},\bm L_J}(\check{\bm \theta}_{k};S_J)\leq \dfrac{d}{2} \displaystyle \sum_{s=1}^{k}\mathbb E_{\bm \epsilon_{s-1}|S_{J^c},\bm L_J}\log\bigg(1+\eta^2_s\dfrac{\mathrm{tr}\big[\mathbb V^{S_{J^c},\bm \epsilon_{s-1},\bm L_J}\big(\bm g_{S_J,s}(\check{\bm \theta}_{s-1}-\bm \epsilon_{s-1})|\check{\bm \theta}_{s-1}\big)\big]}{d \sigma^2_s}\bigg).\]
		Since $J$ is arbitrary, we obtain the upper bound of the generalization error as follows
		\begin{eqnarray*}
			\begin{split}
				|ge_k|&\leq
				\min_{J}\Bigg\{|\xi_{k}(\bm \theta_k,\bm \epsilon_k)|\\
				&~~~+\mathbb E_{S_{J^c},\bm L_J}\sqrt{\dfrac{R^2d}{|J|} \displaystyle \sum_{s=1}^{k}\mathbb E_{\bm \epsilon_{s-1}|S_{J^c},\bm L_J}\log\bigg(1+\eta^2_s\dfrac{\mathrm{tr}\big[\mathbb V^{S_{J^c},\bm \epsilon_{s-1},\bm L_J}\big(\bm g_{S_J,s}(\check{\bm \theta}_{s-1}-\bm \epsilon_{s-1})|\check{\bm \theta}_{s-1}\big)\big]}{d \sigma^2_s}\bigg)}\Bigg\}.
			\end{split}
		\end{eqnarray*}
		
		If $J=\{j\}$, it follows that $m=1$, $\bm g_{S_{\{j\}},s}(\bm \theta_{s-1})=\dfrac{L_{\{j\}s}}{b}\nabla f(\bm X_j, \bm \theta_{s-1})$ and the result could be rewritten as
		\begin{eqnarray*}
			\begin{split}
				&|ge_k|\\
				&
				\leq |\xi_{k}(\bm \theta_k,\bm \epsilon_k)|\\
				&~~~+\mathbb E_{S_{\{j\}^c},\bm L_{\{j\}}}\sqrt{R^2d \displaystyle \sum_{s=1}^{k}\mathbb E_{\bm \epsilon_{s-1}|S_{\{j\}^c},\bm L_{\{j\}}}\log\bigg(1+L_{\{j\}s}\eta^2_s\dfrac{\mathrm{tr}\big[\mathbb V^{S_{\{j\}^c},\bm \epsilon_{s-1},\bm L_{\{j\}}}\big(\nabla f(\bm X_j, \bm \theta_{s-1})|\check{\bm \theta}_{s-1}\big)\big]}{d \sigma^2_s}\bigg)}\\
				&= |\xi_{k}(\bm \theta_k,\bm \epsilon_k)|\\
				&~~~+\mathbb E_{S_{\{j\}^c},\bm L_{\{j\}}}\sqrt{R^2d \displaystyle \sum_{s=1}^{k}\mathbb E_{\bm \epsilon_{s-1}|S_{\{j\}^c},\bm L_{\{j\}}}L_{\{j\}s}\log\bigg(1+\eta^2_s\dfrac{\mathrm{tr}\big[\mathbb V^{S_{\{j\}^c},\bm \epsilon_{s-1},\bm L_{\{j\}}}\big(\nabla f(\bm X_j, \bm \theta_{s-1})|\check{\bm \theta}_{s-1}\big)\big]}{d \sigma^2_s}\bigg)}.
			\end{split}
		\end{eqnarray*}
		Then the proof is finished.
		
	\end{proof}
	
	\subsection{Proof of Theorem \ref{th-sub-3.2}} 
	
	\begin{proof} 
We follow a similar approach to the proof of Theorem \ref{th-sub-1.2}, replacing the variance matrix \(\sigma_k I_d\) of the noise \(\bm \epsilon_k\) with the general covariance matrix \(\Sigma_k\). It then follows that
		\begin{eqnarray}\label{2prof4-1.19}
			\begin{split}
			&H^{S_{J^c},\bm \epsilon_{k-1},\bm L_J}(\check{\bm \theta}_{k}|\check{\bm \theta}_{k-1})\\
			\leq& \dfrac{d}{2}\mathbb \log\Big[\dfrac{2\pi e }{d}\Big(d|\Sigma_k|^{1/d}+\eta_{k}^2\mathrm{tr}\big[\mathbb V^{S_{J^c},\bm \epsilon_{k-1},\bm L_J}\big(\bm g_{S_J,k}(\check{\bm \theta}_{k-1}-\bm \epsilon_{k-1})|\check{\bm \theta}_{k-1}\big)\big]\Big)\Big].
			\end{split}
		\end{eqnarray}
		Since $\epsilon_k\sim N(0,\Sigma_k),$ we have	
		\begin{eqnarray}\label{2prof4-1.2}
			H^{S_{J^c},\bm \epsilon_{k-1},\bm L_J}(\check{\bm \theta}_{k}|\check{\bm \theta}_{k-1},S_J)=H^{S_{J^c},\bm \epsilon_{k-1},\bm L_J}\{\bm \epsilon_k|\check{\bm \theta}_{k-1},S_J\}=\frac{d}{2}\log(2\pi e |\Sigma_k|^{1/d}).
		\end{eqnarray}
		Combining (\ref{th1-18-1.4}), (\ref{2prof4-1.2}) and $\bm \theta_{k-1}=\check{\bm \theta}_{k-1}-\bm \epsilon_{k-1},$ we obtain the upper bound of the mutual information between \(\check{\bm \theta}_{k}\) and \(S_J\) conditioned on \(\check{\bm \theta}_{k-1}\),
		\begin{eqnarray*}
			\begin{split}
				I^{S_{J^c},\bm L_J}(\check{\bm \theta}_{k};S_J|\check{\bm \theta}_{k-1})
				&\leq\dfrac{d}{2} \mathbb E_{\bm \epsilon_{k-1}|S_{J^c},\bm L_J}\log\bigg(1+\eta^2_k\dfrac{\mathrm{tr}\big[\mathbb V^{S_{J^c},\bm \epsilon_{k-1},\bm L_J}\big(\bm g_{S_J,k}(\check{\bm \theta}_{k-1}-\bm \epsilon_{k-1})|\check{\bm \theta}_{k-1}\big)\big]}{d |\Sigma_k|^{1/d}}\bigg).
			\end{split}
		\end{eqnarray*}
		By (\ref*{prof4-1.1}), we have
		\begin{eqnarray*}
			\begin{split}
				I^{S_{J^c},\bm L_J}(\check{\bm \theta}_{k};S_J)&\leq I^{S_{J^c},\bm L_J}(\check{\bm \theta}_{k-1};S_J)\\
				&~~~+\dfrac{d}{2} \mathbb E_{\bm \epsilon_{k-1}|S_{J^c},\bm L_J}\log\bigg(1+\eta^2_k\dfrac{\mathrm{tr}\big[\mathbb V^{S_{J^c},\bm \epsilon_{k-1},\bm L_J}\big(\bm g_{S_J,k}(\check{\bm \theta}_{k-1}-\bm \epsilon_{k-1})|\check{\bm \theta}_{k-1}\big)\big]}{d|\Sigma_k|^{1/d}}\bigg).
			\end{split}
		\end{eqnarray*}
		By the induction, we have
		\[I^{S_{J^c},\bm L_J}(\check{\bm \theta}_{k};S_J)\leq \dfrac{d}{2} \displaystyle \sum_{s=1}^{k}\mathbb E_{\bm \epsilon_{s-1}|S_{J^c},\bm L_J}\log\bigg(1+\eta^2_s\dfrac{\mathrm{tr}\big[\mathbb V^{S_{J^c},\bm \epsilon_{s-1},\bm L_J}\big(\bm g_{S_J,s}(\check{\bm \theta}_{s-1}-\bm \epsilon_{s-1})|\check{\bm \theta}_{s-1}\big)\big]}{d |\Sigma_s|^{1/d}}\bigg).\]
		Since the subset \(J\) is arbitrary, we can select it to minimize the bound on the generalization error. Thus, we obtain
		\begin{eqnarray*}
			\begin{split}
				|ge_k|&\leq
				\min_{J}\Bigg\{|\xi_{k}(\bm \theta_k,\bm \epsilon_k)|\\
				&~~~+\mathbb E_{S_{J^c},\bm L_J}\sqrt{\dfrac{R^2d}{|J|} \displaystyle \sum_{s=1}^{k}\mathbb E_{\bm \epsilon_{s-1}|S_{J^c},\bm L_J}\log\bigg(1+\eta^2_s\dfrac{\mathrm{tr}\big[\mathbb V^{S_{J^c},\bm \epsilon_{s-1},\bm L_J}\big(\bm g_{S_J,s}(\check{\bm \theta}_{s-1}-\bm \epsilon_{s-1})|\check{\bm \theta}_{s-1}\big)\big]}{d |\Sigma_s|^{1/d}}\bigg)}\Bigg\}.
			\end{split}
		\end{eqnarray*}
		Then the proof is finished.
		
	\end{proof}

	\subsection{Proof of Theorem \ref{th-sub-1}} 
	\begin{proof}
		Recall that \[\check{ge}_k=\mathbb E_{S, S'}\Big[\dfrac{1}{|J|}\displaystyle \sum_{i\in J}\int f(\bm X_i,\bm \theta) \big(P_{\check{\bm{\theta}}'_k|S, S'}(\bm \theta)-P_{\check{\bm{\theta}}_k|S, S'}(\bm \theta)\big)d\bm \theta\Big].\]
		By Condition 2.1, $f(\bm X,\bm \theta)$ is $c_0$-bounded, and it follows that
		\[|\check{ge}_k|\leq 2c_0\mathbb E_{S, S'}D_{TV}\big(P_{\check{\bm{\theta}}'_k|S, S'}\|P_{\check{\bm{\theta}}_k|S, S'}\big).\]
		To calculate $\check{ge}_k$, we need to assess the similarity between the densities $P_{\check{\bm{\theta}}'_k|S, S'}$ and $P_{\check{\bm{\theta}}_k|S, S'}$, which can be measured using the total variation distance and the Kullback-Leibler divergence.
		
		Recall that $\check{\bm \theta}_{k}=\bm \theta_k+\bm \epsilon_k$ and $\check{\bm \theta}'_{k}=\bm \theta'_k+\bm \epsilon'_k.$ It follows that
		\[\check{\bm \theta}_{k}=\bm \theta_{k-1}-\eta_{k} \bm g_{S,k}(\bm \theta_{k-1})+\bm \epsilon_k=\check{\bm \theta}_{k-1}-\bm \epsilon_{k-1}-\eta_{k} \bm g_{S,k}(\check{\bm \theta}_{k-1}-\bm \epsilon_{k-1})+\bm \epsilon_k\] and \[\check{\bm \theta}'_{k}=\bm \theta'_{k-1}-\eta_{k} \bm g_{S',k}(\bm \theta'_{k-1})+\bm \epsilon'_k=\check{\bm \theta}'_{k-1}-\bm \epsilon'_{k-1}-\eta_{k} \bm g_{S',k}(\check{\bm \theta}'_{k-1}-\bm \epsilon'_{k-1})+\bm \epsilon'_k.\]
		Denote a $d$-dimensional random vector
		$\bm h_{k}=\bm \theta_{k-1} -\eta_{k} \bm g_{S,k}(\bm \theta_{k-1} ),$ $\mathring{\bm \iota}_{t}=s_{k}W_{t}$ and
		$\bm \iota_{t}'=t[\bm g_{S',k}(\bm \theta'_{k-1})-\bm g_{S,k}(\bm \theta'_{k-1})]+s_{k}W'_t$, with $s^2_{k}= \sigma^2_k/\eta_k$.
		Let the density functions of $\bm h_{k}$ and $\bm h'_{k}$ conditional $S$ and $S'$ be $P_{\bm h_{k}|S, S'}$ and $P_{\bm h'_{k}|S, S'}$, respectively.
		
		When $t=\eta_k,$ we find $\mathring{\bm \iota}_{\eta_k}=\bm \epsilon_k$ and $\bm \iota_{\eta_k}'=\eta_k[\bm g_{S',k}(\bm \theta'_{k-1})-\bm g_{S,k}(\bm \theta'_{k-1})]+\bm \epsilon'_k.$
		As a result, the update rule can be reformulated as
		$\check{\bm \theta}_{k}=\bm h_k+\mathring{\bm \iota}_{\eta_k}$ and
		$\check{\bm \theta}'_{k}=\bm h'_{k}+\bm \iota_{\eta_k}'$.  Let $P_{\mathring{\bm \iota}_{t}|\bm h_{k}, S, S'}$ denote the density function of $\mathring{\bm \iota}_{t}$ conditional on $\bm h_{k}$, $S$ and $S'$, and $P_{\bm \iota_{t}'|\bm h'_{k}, S, S'}$ represent the density functions of $\bm \iota_{t}'$ conditional on $\bm h'_{k}, S$ and $S'$.
		By Lemma 1, we obtain the KL divergence between $\check{\bm \theta}_{k}$ and $\check{\bm \theta}'_{k}$ as follows,
		\begin{eqnarray}\label{negho-01}
			\begin{split}
				D_{KL}(P_{\check{\bm{\theta}}'_k|S, S'}\|P_{\check{\bm{\theta}}_k|S, S'})&\leq D_{KL}(P_{\bm h'_{k}|S, S'}\|P_{\bm h_{k}|S, S'})\\
				&~~~+\int D_{KL}\big( P_{\bm \iota_{\eta_k}'|\bm h_{k}=h, S, S'}\| P_{\mathring{\bm \iota}_{\eta_k}|\bm h'_{k}=h, S, S'} \big)P_{\bm h'_{k}| S, S'}(h)dh.
			\end{split}
		\end{eqnarray}	
		Denote $ P_{\check{\bm \omega}_k|S, S'}$ and $ P_{\check{\bm \omega}'_k|S, S'}$ as the conditional density functions of $\check{\bm \omega}_k=\eta_{k} \bm g_{S,k}(\check{\bm \theta}_{k-1}-\bm \epsilon_{k-1})$ and $\check{\bm \omega}'_k=\eta_{k} \bm g_{S,k}(\check{\bm \theta}'_{k-1}-\bm \epsilon'_{k-1})$ conditional on $S$ and $S'$, respectively. By $\bm h_{k}=\check{\bm \theta}_{k-1}-\bm \epsilon_{k-1}-\eta_{k} \bm g_{S,k}(\check{\bm \theta}_{k-1}-\bm \epsilon_{k-1}),$ we find that the density $P_{\bm h_{k}|S, S'}$ equals the convolution of the following three density functions: $P_{\check{\bm{\theta}}_{k-1}|S, S'}$, $ P_{\check{\bm \omega}_k|S, S'}$ and $N(0,\sigma_k^2\bm I_d)$. Combining with the data processing inequality from \cite{Ahlswede1976} and Lemma 1, we have the following inequality
		\begin{eqnarray*}
			\begin{split}
				D_{KL}(P_{\bm h'_{k}|S, S'}\|P_{\bm h_{k}|S, S'})&\leq 	D_{KL}(P_{\check{\bm{\theta}}'_{k-1}|S, S'}\|P_{\check{\bm{\theta}}_{k-1}|S, S'})\\
				&~~~+\int D_{KL}( P_{\check{\bm \omega}'_k|\check{\bm \theta}_{k-1}=\theta,S, S'}\| P_{\check{\bm \omega}_k|\check{\bm \theta}_{k-1}'=\theta,S, S'})P_{\check{\bm \theta}_{k-1}'}(\theta)d\theta.
			\end{split}
		\end{eqnarray*}
		Given $\check{\bm \theta}_{k-1}=\check{\bm \theta}'_{k-1}=\theta,$ the density functions of $\check{\bm \omega}_k=\eta_{k} \bm g_{S,k}(\check{\bm \theta}_{k-1}-\bm \epsilon_{k-1})$ and $\check{\bm \omega}'_k=\eta_{k} \bm g_{S,k}(\check{\bm \theta}'_{k-1}-\bm \epsilon'_{k-1})$ are the same, 
		which implies that
		\[D_{KL}( P_{\check{\bm \omega}'_k|\check{\bm \theta}_{k-1}=\theta,S, S'}\| P_{\check{\bm \omega}_k|\check{\bm \theta}_{k-1}'=\theta,S, S'})=0.\] 
		It follows that
		\[D_{KL}(P_{\bm h'_{k}|S, S'}\|P_{\bm h_{k}|S, S'})\leq 	D_{KL}(P_{\check{\bm{\theta}}'_{k-1}|S, S'}\|P_{\check{\bm{\theta}}_{k-1}|S, S'}).\]
		Combining it with (\ref{negho-01}),  we have	
		\begin{eqnarray}\label{negho-02}
			\begin{split}
				D_{KL}(P_{\check{\bm{\theta}}'_k|S, S'}\|P_{\check{\bm{\theta}}_k|S, S'})&\leq D_{KL}(P_{\check{\bm{\theta}}'_{k-1}|S, S'}\|P_{\check{\bm{\theta}}_{k-1}|S, S'})\\
				&~~~+\int D_{KL}\big( P_{\bm \iota_{\eta_k}'|\bm h_{k}=h, S, S'}\| P_{\mathring{\bm \iota}_{\eta_k}|\bm h'_{k}=h, S, S'} \big)P_{\bm h'_{k}| S, S'}(h)dh.
			\end{split}
		\end{eqnarray}	
		In the subsequent analysis, we aim to establish the upper bound for $$\int D_{KL}\big( P_{\bm \iota_{\eta_k}'|\bm h_{k}=h, S, S'}\| P_{\mathring{\bm \iota}_{\eta_k}|\bm h'_{k}=h, S, S'} \big)P_{\bm h'_{k}| S, S'}(h)dh.$$
		Let $a\odot b$ represent the inner product of vectors $a$ and $b$. At the initial time $t=0$, we have $\mathring{\bm \iota}_{0}=\bm \iota'_{0}=0.$ For any time $t\leq \eta_k$, the following continuous-time Langevin equation holds,
		\[d\mathring{\bm \iota}_{t}=s_{k}dW_t~\text{and}~d\bm \iota_{t}'=[\bm g_{S',k}(\bm \theta'_{k-1})-\bm g_{S,k}(\bm \theta'_{k-1})]+s_{k}dW'_t.\] According to \citet{Mou2017} and \citet{gyongy1986},
		the densities $ P_{\mathring{\bm \iota}_{t}|\bm h_{k}, S, S'}$ and $P_{\bm \iota_{t}'|\bm h'_{k}, S, S'}$ satisfy the Fokker-Planck equations
		\begin{equation}\label{th1-eq1}
			\dfrac{\partial}{\partial t} P_{\mathring{\bm \iota}_{t}|\bm h_{k}, S, S'}=\frac{s^2_{k}}{2}\bigtriangleup  P_{\mathring{\bm \iota}_{t}|\bm h_{k}, S, S'}~\text{and}~\dfrac{\partial}{\partial t}P_{\bm \iota_{t}'|\bm h'_{k}, S, S'}=-\nabla\odot(F_{J,k}P_{\bm \iota_{t}'|\bm h'_{k}, S, S'})+\frac{s^2_{k}}{2}\bigtriangleup P_{\bm \iota_{t}'|\bm h'_{k}, S, S'},
		\end{equation}
		where $\nabla=(\dfrac{\partial}{\partial w_1},\ldots, \dfrac{\partial}{\partial w_d})^{\top}$ and $F_{J,k}(w)=\mathbb E[\bm g_{S',k}(\bm \theta'_{k-1})-\bm g_{S,k}(\bm \theta'_{k-1})|\bm \iota_{t}'=w, \bm h'_{k},S, S'].$ 
		By the definition of $\bm g_{S,k}$ and $\bm g_{S',k}$, we have
		\[F_{J,k}(\bm \iota_{t}')=\dfrac{1}{b}\displaystyle \sum_{j\in J} \mathbb E\big(\bm I_{k, j}[\nabla f(\bm X_{j}', \bm \theta'_{k-1})-\mathbb E\nabla f(\bm X_{j}, \bm \theta'_{k-1})]|\bm \iota_{t}', \bm h'_{k},S, S'\big),\]
		where $\bm I_{k, j}=1$ indicates the inclusion of the $j^{th}$ sample in the $k^{th}$ batch, and $\bm I_{k, j}=0$ otherwise. Since the batch selection is independent of $ \bm \theta'_{k-1}$, $\bm I_{k, j}$ is independent of $ \bm \theta'_{k-1}$, with an expected value of $\mathbb E\bm I_{k, j}=b/n$. It follows that
		\begin{equation*}
			F_{J,k}(\bm \iota_{t}')=\dfrac{1}{n}\displaystyle \sum_{j\in J} \mathbb E\big[\big(\nabla f(\bm X_{j}', \bm \theta'_{k-1})-\nabla f(\bm X_{j}, \bm \theta'_{k-1})\big)\big|\bm \iota_{t}', \bm h'_{k}, S, S'\big].
		\end{equation*}
		Since $\bm X_{j}$ is independent with $\bm \theta'_{k-1}$ for any $j\in J$, given $\bm h'_{k}$, $S$ and $S'$, we have
		\begin{equation}\label{th1-def1}
			F_{J,k}(\bm \iota_{t}')=\dfrac{1}{n}\displaystyle \sum_{j\in J} \mathbb E\big[\big(\nabla f(\bm X_{j}', \bm \theta'_{k-1})-\mathbb E\nabla f(\bm X_{j}, \bm \theta'_{k-1})\big)\big|\bm \iota_{t}', \bm h'_{k}, S, S'\big]=H_{J,k}(\bm \iota_{t}')/n.
		\end{equation}
		and
		\begin{align*}
			&~~\dfrac{d}{dt}D_{KL}(P_{\bm \iota_{t}'|\bm h'_{k}, S, S'}\| P_{\mathring{\bm \iota}_{t}|\bm h_{k}, S, S'})\\
			&=\int \dfrac{d P_{\bm \iota_{t}'|\bm h'_{k}, S, S'}(u)}{dt}\left[\log\dfrac{P_{\bm \iota_{t}'|\bm h'_{k}, S, S'}(u)}{ P_{\mathring{\bm \iota}_{t}|\bm h_{k}, S, S'}(u)}-1\right]du-\int  \dfrac{d P_{\mathring{\bm \iota}_{t}|\bm h_{k}, S, S'}(u)}{d_{t}}\dfrac{P_{\bm \iota_{t}'|\bm h'_{k}, S, S'}(u)}{ P_{\mathring{\bm \iota}_{t}|\bm h_{k}, S, S'}(u)}du\\
			&=\int \nabla\odot\left[-F_{J,k}(\bm \iota_{t}')P_{\bm \iota_{t}'|\bm h'_{k}, S, S'}(u)+\frac{s^2_{k}}{2}\nabla P_{\bm \iota_{t}'|\bm h'_{k}, S, S'}(u)\right]\left[\log\dfrac{P_{\bm \iota_{t}'|\bm h'_{k}, S, S'}(u)}{ P_{\mathring{\bm \iota}_{t}|\bm h_{k}, S, S'}(u)}-1\right]du\\
			&~~~-\int\nabla\odot\left[\frac{s^2_{k}}{2}\nabla P_{\mathring{\bm \iota}_{t}|\bm h_{k}, S, S'}(u)\right]\dfrac{P_{\bm \iota_{t}'|\bm h'_{k}, S, S'}(u)}{ P_{\mathring{\bm \iota}_{t}|\bm h_{k}, S, S'}(u)}du\\
			&=-\int \left[-F_{J,k}(\bm \iota_{t}')P_{\bm \iota_{t}'|\bm h'_{k}, S, S'}(u)+\dfrac{s^2_{k}}{2}\nabla P_{\bm \iota_{t}'|\bm h'_{k}, S, S'}(u)\right]\odot\nabla\log\dfrac{P_{\bm \iota_{t}'|\bm h'_{k}, S, S'}(u)}{ P_{\mathring{\bm \iota}_{t}|\bm h_{k}, S, S'}(u)}du\\
			&~~~+\int\left[\frac{s^2_{k}}{2}\nabla P_{\mathring{\bm \iota}_{t}|\bm h_{k}, S, S'}(u)\right]\dfrac{P_{\bm \iota_{t}'|\bm h'_{k}, S, S'}(u)}{ P_{\mathring{\bm \iota}_{t}|\bm h_{k}, S, S'}(u)}\odot\nabla\log\dfrac{P_{\bm \iota_{t}'|\bm h'_{k}, S, S'}(u)}{ P_{\mathring{\bm \iota}_{t}|\bm h_{k}, S, S'}(u)}du\\
			&=-\int  \nabla^{\top}\log\dfrac{P_{\bm \iota_{t}'|\bm h'_{k}, S, S'}(u)}{ P_{\mathring{\bm \iota}_{t}|\bm h_{k}, S, S'}(u)}\dfrac{s^2_{k}}{2}\nabla\log\dfrac{P_{\bm \iota_{t}'|\bm h'_{k}, S, S'}(u)}{ P_{\mathring{\bm \iota}_{t}|\bm h_{k}, S, S'}(u)}P_{\bm \iota_{t}'|\bm h'_{k}, S, S'}(u)du\\
			&~~~+\int F_{J,k}(\bm \iota_{t}')\odot\left(\nabla\log\dfrac{P_{\bm \iota_{t}'|\bm h'_{k}, S, S'}(u)}{ P_{\mathring{\bm \iota}_{t}|\bm h_{k}, S, S'}(u)}\right)P_{\bm \iota_{t}'|\bm h'_{k}, S, S'}(u)du\\
			&\leq -\dfrac{s^2_{k}}{2}\int  \bigg\|\nabla\log\dfrac{P_{\bm \iota_{t}'|\bm h'_{k}, S, S'}(u)}{ P_{\mathring{\bm \iota}_{t}|\bm h_{k}, S, S'}(u)}\bigg\|^2P_{\bm \iota_{t}'|\bm h'_{k}, S, S'}(u)du\\
			&~~~+\int F_{J,k}(\bm \iota_{t}')\odot\left(\nabla\log\dfrac{P_{\bm \iota_{t}'|\bm h'_{k}, S, S'}(u)}{ P_{\mathring{\bm \iota}_{t}|\bm h_{k}, S, S'}(u)}\right)P_{\bm \iota_{t}'|\bm h'_{k}, S, S'}(u)du\\
			&\leq -\dfrac{s^2_{k}}{4}\int  \bigg\|\nabla\log\dfrac{P_{\bm \iota_{t}'|\bm h'_{k}, S, S'}(u)}{ P_{\mathring{\bm \iota}_{t}|\bm h_{k}, S, S'}(u)}\bigg\|^2P_{\bm \iota_{t}'|\bm h'_{k}, S, S'}(u)du+\dfrac{1}{s^2_{k}}\int\|F_{J,k}(u)\|^2P_{\bm \iota_{t}'|\bm h'_{k}, S, S'}(u)du.
		\end{align*}
		Since $ P_{\mathring{\bm \iota}_{t}|\bm h_{k}, S, S'}$ follows a normal distribution $N(0,s_k^2t)$, according to Theorem 1 in \citet{Markowich2000}, $ P_{\mathring{\bm \iota}_{t}|\bm h_{k}, S, S'}$ adheres to a logarithmic
		Sobolev inequality with a constant of $1/(s^2_{k}t),$ that is,
		\begin{equation*}
			\int  \bigg\|\nabla\log\dfrac{P_{\bm \iota_{t}'|\bm h'_{k}, S, S'}(u)}{ P_{\mathring{\bm \iota}_{t}|\bm h_{k}, S, S'}(u)}\bigg\|^2P_{\bm \iota_{t}'|\bm h'_{k}, S, S'}(u)du\geq \dfrac{2}{ts^2_{k}}D_{KL}(P_{\bm \iota_{t}'|\bm h'_{k}, S, S'}\| P_{\mathring{\bm \iota}_{t}|\bm h_{k}, S, S'}).
		\end{equation*}
		So, for any $t\in(0,1)$ and $\alpha_k\in(0,1)$, we have
		\begin{equation*}
			\begin{split}
				&~~\dfrac{d}{dt}D_{KL}(P_{\bm \iota_{t}'|\bm h'_{k}, S, S'}\| P_{\mathring{\bm \iota}_{t}|\bm h_{k}, S, S'})\\
				&\leq -\dfrac{1}{2t}D_{KL}(P_{\bm \iota_{t}'|\bm h'_{k}, S, S'}\| P_{\mathring{\bm \iota}_{t}|\bm h_{k}, S, S'})+\dfrac{1}{s^2_{k}}\int\|F_{J,k}(u)\|^2P_{\bm \iota_{t}'|\bm h'_{k}, S, S'}(u)du\\
				&\leq -\dfrac{1}{2t^{\alpha_k}}D_{KL}(P_{\bm \iota_{t}'|\bm h'_{k}, S, S'}\| P_{\mathring{\bm \iota}_{t}|\bm h_{k}, S, S'})+\dfrac{1}{s^2_{k}}\int\|F_{J,k}(u)\|^2P_{\bm \iota_{t}'|\bm h'_{k}, S, S'}(u)du\\
				&\leq  -\dfrac{1}{2t^{\alpha_k}}D_{KL}(P_{\bm \iota_{t}'|\bm h'_{k}, S, S'}\| P_{\mathring{\bm \iota}_{t}|\bm h_{k}, S, S'})+\dfrac{c_1^2\eta_k}{n^2\sigma^2_{k}},
			\end{split}
		\end{equation*}	
		where the last inequality comes from $\|F_{J,k}(u)\|^2= \|H_{J,k}(u)\|^2/n^2\leq c_1^2/n^2$.
		It follows that
		\begin{eqnarray*}
			D_{KL}(P_{\bm \iota_{t}'|\bm h'_{k}, S, S'}\| P_{\mathring{\bm \iota}_{t}|\bm h_{k}, S, S'})
			&\leq  \Big(e^{-\frac{1}{2(1-\alpha_k)}t^{1-\alpha_k}}\int_{0}^{t}e^{\frac{1}{2(1-\alpha_k)}u^{1-\alpha_k}}du\Big) \times \dfrac{c_1^2\eta_k}{n^2\sigma^2_{k}}.
		\end{eqnarray*} 
		Setting $t=\eta_k$, we get
		\[D_{KL}(P_{\bm \iota_{\eta_k}'|\bm h'_{k}, S, S'}\| P_{\mathring{\bm \iota}_{\eta_k}|\bm h'_{k}, S, S'})
		\leq \dfrac{\delta_kc_1^2\eta_k}{n^2\sigma^2_{k}},\]
		where $\delta_k=e^{-\frac{1}{2(1-\alpha_k)}\eta_k^{1-\alpha_k}}\int_{0}^{\eta_k }e^{\frac{1}{2(1-\alpha_k)}u^{1-\alpha_k}}du.$
		Combining this with (\ref{negho-02}), we have
		\begin{eqnarray*}
			\begin{split}
				D_{KL}(P_{\check{\bm{\theta}}'_k|S, S'}\|P_{\check{\bm{\theta}}_k|S, S'})&\leq D_{KL}(P_{\check{\bm{\theta}}'_{k-1}|S, S'}\|P_{\check{\bm{\theta}}_{k-1}|S, S'})+\dfrac{\delta_kc_1^2\eta_k}{n^2\sigma^2_{k}}\\
				&\leq D_{KL}(P_{\check{\bm{\theta}}'_{k-2}|S, S'}\|P_{\check{\bm{\theta}}_{k-2}|S, S'})+\dfrac{\delta_kc_1^2\eta_k}{n^2\sigma^2_{k}}+\dfrac{\delta_{k-1}c_1^2\eta_{k-1}}{n^2\sigma^2_{k-1}}\\
				&\leq \ldots\\
				&\leq \displaystyle \sum_{j=1}^k \dfrac{\delta_jc_1^2\eta_j}{n^2\sigma^2_{j}}.
			\end{split}
		\end{eqnarray*}	
		Combining it with the Pinsker's inequality $D_{TV}\leq \sqrt{D_{KL}/2}$, we have
		\[D_{TV}(P_{\check{\bm{\theta}}'_k|S, S'}\|P_{\check{\bm{\theta}}_k|S, S'})\leq  \sqrt{\displaystyle \sum_{j=1}^k \dfrac{\delta_jc_1^2\eta_j}{2n^2\sigma^2_{j}}}.\]
		Finally, we arrive at the following bound for the generalization error,
		\[|ge_k|\leq |\xi_{k}(\bm \theta_k,\bm \epsilon_k)|+\dfrac{c_0c_1}{n} \sqrt{\displaystyle \sum_{j=1}^k\dfrac{2\delta_j\eta_j}{\sigma^2_{j}}}.\]
		The proof is finished.	
	\end{proof}

	\subsection{Proof of Theorem \ref{th-sub-2}}
	\begin{proof}
		Define $\mathring{\bm \iota}_{t}=S_{k}W_{t}$ and
		\[\bm \iota_{t}'=t[\bm g_{S',k}(\bm \theta'_{k-1})-\bm g_{S,k}(\bm \theta'_{k-1})]+S_{k}W'_t,\] where $S_{k}=\bm \Sigma_k/\eta_k.$ Then, we have
		\begin{equation*}
			\begin{split}
				&d\mathring{\bm \iota}_{t}=S_{k}dW_t~\text{and}~
				d\bm \iota_{t}'=\bm g_{S',k}(\bm \theta'_{k-1})-\bm g_{S,k}(\bm \theta'_{k-1})+S_{k}dW'_t.
			\end{split}
		\end{equation*}
		Let $ P_{\mathring{\bm \iota}_{t}|\bm h_{k}, S, S'}$ and $P_{\bm \iota_{t}'|\bm h'_{k}, S, S'}$ represent the density functions of $\mathring{\bm \iota}_{t}$ and $\bm \iota_{t}'$ given $\bm h'_{k}$, $S$ and $S'$, respectively. According to (\ref{th1-eq1}), these two density functions adhere to the subsequent differential equation,
		\begin{equation}\label{th1-eq11}
			\begin{split}
				&\dfrac{\partial}{\partial t} P_{\mathring{\bm \iota}_{t}|\bm h_{k}, S, S'}=\frac{S^2_{k}}{2}\bigtriangleup  P_{\mathring{\bm \iota}_{t}|\bm h_{k}, S, S'}~\text{and}~\dfrac{\partial}{\partial t}P_{\bm \iota_{t}'|\bm h'_{k}, S, S'}=-\nabla\odot\big(F_{J,k}(\bm \iota_{t}')P_{\bm \iota_{t}'|\bm h'_{k}, S, S'}\big)+\frac{S^2_{k}}{2}\bigtriangleup P_{\bm \iota_{t}'|\bm h'_{k}, S, S'},
			\end{split}
		\end{equation}
		where  $F_{J,k}(w)=\dfrac{1}{n}\displaystyle \sum_{j\in J} \mathbb E\big[\big(\nabla f(\bm X_{j}', \bm \theta'_{k-1})-\mathbb E\nabla f(\bm X_{j}, \bm \theta'_{k-1})\big)\big|\bm \iota_{t}'=w,\bm h'_{k}, S, S'\big].$
		It follows that,
		\begin{equation*}
			\begin{split}
				&~~~\dfrac{d}{dt}D_{KL}(P_{\bm \iota_{t}'|\bm h'_{k}, S, S'}\| P_{\mathring{\bm \iota}_{t}|\bm h_{k}, S, S'})\\
				&=\int \dfrac{d P_{\bm \iota_{t}'|\bm h'_{k}, S, S'}(u)}{dt}\left[\log\dfrac{P_{\bm \iota_{t}'|\bm h'_{k}, S, S'}(u)}{ P_{\mathring{\bm \iota}_{t}|\bm h_{k}, S, S'}(u)}-1\right]du-\int  \dfrac{d P_{\mathring{\bm \iota}_{t}|\bm h_{k}, S, S'}(u)}{d_{t}}\dfrac{P_{\bm \iota_{t}'|\bm h'_{k}, S, S'}(u)}{ P_{\mathring{\bm \iota}_{t}|\bm h_{k}, S, S'}(u)}du\\
				&=\int \nabla\odot\left[-F_{J,k}(\bm \iota_{t}')P_{\bm \iota_{t}'|\bm h'_{k}, S, S'}(u)+\frac{S^2_{k}}{2}\nabla P_{\bm \iota_{t}'|\bm h'_{k}, S, S'}(u)\right]\left[\log\dfrac{P_{\bm \iota_{t}'|\bm h'_{k}, S, S'}(u)}{ P_{\mathring{\bm \iota}_{t}|\bm h_{k}, S, S'}(u)}-1\right]du\\
				&~~~-\int\nabla\odot\left[\frac{S^2_{k}}{2}\nabla P_{\mathring{\bm \iota}_{t}|\bm h_{k}, S, S'}(u)\right]\dfrac{P_{\bm \iota_{t}'|\bm h'_{k}, S, S'}(u)}{ P_{\mathring{\bm \iota}_{t}|\bm h_{k}, S, S'}(u)}du\\
				&=-\int \left[-F_{J,k}(\bm \iota_{t}')P_{\bm \iota_{t}'|\bm h'_{k}, S, S'}(u)+\dfrac{S^2_{k}}{2}\nabla P_{\bm \iota_{t}'|\bm h'_{k}, S, S'}(u)\right]\odot\nabla\log\dfrac{P_{\bm \iota_{t}'|\bm h'_{k}, S, S'}(u)}{ P_{\mathring{\bm \iota}_{t}|\bm h_{k}, S, S'}(u)}du\\
				&~~~+\int\left[\frac{S^2_{k}}{2}\nabla P_{\mathring{\bm \iota}_{t}|\bm h_{k}, S, S'}(u)\right]\dfrac{P_{\bm \iota_{t}'|\bm h'_{k}, S, S'}(u)}{ P_{\mathring{\bm \iota}_{t}|\bm h_{k}, S, S'}(u)}\odot\nabla\log\dfrac{P_{\bm \iota_{t}'|\bm h'_{k}, S, S'}(u)}{ P_{\mathring{\bm \iota}_{t}|\bm h_{k}, S, S'}(u)}du\\
				&=-\int  \nabla^{\top}\log\dfrac{P_{\bm \iota_{t}'|\bm h'_{k}, S, S'}(u)}{ P_{\mathring{\bm \iota}_{t}|\bm h_{k}, S, S'}(u)}\dfrac{S^2_{k}}{2}\nabla\log\dfrac{P_{\bm \iota_{t}'|\bm h'_{k}, S, S'}(u)}{ P_{\mathring{\bm \iota}_{t}|\bm h_{k}, S, S'}(u)}P_{\bm \iota_{t}'|\bm h'_{k}, S, S'}(u)du\\
				&~~~+\int F_{J,k}(\bm \iota_{t}')\odot\left(\nabla\log\dfrac{P_{\bm \iota_{t}'|\bm h'_{k}, S, S'}(u)}{ P_{\mathring{\bm \iota}_{t}|\bm h_{k}, S, S'}(u)}\right)P_{\bm \iota_{t}'|\bm h'_{k}, S, S'}(u)du\\
				&\leq -\dfrac{\lambda_{k}}{2}\int  \bigg\|\nabla\log\dfrac{P_{\bm \iota_{t}'|\bm h'_{k}, S, S'}(u)}{ P_{\mathring{\bm \iota}_{t}|\bm h_{k}, S, S'}(u)}\bigg\|^2P_{\bm \iota_{t}'|\bm h'_{k}, S, S'}(u)du+\int F_{J,k}(\bm \iota_{t}')\odot\left(\nabla\log\dfrac{P_{\bm \iota_{t}'|\bm h'_{k}, S, S'}(u)}{ P_{\mathring{\bm \iota}_{t}|\bm h_{k}, S, S'}(u)}\right)P_{\bm \iota_{t}'|\bm h'_{k}, S, S'}(u)du\\
				&\leq -\dfrac{\lambda_{k}}{4}\int  \bigg\|\nabla\log\dfrac{P_{\bm \iota_{t}'|\bm h'_{k}, S, S'}(u)}{ P_{\mathring{\bm \iota}_{t}|\bm h_{k}, S, S'}(u)}\bigg\|^2P_{\bm \iota_{t}'|\bm h'_{k}, S, S'}(u)du+\dfrac{c_1^2\eta_k}{n^2\underline{\lambda}_{k}}.
			\end{split}
		\end{equation*}
		According to Theorem 1 in \citet{Markowich2000}, 
		the density function $ P_{\mathring{\bm \iota}_{t}|\bm h_{k}, S, S'}\sim N(0,t\Sigma_k)$ satisfies a logarithmic
		Sobolev inequality with a constant of $\eta_k/(\overline{\lambda}_{k}t),$ that is
		\begin{equation*}
			\int  \bigg\|\nabla\log\dfrac{P_{\bm \iota_{t}'|\bm h'_{k}, S, S'}(u)}{ P_{\mathring{\bm \iota}_{t}|\bm h_{k}, S, S'}(u)}\bigg\|^2P_{\bm \iota_{t}'|\bm h'_{k}, S, S'}(u)du\geq \dfrac{\eta_k}{t\overline{\lambda}_{k}}D_{KL}(P_{\bm \iota_{t}'|\bm h'_{k}, S, S'}\| P_{\mathring{\bm \iota}_{t}|\bm h_{k}, S, S'}).
		\end{equation*}
		Consequently, for any $t\in(0,1)$ and $\alpha_k\in(0,1)$, we have
		\begin{equation*}
			\begin{split}
				&~~~\dfrac{d}{dt}D_{KL}(P_{\bm \iota_{t}'|\bm h'_{k}, S, S'}\| P_{\mathring{\bm \iota}_{t}|\bm h_{k}, S, S'})\\
				&\leq -\dfrac{\underline{\lambda}_{k}}{2t\overline{\lambda}_{k}}D_{KL}(P_{\bm \iota_{t}'|\bm h'_{k}, S, S'}\| P_{\mathring{\bm \iota}_{t}|\bm h_{k}, S, S'})+\dfrac{c_1^2\eta_k}{n^2\underline{\lambda}_{k}}\\
				&\leq -\dfrac{\underline{\lambda}_{k}}{2\overline{\lambda}_{k}}\dfrac{1}{t^{\alpha_k}}D_{KL}(P_{\bm \iota_{t}'|\bm h'_{k}, S, S'}\| P_{\mathring{\bm \iota}_{t}|\bm h_{k}, S, S'})+\dfrac{c_1^2\eta_k}{n^2\underline{\lambda}_{k}}.
			\end{split}
		\end{equation*}	
		It follows that
		\begin{eqnarray*}
			D_{KL}(P_{\bm \iota_{t}'|\bm h'_{k}, S, S'}\| P_{\mathring{\bm \iota}_{t}|\bm h_{k}, S, S'})
			&\leq \big(e^{-\frac{1}{2(1-\alpha_k)}t^{1-\alpha_k}}\int_{0}^{t}e^{\frac{1}{2(1-\alpha_k)}u^{1-\alpha_k}}du\big)\times \dfrac{c_1^2\eta_k}{n^2\underline{\lambda}_{k}}.
		\end{eqnarray*} 
		By setting $t=\eta_k$, 
		$$D_{KL}(P_{\bm \iota_{\eta_k}'|\bm h'_{k}, S, S'}\| P_{\mathring{\bm \iota}_{\eta_k}|\bm h'_{k}, S, S'})\leq \dfrac{\zeta_kc_1^2\eta_k}{n^2\underline{\lambda}_{k}}$$ 
		with $\zeta_k=e^{-\frac{\underline{\lambda}_{k}}{2\overline{\lambda}_{k}(1-\alpha_k)}\eta_k ^{1-\alpha_k}}\int_{0}^{\eta_k }e^{\frac{\underline{\lambda}_{k}}{2\overline{\lambda}_{k}(1-\alpha_k)}u^{1-\alpha_k}}du$. 
		Combining it with (\ref{negho-02}), we have
		\begin{equation*}
			\begin{split}
				D_{KL}(P_{\check{\bm{\theta}}'_k|S, S'}\|P_{\check{\bm{\theta}}_k|S, S'})\leq D_{KL}(P_{\check{\bm{\theta}}'_{k-1}|S, S'}\|P_{\check{\bm{\theta}}_{k-1}|S, S'})+ \dfrac{\zeta_kc_1^2\eta_k}{n^2\underline{\lambda}_{k}}.
			\end{split}
		\end{equation*}
		By the induction, we ascertain that $D_{KL}(\check{\pi}'_{k|S, S', R},\check{\pi}_{k|S, S', R}) \leq \displaystyle \sum_{s=1}^k\dfrac{\zeta_sc_1^2\eta_s}{n^2\underline{\lambda}_{s}}$. Furthermore, integrating it with Pinsker's inequality, we obtain
		\[D_{TV}(P_{\check{\bm{\theta}}'_k|S, S'}\|P_{\check{\bm{\theta}}_k|S, S'})\leq  \sqrt{\displaystyle \sum_{s=1}^k\dfrac{\zeta_sc_1^2\eta_s}{2n^2\underline{\lambda}_{s}}}.\]	
		Then we derive the following bound		
		\[|ge_k|\leq |\xi_{k}(\bm \theta_k,\bm \epsilon_k)|+ c_0 \sqrt{\displaystyle \sum_{s=1}^k\dfrac{2\zeta_sc_1^2\eta_s}{n^2\underline{\lambda}_{s}}}.\]
		Since $F_{J,k}(\bm \iota_{t}')=H_{J,k}(\bm \iota_{t}')/n,$ 
		\begin{eqnarray*}
			|ge_k|\leq |\xi_{k}(\bm \theta_k,\bm \epsilon_k)|+\dfrac{c_0c_1}{n}\sqrt{\displaystyle \sum_{s=1}^k\dfrac{2\zeta_s\eta_s}{\underline{\lambda}_{s}}},
		\end{eqnarray*}
		Then the proof is finished.	
	\end{proof} 
	
\subsection{Proof of Theorem \ref{2th-sub-1.2}}
\begin{proof}
This proof is divided into two parts. In the first part, we establish the generalized error bound for \(\epsilon_{s} \sim N(0, \sigma_s^2 \bm{I}_d)\). In the second part, we extend this result to the case where \(\bm{\epsilon}_s \sim N(0, \bm{\Sigma}_s)\).\\

\noindent\textit{Part 1.} Based on the proof of Theorem \ref{th-sub-1.2}, we obtain the following inequality
\begin{eqnarray*}
		\begin{split}
			|ge_k|&\leq
			\min_{J}\Bigg\{|\xi_{k}(\bm \theta_k,\bm \epsilon_k)|\\
			+&\mathbb E_{S_{J^c},\bm L_J}\sqrt{\dfrac{R^2d}{|J|} \displaystyle \sum_{s=1}^{k}\mathbb E_{\bm \epsilon_{s-1}|S_{J^c},\bm L_J}\log\bigg(1+\eta^2_s\dfrac{\mathrm{tr}\big[\mathbb V^{S_{J^c},\bm \epsilon_{s-1},\bm L_J}\big(\tilde{g}_{S_J,s}(\check{\bm \theta}_{s-1}-\bm \epsilon_{s-1})|\check{\bm \theta}_{s-1}\big)\big]}{d \sigma^2_s}\bigg)}\Bigg\}.
		\end{split}
	\end{eqnarray*}
If \( L_{Js} = 0 \), then \( \mathrm{tr}\left[ \mathbb{V}^{S_{J^c}, \bm \epsilon_{s-1}, \bm L_J} \left( \tilde{\bm g}_{S_J,s} \left( \check{\bm \theta}_{s-1} - \bm \epsilon_{s-1} \right) \mid \check{\bm \theta}_{s-1} \right) \right] = 0 \). In clipped SGD, we impose a constraint on the gradient $\bm \tilde{\bm g}_{S_J,s}$ by limiting its magnitude to a constant $A$. 
By applying the inequality \( \mathbb{E}(\sqrt{X}) \leq \sqrt{\mathbb{E}(X)} \) for any non-negative random variable \( X \), along with \( P(L_{Js} = 1) = \frac{b}{n} \) and \( \|\bm \tilde{\bm g}_{S_J,s}\| \leq A \), we obtain 
\[
|ge_k| \leq |\xi_k(\bm \theta_k, \bm \epsilon_k)| + \sqrt{\frac{R^2d}{n} \sum_{s=1}^{k} \log \left( 1 + \frac{A^2 \eta_s^2}{\sigma_s^2} \right)}.
\]

\noindent\textit{Part 2.}
    From the proof of Theorem \ref{th-sub-3.2}, we obtain that
	\begin{eqnarray*}
		\begin{split}
			|ge_k|&\leq
			\min_{J}\Bigg\{|\xi_{k}(\bm \theta_k,\bm \epsilon_k)|\\
			&~~~+\mathbb E_{S_{J^c},\bm L_J}\sqrt{\dfrac{R^2d}{|J|} \displaystyle \sum_{s=1}^{k}\mathbb E_{\bm \epsilon_{s-1}|S_{J^c},\bm L_J}\log\bigg(1+\eta^2_s\dfrac{\mathrm{tr}\big[\mathbb V^{S_{J^c},\bm \epsilon_{s-1},\bm L_J}\big(\tilde{\bm g}_{S_J,s}(\check{\bm \theta}_{s-1}-\bm \epsilon_{s-1})|\check{\bm \theta}_{s-1}\big)\big]}{d |\Sigma_s|^{1/d}}\bigg)}\Bigg\}.
		\end{split}
	\end{eqnarray*}
    By applying a similar approach as in Part 1, and using the fact that \( \|\bm \tilde{\bm g}_{S_J,s}\| \leq A \), we obtain the following bound,
    \begin{eqnarray*}
		\begin{split}
			|ge_k|&\leq |\xi_{k}(\bm \theta_k,\bm \epsilon_k)|+\sqrt{\dfrac{R^2d}{n} \displaystyle \sum_{s=1}^{k}\log\bigg(1+\eta^2_s\dfrac{A^2}{ |\Sigma_s|^{1/d}}\bigg)}.
		\end{split}
	\end{eqnarray*}
	Then the proof is finished.  
\end{proof}

\subsection{Proof of Theorem \ref{th-c0-bound}}
\begin{proof}
This proof is divided into two parts. In the first part, we establish the generalized error bound under \( c_0 \)-bounded loss for the scenario where \( \epsilon_s \sim N(0, \sigma_s^2 \bm{I}_d) \). In the second part, we extend this result to the case where \( \bm{\epsilon}_s \sim N(0, \bm{\Sigma}_s) \).\\

\noindent\textit{Part 1}. Using the notations from the proof of Theorem \ref{th-sub-1} and by (\ref{negho-01}), we obtain the following inequality 
    \begin{eqnarray}\label{negho-02}
			\begin{split}
				D_{KL}(P_{\check{\bm{\theta}}'_k|S, S'}\|P_{\check{\bm{\theta}}_k|S, S'})&\leq D_{KL}(P_{\check{\bm{\theta}}'_{k-1}|S, S'}\|P_{\check{\bm{\theta}}_{k-1}|S, S'})\\
				&~~~+\int D_{KL}\big( P_{\bm \iota_{\eta_k}'|\bm h_{k}=h, S, S'}\| P_{\mathring{\bm \iota}_{\eta_k}|\bm h'_{k}=h, S, S'} \big)P_{\bm h'_{k}| S, S'}(h)dh.
			\end{split}
		\end{eqnarray}	
    For any $t\leq \eta_k$, the following continuous-time Langevin equation holds,
	\[d\mathring{\bm \iota}_{t}=s_{k}dW_t~\text{and}~d\bm \iota_{t}'=[\tilde{\bm g}_{S',k}(\bm \theta'_{k-1})-\tilde{\bm g}_{S,k}(\bm \theta'_{k-1})]+s_{k}dW'_t.\] 
    According to \citet{Mou2017} and \citet{gyongy1986},
	the densities $ P_{\mathring{\bm \iota}_{t}|\bm h_{k}, S, S'}$ and $P_{\bm \iota_{t}'|\bm h'_{k}, S, S'}$ satisfy the Fokker-Planck equations,
	\begin{equation}\label{2th1-eq1}
		\dfrac{\partial}{\partial t} P_{\mathring{\bm \iota}_{t}|\bm h_{k}, S, S'}=\frac{s^2_{k}}{2}\bigtriangleup  P_{\mathring{\bm \iota}_{t}|\bm h_{k}, S, S'}~\text{and}~\dfrac{\partial}{\partial t}P_{\bm \iota_{t}'|\bm h'_{k}, S, S'}=-\nabla\odot(F_{J,k}P_{\bm \iota_{t}'|\bm h'_{k}, S, S'})+\frac{s^2_{k}}{2}\bigtriangleup P_{\bm \iota_{t}'|\bm h'_{k}, S, S'},
	\end{equation}
	where $\nabla=(\dfrac{\partial}{\partial w_1},\ldots, \dfrac{\partial}{\partial w_d})^{\top}$ and $F_{J,k}(w)=\mathbb E[\tilde{\bm g}_{S',k}(\bm \theta'_{k-1})-\tilde{\bm g}_{S,k}(\bm \theta'_{k-1})|\bm \iota_{t}'=w,\bm h'_{k}, S, S'].$ 
	Denote $\bm I_{k, j}=1$ indicates the inclusion of the $j^{th}$ sample in the $k^{th}$ batch, and $\bm I_{k, j}=0$ otherwise. Since the batch selection is independent of $ \bm \theta'_{k-1}$, $\bm I_{k, j}$ is independent of $ \bm \theta'_{k-1}$, with an expected value of $\mathbb E(\bm I_{k, j})=b/n$. By the definition of $\tilde{\bm g}_{S,k}$ and $\tilde{\bm g}_{S',k}$, if $\bm I_{k, j}=0$, then $\tilde{\bm g}_{S',k}(\bm \theta'_{k-1})-\tilde{\bm g}_{S,k}(\bm \theta'_{k-1})=0$. So, we have 
	\begin{equation*}
		F_{J,k}(\bm \iota_{t}')=\dfrac{b}{n}\mathbb E[\tilde{\bm g}_{S',k}(\bm \theta'_{k-1})-\tilde{\bm g}_{S,k}(\bm \theta'_{k-1})|\bm I_{k, j}=1, \bm \iota_{t}'=w, \bm h'_{k},S, S'].
	\end{equation*}
	By the definition of clipped SGD, we have $\|\tilde{\bm g}_{S',k}\|\leq A$ and $\|\tilde{\bm g}_{S,k}\|\leq A$, which implies that
	\begin{equation}\label{2th1-def1}
		\|F_{J,k}(\bm \iota_{t}')\|\leq \dfrac{2Ab}{n}.
	\end{equation}
	Based on the proof of Theorem \ref{th-sub-1}, we obtain the following inequality
	\begin{align*}
		&\dfrac{d}{dt}D_{KL}(P_{\bm \iota_{t}'|\bm h'_{k}, S, S'}\| P_{\mathring{\bm \iota}_{t}|\bm h_{k}, S, S'})\\
		=&\int \dfrac{d P_{\bm \iota_{t}'|\bm h'_{k}, S, S'}(u)}{dt}\left[\log\dfrac{P_{\bm \iota_{t}'|\bm h'_{k}, S, S'}(u)}{ P_{\mathring{\bm \iota}_{t}|\bm h_{k}, S, S'}(u)}-1\right]du-\int  \dfrac{d P_{\mathring{\bm \iota}_{t}|\bm h_{k}, S, S'}(u)}{d_{t}}\dfrac{P_{\bm \iota_{t}'|\bm h'_{k}, S, S'}(u)}{ P_{\mathring{\bm \iota}_{t}|\bm h_{k}, S, S'}(u)}du\\
		\leq & -\dfrac{s^2_{k}}{4}\int  \bigg\|\nabla\log\dfrac{P_{\bm \iota_{t}'|\bm h'_{k}, S, S'}(u)}{ P_{\mathring{\bm \iota}_{t}|\bm h_{k}, S, S'}(u)}\bigg\|^2P_{\bm \iota_{t}'|\bm h'_{k}, S, S'}(u)du+\dfrac{1}{s^2_{k}}\int\|F_{J,k}(u)\|^2P_{\bm \iota_{t}'|\bm h'_{k}, S, S'}(u)du\\
		\leq & -\dfrac{s^2_{k}}{4}\int  \bigg\|\nabla\log\dfrac{P_{\bm \iota_{t}'|\bm h'_{k}, S, S'}(u)}{ P_{\mathring{\bm \iota}_{t}|\bm h_{k}, S, S'}(u)}\bigg\|^2P_{\bm \iota_{t}'|\bm h'_{k}, S, S'}(u)du+\dfrac{4A^2b^2}{n^2s^2_{k}}
	\end{align*}
	Since $ P_{\mathring{\bm \iota}_{t}|\bm h_{k}, S, S'}$ follows a normal distribution $N(0,s_k^2t)$, according to Theorem 1 in \citet{Markowich2000}, $ P_{\mathring{\bm \iota}_{t}|\bm h_{k}, S, S'}$ adheres to a logarithmic
	Sobolev inequality with a constant of $1/(s^2_{k}t),$ that is,
	\begin{equation*}
		\int  \bigg\|\nabla\log\dfrac{P_{\bm \iota_{t}'|\bm h'_{k}, S, S'}(u)}{ P_{\mathring{\bm \iota}_{t}|\bm h_{k}, S, S'}(u)}\bigg\|^2P_{\bm \iota_{t}'|\bm h'_{k}, S, S'}(u)du\geq \dfrac{2}{ts^2_{k}}D_{KL}(P_{\bm \iota_{t}'|\bm h'_{k}, S, S'}\| P_{\mathring{\bm \iota}_{t}|\bm h_{k}, S, S'}).
	\end{equation*}
	Thus, for any $t\in(0,1)$ and $\alpha_k\in(0,1)$, we have
	\begin{equation*}
		\begin{split}
			\dfrac{d}{dt}D_{KL}(P_{\bm \iota_{t}'|\bm h'_{k}, S, S'}\| P_{\mathring{\bm \iota}_{t}|\bm h_{k}, S, S'})
			&\leq -\dfrac{1}{2t}D_{KL}(P_{\bm \iota_{t}'|\bm h'_{k}, S, S'}\| P_{\mathring{\bm \iota}_{t}|\bm h_{k}, S, S'})+\dfrac{4A^2b^2}{n^2s^2_{k}}\\
			&\leq -\dfrac{1}{2t^{\alpha_k}}D_{KL}(P_{\bm \iota_{t}'|\bm h'_{k}, S, S'}\| P_{\mathring{\bm \iota}_{t}|\bm h_{k}, S, S'})+\dfrac{4A^2b^2}{n^2s^2_{k}}.
		\end{split}
	\end{equation*}	
	It follows that
	\begin{eqnarray*}
		D_{KL}(P_{\bm \iota_{t}'|\bm h'_{k}, S, S'}\| P_{\mathring{\bm \iota}_{t}|\bm h_{k}, S, S'})
		&\leq  \Big(e^{-\frac{1}{2(1-\alpha_k)}t^{1-\alpha_k}}\int_{0}^{t}e^{\frac{1}{2(1-\alpha_k)}u^{1-\alpha_k}}du\Big) \times \dfrac{4A^2b^2\eta_k}{n^2\sigma^2_{k}}.
	\end{eqnarray*} 
	Setting $t=\eta_k$, we get
	\[D_{KL}(P_{\bm \iota_{\eta_k}'|\bm h'_{k}, S, S'}\| P_{\mathring{\bm \iota}_{\eta_k}|\bm h'_{k}, S, S'})
	\leq \dfrac{ 4\delta_kA^2b^2\eta_k}{n^2\sigma^2_{k}},\]
	where $\delta_k=e^{-\frac{1}{2(1-\alpha_k)}\eta_k^{1-\alpha_k}}\int_{0}^{\eta_k }e^{\frac{1}{2(1-\alpha_k)}u^{1-\alpha_k}}du.$
	Combining this with (\ref{negho-02}), we have
	\begin{eqnarray*}
		\begin{split}
			D_{KL}(P_{\check{\bm{\theta}}'_k|S, S'}\|P_{\check{\bm{\theta}}_k|S, S'})&\leq D_{KL}(P_{\check{\bm{\theta}}'_{k-1}|S, S'}\|P_{\check{\bm{\theta}}_{k-1}|S, S'})+\dfrac{ 4\delta_kA^2b^2\eta_k}{n^2\sigma^2_{k}}\\
			&\leq \ldots\\
			&\leq \displaystyle \sum_{j=1}^k \dfrac{4\delta_j A^2b^2\eta_j}{n^2\sigma^2_{j}}.
		\end{split}
	\end{eqnarray*}	
	Combining it with the Pinsker's inequality $D_{TV}\leq \sqrt{D_{KL}/2}$, we have
	\[D_{TV}(P_{\check{\bm{\theta}}'_k|S, S'}\|P_{\check{\bm{\theta}}_k|S, S'})\leq  \sqrt{\displaystyle  \sum_{j=1}^k \dfrac{2\delta_j A^2b^2\eta_j}{n^2\sigma^2_{j}}}.\]
	Finally, we arrive at the following bound for the generalization error,
	\[|ge_k|\leq |\xi_{k}(\bm \theta_k,\bm \epsilon_k)|+2c_0\sqrt{\displaystyle \sum_{j=1}^k \dfrac{2\delta_j A^2b^2\eta_j}{n^2\sigma^2_{j}}}.\]

	\noindent\textit{Part 2.}
	Define $\mathring{\bm \iota}_{t}=S_{k}W_{t}$ and
	\[\bm \iota_{t}'=t[\tilde{\bm g}_{S',k}(\bm \theta'_{k-1})-\tilde{\bm g}_{S,k}(\bm \theta'_{k-1})]+S_{k}W'_t,\] where $S_{k}=\bm \Sigma_k/\eta_k.$ Then, we have
	\begin{equation*}
		\begin{split}
			&d\mathring{\bm \iota}_{t}=S_{k}dW_t~\text{and}~
			d\bm \iota_{t}'=\tilde{\bm g}_{S',k}(\bm \theta'_{k-1})-\tilde{\bm g}_{S,k}(\bm \theta'_{k-1})+S_{k}dW'_t.
		\end{split}
	\end{equation*}
	Let $ P_{\mathring{\bm \iota}_{t}|\bm h_{k}, S, S'}$ and $P_{\bm \iota_{t}'|\bm h'_{k}, S, S'}$ represent the density functions of $\mathring{\bm \iota}_{t}$ and $\bm \iota_{t}'$ given $\bm h'_{k}$, $S$ and $S'$, respectively. According to (\ref{2th1-eq1}), these two density functions adhere to the subsequent differential equation,
	\begin{equation}\label{2th1-eq11}
		\begin{split}
			&\dfrac{\partial}{\partial t} P_{\mathring{\bm \iota}_{t}|\bm h_{k}, S, S'}=\frac{S^2_{k}}{2}\bigtriangleup  P_{\mathring{\bm \iota}_{t}|\bm h_{k}, S, S'}\\
            \text{and}~~&\dfrac{\partial}{\partial t}P_{\bm \iota_{t}'|\bm h'_{k}, S, S'}=-\nabla\odot\big(F_{J,k}(\bm \iota_{t}')P_{\bm \iota_{t}'|\bm h'_{k}, S, S'}\big)+\frac{S^2_{k}}{2}\bigtriangleup P_{\bm \iota_{t}'|\bm h'_{k}, S, S'},
		\end{split}
	\end{equation}
	where  $F_{J,k}(\bm \iota_{t}')=\dfrac{b}{n}\mathbb E[\tilde{\bm g}_{S',k}(\bm \theta'_{k-1})-\tilde{\bm g}_{S,k}(\bm \theta'_{k-1})|\bm I_{k, j}=1, \bm \iota_{t}'=w,\bm h'_{k}, S, S'].$ By (\ref{2th1-def1}), $\|F_{J,k}(\bm \iota_{t}')\|\leq \dfrac{2Ab}{n}.$
	Following the proof of Theorem \ref{th-sub-2}, we obtain that 
	\begin{equation*}
		\begin{split}
			&~~~\dfrac{d}{dt}D_{KL}(P_{\bm \iota_{t}'|\bm h'_{k}, S, S'}\| P_{\mathring{\bm \iota}_{t}|\bm h_{k}, S, S'})\\
			&=\int \dfrac{d P_{\bm \iota_{t}'|\bm h'_{k}, S, S'}(u)}{dt}\left[\log\dfrac{P_{\bm \iota_{t}'|\bm h'_{k}, S, S'}(u)}{ P_{\mathring{\bm \iota}_{t}|\bm h_{k}, S, S'}(u)}-1\right]du-\int  \dfrac{d P_{\mathring{\bm \iota}_{t}|\bm h_{k}, S, S'}(u)}{d_{t}}\dfrac{P_{\bm \iota_{t}'|\bm h'_{k}, S, S'}(u)}{ P_{\mathring{\bm \iota}_{t}|\bm h_{k}, S, S'}(u)}du\\
			&\leq -\dfrac{\lambda_{k}}{2}\int  \bigg\|\nabla\log\dfrac{P_{\bm \iota_{t}'|\bm h'_{k}, S, S'}(u)}{ P_{\mathring{\bm \iota}_{t}|\bm h_{k}, S, S'}(u)}\bigg\|^2P_{\bm \iota_{t}'|\bm h'_{k}, S, S'}(u)du+\int F_{J,k}(\bm \iota_{t}')\odot\left(\nabla\log\dfrac{P_{\bm \iota_{t}'|\bm h'_{k}, S, S'}(u)}{ P_{\mathring{\bm \iota}_{t}|\bm h_{k}, S, S'}(u)}\right)P_{\bm \iota_{t}'|\bm h'_{k}, S, S'}(u)du\\
			&\leq -\dfrac{\lambda_{k}}{4}\int  \bigg\|\nabla\log\dfrac{P_{\bm \iota_{t}'|\bm h'_{k}, S, S'}(u)}{ P_{\mathring{\bm \iota}_{t}|\bm h_{k}, S, S'}(u)}\bigg\|^2P_{\bm \iota_{t}'|\bm h'_{k}, S, S'}(u)du+\dfrac{\eta_k}{\underline{\lambda}_{k}}\int\|F_{J,k}(u)\|^2P_{\bm \iota_{t}'|\bm h'_{k}, S, S'}(u)du\\
			&\leq -\dfrac{\lambda_{k}}{4}\int  \bigg\|\nabla\log\dfrac{P_{\bm \iota_{t}'|\bm h'_{k}, S, S'}(u)}{ P_{\mathring{\bm \iota}_{t}|\bm h_{k}, S, S'}(u)}\bigg\|^2P_{\bm \iota_{t}'|\bm h'_{k}, S, S'}(u)du+\dfrac{4A^2b^2\eta_k}{n^2\underline{\lambda}_{k}}.
		\end{split}
	\end{equation*}
	According to Theorem 1 in \citet{Markowich2000}, 
	the density function $ P_{\mathring{\bm \iota}_{t}|\bm h_{k}, S, S'}\sim N(0,t\Sigma_k)$ satisfies a logarithmic
	Sobolev inequality with a constant of $\eta_k/(\overline{\lambda}_{k}t),$ that is
	\begin{equation*}
		\int  \bigg\|\nabla\log\dfrac{P_{\bm \iota_{t}'|\bm h'_{k}, S, S'}(u)}{ P_{\mathring{\bm \iota}_{t}|\bm h_{k}, S, S'}(u)}\bigg\|^2P_{\bm \iota_{t}'|\bm h'_{k}, S, S'}(u)du\geq \dfrac{\eta_k}{t\overline{\lambda}_{k}}D_{KL}(P_{\bm \iota_{t}'|\bm h'_{k}, S, S'}\| P_{\mathring{\bm \iota}_{t}|\bm h_{k}, S, S'}).
	\end{equation*}
	Consequently, for any $t\in(0,1)$ and $\alpha_k\in(0,1)$, we have
	\begin{equation*}
		\begin{split}
			\dfrac{d}{dt}D_{KL}(P_{\bm \iota_{t}'|\bm h'_{k}, S, S'}\| P_{\mathring{\bm \iota}_{t}|\bm h_{k}, S, S'})
			&\leq -\dfrac{\underline{\lambda}_{k}}{2t\overline{\lambda}_{k}}D_{KL}(P_{\bm \iota_{t}'|\bm h'_{k}, S, S'}\| P_{\mathring{\bm \iota}_{t}|\bm h_{k}, S, S'})+\dfrac{4A^2b^2\eta_k}{n^2\underline{\lambda}_{k}}.\\
			&\leq -\dfrac{\underline{\lambda}_{k}}{2\overline{\lambda}_{k}}\dfrac{1}{t^{\alpha_k}}D_{KL}(P_{\bm \iota_{t}'|\bm h'_{k}, S, S'}\| P_{\mathring{\bm \iota}_{t}|\bm h_{k}, S, S'})+\dfrac{4A^2b^2\eta_k}{n^2\underline{\lambda}_{k}}.
		\end{split}
	\end{equation*}	
	It follows that
	\begin{eqnarray*}
		D_{KL}(P_{\bm \iota_{t}'|\bm h'_{k}, S, S'}\| P_{\mathring{\bm \iota}_{t}|\bm h_{k}, S, S'})
		&\leq \big(e^{-\frac{1}{2(1-\alpha_k)}t^{1-\alpha_k}}\int_{0}^{t}e^{\frac{1}{2(1-\alpha_k)}u^{1-\alpha_k}}du\big)\times \dfrac{4A^2b^2\eta_k}{n^2\underline{\lambda}_{k}}.
	\end{eqnarray*} 
	By setting $t=\eta_k$, 
	$$D_{KL}(P_{\bm \iota_{t}'|\bm h'_{k}, S, S'}\| P_{\mathring{\bm \iota}_{\eta_k}|\bm h'_{k}, S, S'})\leq \dfrac{4\zeta_kA^2b^2\eta_k}{n^2\underline{\lambda}_{k}}$$ 
	with $\zeta_k=e^{-\frac{\underline{\lambda}_{k}}{2\overline{\lambda}_{k}(1-\alpha_k)}\eta_k ^{1-\alpha_k}}\int_{0}^{\eta_k }e^{\frac{\underline{\lambda}_{k}}{2\overline{\lambda}_{k}(1-\alpha_k)}u^{1-\alpha_k}}du$. 
	Combining it with (\ref{negho-02}), we have
	\begin{equation*}
		\begin{split}
			D_{KL}(P_{\check{\bm{\theta}}'_k|S, S'}\|P_{\check{\bm{\theta}}_k|S, S'})\leq D_{KL}(P_{\check{\bm{\theta}}'_{k-1}|S, S'}\|P_{\check{\bm{\theta}}_{k-1}|S, S'})+ \dfrac{4\zeta_kA^2b^2\eta_k}{n^2\underline{\lambda}_{k}}.
		\end{split}
	\end{equation*}
	By the induction, we ascertain that 
	$$D_{KL}(\check{\pi}'_{k|S, S', R},\check{\pi}_{k|S, S', R}) \leq \displaystyle \sum_{s=1}^k\dfrac{4\zeta_sA^2b^2\eta_s}{n^2\underline{\lambda}_{s}}.$$ Furthermore, integrating it with Pinsker's inequality, we obtain
	\[D_{TV}(P_{\check{\bm{\theta}}'_k|S, S'}\|P_{\check{\bm{\theta}}_k|S, S'})\leq  \sqrt{\displaystyle \sum_{s=1}^k\dfrac{2\zeta_sA^2b^2\eta_s}{n^2\underline{\lambda}_{s}}}.\]	
	Then we derive the following bound		
	\[|ge_k|\leq |\xi_{k}(\bm \theta_k,\bm \epsilon_k)|+ 2c_0 \sqrt{\displaystyle \sum_{s=1}^k\dfrac{2\zeta_sA^2b^2\eta_s}{n^2\underline{\lambda}_{s}}}.\]
	Then the proof is finished.	
\end{proof}

\section*{Acknowledgments}
Li's research was supported by National Natural Science Foundation of China (NSFC) (Grant No. 12325110, 12288201) and CAS Project for Young Scientists in Basic Research (Grant No. YSBR-034).
Xiong's research was supported by NSFC (Grant No. 11801102).

\section*{Appendix}
The detailed parameters for MLP and AlexNet used in Section 4 are as follows.
The MLP network for MNIST consists of two fully connected layers. The first layer has 512 neurons (default width), taking a flattened input of size \(28 \times 28\) and applying a ReLU activation. 
The network used for CIFAR-10 is an optimized version of AlexNet tailored for CIFAR-10, using smaller convolutional kernels (\(3\times3\)) and increased filter sizes. It consists of five convolutional layers with filter sizes of 64, 128, 256, 512, and 256, respectively, followed by max-pooling, batch normalization, and ReLU activations. The fully connected layers include 1024 neurons each, with dropout regularization applied at a rate of 0.5. This architecture, designed to handle \(32\times32\times3\) CIFAR-10 images, improves feature extraction and training stability.

    During the simulations, we apply an early stopping strategy: if the test loss does not improve by more than \(10^{-4}\) compared to the previous best value for \(P\) consecutive epochs, training is halted. This helps prevent overfitting and ensures efficient convergence. Figure S1 shows the training and test accuracy, along with the loss, achieved with a width of 512 and \( P=3 \) on the MNIST dataset. The training process stabilizes after 30 epochs and concludes at 143 epochs, attaining a test accuracy of 98.03\%.     	
\begin{figure}[htbp]
	\centering
	\begin{subfigure}[b]{0.48\textwidth} 
		\centering
		\includegraphics[width=\textwidth]{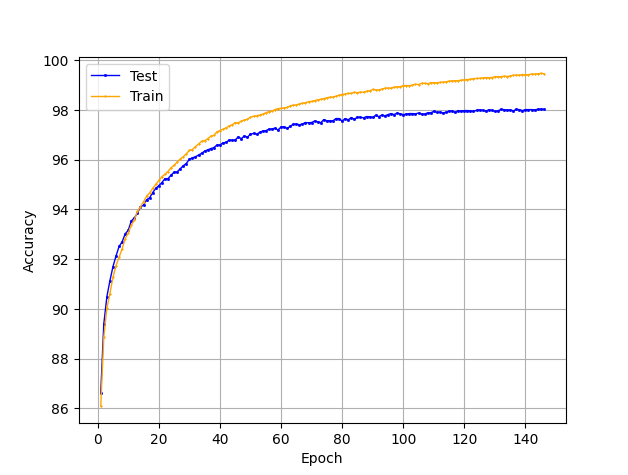}
		\caption{Accuracy}
		\label{fig:sub1}
	\end{subfigure}
	\hfill
	\begin{subfigure}[b]{0.48\textwidth} 
		\centering
		\includegraphics[width=\textwidth]{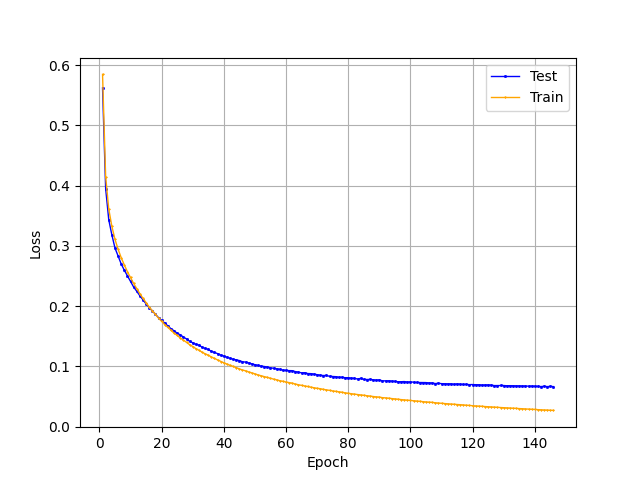}
		\caption{Loss}
		\label{fig:sub2}
	\end{subfigure}
	\caption*{Figure S1: Training and Test Accuracy and Loss for MLP on MNIST.}
	\label{fig5}
\end{figure}

Furthermore, we examine the impact of widths in MLP on the performance of our proposed method, {\it T2pm-SGD}, and illustrate the variation of the trajectory and flatness terms with width in Figure S2. The trajectory term, which captures the overall progress of the training, and the flatness term, which reflects the optimization landscape, exhibit contrasting behaviors as the width of the network increases. Specifically, the trajectory term decreases steadily with increasing width, indicating a smoother optimization path, while the flatness term shows a reduction in magnitude, suggesting that a wider network leads to a more stable and less sensitive loss landscape. These trends help us understand the relationship between network capacity and the stability of the optimization process.

\begin{figure}[htbp]
	\centering
	\begin{subfigure}[b]{0.48\textwidth} 
		\centering
		\includegraphics[width=\textwidth]{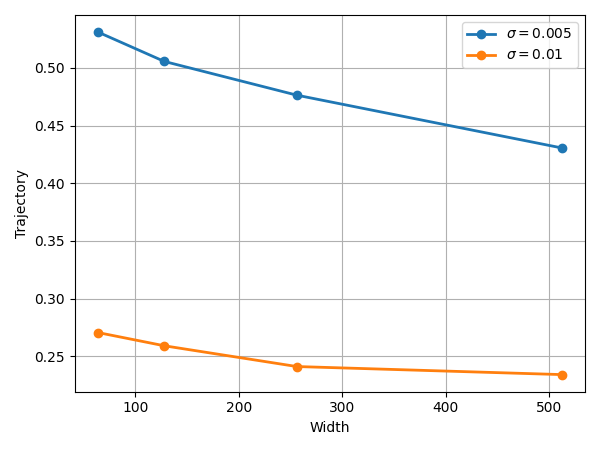}
		\caption{Trajectory}
		\label{fig:sub1}
	\end{subfigure}
	\hfill
	\begin{subfigure}[b]{0.48\textwidth} 
		\centering
		\includegraphics[width=\textwidth]{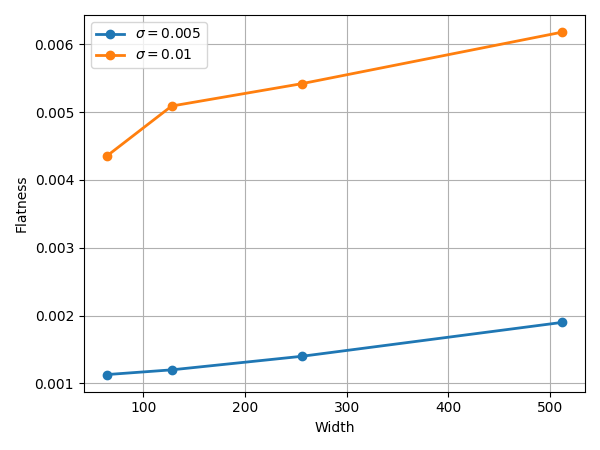}
		\caption{Flatness}
		\label{fig:sub2}
	\end{subfigure}
	\caption*{Figure S2: Variation of trajectory and flatness terms in the bound with width in SGD Training of MLP on MNIST.}
	\label{figS2}
\end{figure}

\vskip 0.2in

\bibliographystyle{model5-names}
\bibliography{SGDreferences}	
	
\end{document}